\def\eqref#1{equation~\ref{#1}}
\def\1{\bm{1}}
\DeclareMathAlphabet{\mathsfit}{\encodingdefault}{\sfdefault}{m}{sl}
\SetMathAlphabet{\mathsfit}{bold}{\encodingdefault}{\sfdefault}{bx}{n}
\DeclareMathOperator*{\argmin}{arg\,min}
\newcommand{\pinv}{\dagger}
\setlist[enumerate]{label=(\alph*)}
\definecolor{bg}{RGB}{255,249,227}
\crefname{lemma}{lemma}{lemmas}
\Crefname{lemma}{Lemma}{Lemmas}
\crefname{example}{example}{examples}
\Crefname{example}{Example}{Examples}
\crefname{fact}{fact}{facts}
\Crefname{fact}{Fact}{Facts}
\crefname{theorem}{theorem}{theorems}
\Crefname{theorem}{Theorem}{Theorems}
\crefname{assumption}{assumption}{assumptions}
\Crefname{assumption}{Assumption}{Assumptions}
\crefname{proposition}{proposition}{propositions}
\Crefname{proposition}{Proposition}{Propositions}
\crefname{algorithm}{algorithm}{algorithms}
\Crefname{algorithm}{Algorithm}{Algorithms}
\renewcommand{\|}{\parallel}
\newcommand\ec[2][]{\ensuremath{\mathbb{E}_{#1} \left[#2\right]}}
\newcommand\ecn[2][]{\ec[#1]{\norm{#2}^2}}
\newcommand\ecdu[2][]{\ec[#1]{\norm{#2}_{\ast}}}
\newcommand{\pr}[1][]{
  \ifthenelse { \equal{#1}{} }
  { \ensuremath{\mathbb{P}} }
  { \ensuremath{\mathbb{P}\left(#1\right)} }
}
\newcommand{\sqn}[1]{{\left\lVert#1\right\rVert}^2}
\newcommand\m[1]{\begin{bmatrix}#1\end{bmatrix}}
\newsavebox\myboxA
\newsavebox\myboxB
\newlength\mylenA
\definecolor{mydarkgreen}{RGB}{39,130,67}
\definecolor{mydarkred}{RGB}{192,47,25}
\newcommand*\overbar[2][0.75]{%
    \sbox{\myboxA}{$\m@th#2$}%
    \setbox\myboxB\null%
    \ht\myboxB=\ht\myboxA%
    \dp\myboxBtdp\myboxA%
    \wd\myboxB=#1\wd\myboxA%
    \sbox\myboxB{$\m@th\overline{\copy\myboxB}$}%
    \setlength\mylenA{\the\wd\myboxA}%
    \addtolength\mylenA{-\the\wd\myboxB}%
    \ifdim\wd\myboxB<\wd\myboxA%
       \rlap{\hskip 0.5\mylenA\usebox\myboxB}{\usebox\myboxA}%
    \else
        \hskip -0.5\mylenA\rlap{\usebox\myboxA}{\hskip 0.5\mylenA\usebox\myboxB}%
    \fi}
\newtheorem{theorem}{Theorem}
\newtheorem{lemma}{Lemma}
\newtheorem{assumption}{Assumption}
\newcommand{\opnorm}[1]{\lVert #1 \rVert_{\mathrm{op}}}
\newcommand{\dopnorm}[1]{\lVert #1 \rVert_{\mathrm{op}, \ast}}
\newcommand{\blocknorm}[1]{B(#1)}
\newcommand{\dblocknorm}[1]{B^{\ast} (#1)}
\newcommand{\lop}{L_{\mathrm{op}}}
\newcommand{\lblock}{L_{\mathrm{B}}}
\newcommand{\lbp}{\bar{L}_{\mathrm{BP}}}
\newcommand{\lbpt}{\bar{L}_{\mathrm{BP2}}}
\newcommand{\rhobp}{\rho_{\mathrm{BP}}}
\newcommand{\oprho}{\rho_{\mathrm{op}}}
\newcommand{\blockrho}{\rho_{\mathrm{block}}}
\newcommand{\etabs}{\eta_{\mathrm{block}, \ast}}
\newcommand{\etaops}{\eta_{\mathrm{op}, \ast}}
\newcommand{\etafull}{\eta_{\mathrm{full}}}
\newcommand{\etablock}{\eta_{\mathrm{block}}}
\newcommand{\etamax}{\eta_{\mathrm{max}}}
\newcommand{\etabar}{\bar{\eta}}
\title{\bf MuonBP: Faster Muon via Block-Periodic Orthogonalization}
\author{
Ahmed Khaled$^{1}$\textsuperscript{\dag}\thanks{Work done during internship at AWS.} , Kaan Ozkara$^{2}$, Tao Yu$^{2}$, Mingyi Hong$^{2,3}$, and Youngsuk Park$^{2}$\thanks{Correspondence to: Ahmed Khaled \texttt{<ahmed.khaled@princeton.edu>} and Youngsuk Park \texttt{<pyoungsu@amazon.com>}}
}
\affil{$^1$Princeton University, \quad $^2$Amazon Web Services, \quad $^3$University of Minnesota, Twin Cities
}
\date{}
\begin{document}

\maketitle

\begin{abstract}
Gradient orthogonalization is a simple strategy that shows great utility in speeding up gradient descent. The Muon optimizer~\citep{jordan2024muon} combines gradient orthogonalization with first-order momentum and achieves significant improvement in data efficiency over Adam/AdamW \citep{loshchilov17_decoup_weigh_decay_regul} for language model training. However, when using model parallelism, gradient orthogonalization introduces additional overhead compared to coordinate-wise optimizers (such as AdamW) due to additional gather and scatter operations on gradient matrix shards from different devices. This additional communication can amount to a throughput hit of 5\%-10\% compared to Adam/AdamW. To remedy this, we propose Muon with Block-Periodic Orthogonalization (MuonBP), which applies orthogonalization independently to matrix shards on each device and periodically performs full orthogonalization to maintain training stability at scale. We show how to adjust the learning rate from the baseline to MuonBP and give convergence guarantees for this algorithm. Crucially, our theory dictates that we use two stepsizes: one for the blockwise orthogonalization steps, and one for the full orthogonalization steps. Our method is simple, requires minimal hyperparameter adjustments, and achieves competitive iteration complexity compared with baseline Muon while providing per-iteration throughput comparable to coordinate-wise methods such as AdamW. When training an 8B model with eight-way tensor parallelism and ZeRO optimizer state sharding, MuonBP achieves 8\% throughput increase compared to Muon with no degradation in performance.
\end{abstract}

\section{Introduction}

First order optimization methods have been the staple in the success of deep learning in the last decade. In particular, Adam \citep{kingma14_adam, loshchilov17_decoup_weigh_decay_regul} has become the \textit{de facto} standard across both industry and academia. Despite numerous attempts to improve upon Adam's performance, it has remained unchallenged as the optimizer of choice for training large-scale neural networks. But this wall might be starting to crack. A recent newcomer, Muon~\citep{jordan2024muon}, consistently outperforms Adam on various LLM training tasks ranging from small scale benchmarks to larger LLM training setting with up to 1T model parameters~\cite{kimik2}. Muon is more data efficient than Adam, requiring fewer tokens to reach the same validation loss~\citep{liu25_muon_is_scalab_llm_train}. It also enjoys a higher critical batch size, which allows for further use of parallelism~\citep{shah2025practical} to accelerate training. Both of these aspects are critical in large-scale LLM pretraining, where even marginal efficiency gains can translate into substantial computational and financial savings.

Muon orthogonalizes the update matrix for each layer before using it in a descent step, and it can be seen as a form of steepest descent~\citep{bernstein2025deriving} or as a Non-Euclidean Trust Region method~\citep{kovalev25_under_gradien_orthog_deep_learn}. A key disadvantage of Muon, compared to Adam, is that orthogonalization is not a coordinate-wise operation. Rather, it requires gathering the gradient matrix from different devices whenever model parallelism is used. This introduces additional throughput overhead compared to Adam~\citep{essentialai2025layersharding}. Although Muon is more \emph{token efficient}, it is strictly slower than Adam on a per-iteration basis under model parallelism.

The goal of this work is to bridge this throughput gap while preserving the data efficiency of Muon. To this end, we propose Muon with Block-Periodic orthogonalization (MuonBP, Algorithm~\ref{alg:muonbp}). MuonBP block-orthogonalizes the matrix shards on each device independently and periodically gathers the shards for a full orthogonalization. In the off-period iterations, MuonBP does not require any additional communication, recovering the communication efficiency of Adam. However, orthogonalizing shards only is not enough for a competitive performance. We observe this block-only  variant (BlockMuon, \citep{boreiko2025towards}) suffers from a potentially worse convergence guarantee and fails as the models scale up. Hence, we introduce periodic global orthogonalization steps. Combined, MuonBP recovers the performance of Muon with a drastic reduction in communication overhead. Our main contributions are as follows.

\setlist[itemize]{itemsep=0pt, parsep=0pt, topsep=0pt, partopsep=0pt}

\begin{itemize}[leftmargin=*]
    \item We propose MuonBP, a variant of Muon with local orthogonalization interleaved with periodic full orthogonalization. In the off-period iterations, MuonBP treats each tensor parallel shard independently and orthogonalizes it separately. In the on-period iterations we gather the tensors and do a full orthogonalization. Our experiments with a period of 5 indicate that we recover the performance of Muon with $5\times$ reduction in the optimizer step communication volume.
    \item We provide a theoretical analysis of the algorithm (\Cref{thm:muonbp-convergence}) that shows (a) the blocking period $P$ smoothly interpolates between the convergence rate of Muon and BlockMuon, that (b) we should use \emph{two different learning rates} in the blocking vs full iterations, and finally (c) gives us guidance on how to scale the learning rate when using block orthogonalization.
    \item Empirically, we show that MuonBP converges faster than the baseline (non-blocking) Muon algorithm~\citep{jordan2024muon}, Dion~\citep{ahn25_dion}, AdamW~\citep{kingma14_adam,loshchilov2018decoupled}, and BlockMuon~\citep{boreiko2025towards} in practical pretraining tasks in terms of the wall-clock time. We observe that our method recovers the original Muon's performance with a up to $8\%$ increase in throughput under layerwise sharding and tensor parallelism.
\end{itemize}

We briefly outline the rest of this paper. In \Cref{sec:background}, we provide necessary background for a steepest descent view of Muon, which will be useful for other sections. We discuss related work and compare our work to few others who examined orthogonalized updates in large scale distributed settings. In \Cref{sec:algorithms}, we discuss our algorithm with convergence analysis, our goal is to analyze the effect of periodicity in the behaviour of our algorithm. Finally, in \Cref{sec:experiments}, we examine our algorithm in billion-scale training settings and compare to other baselines in terms of accuracy and throughput.

\section{Background and Related Work} \label{sec:background}

Training modern large-scale machine learning models has historically relied on Adam/AdamW~\citep{kingma14_adam,loshchilov17_decoup_weigh_decay_regul}, but this might be starting to change. Recent progress on the AlgoPerf benchmark~\citep{kasimbeg25_accel_neural_networ_train} and Modded-NanoGPT speedrunning~\citep{modded_nanogpt_2024} show that alternative second-order-inspired optimizers Shampoo~\citep{gupta18_shamp} and Muon~\citep{jordan2024muon} can be competitive or better at scale. In this section we first give a review of the algorithmic framework we use to understand it. Then, we consider the relative cost of Muon compared to Adam from a systems perspective. As we are particularly interested in communication efficiency, we also review various tools from the literature on communication-efficient optimization.

\subsection{Algorithmic frameworks for optimizer analysis}

The framework of steepest descent under non-Euclidean norms allows us to study different optimizers in a unified and princpled manner~\citep{bernstein24_old_optim_new_norm}. This framework is very useful in analyzing Muon as it (a) clarifies what Muon is optimizing \emph{for}, and (b) gives a common template to compare Muon and its variants to coordinate-wise methods like Adam. Steepest descent posits that at each step of optimization, if $x$ is the current model, we choose the next model as $x + \delta x$ where $\delta x$ minimizes $f(x) + \ev{ \nabla f(x) , x + \Delta x } + \frac{\lambda}{2} \norm{\Delta x}^2$,
where $f$ is the loss function and $\nabla f(x)$ is its gradient. The choice of norm $\norm{\cdot}$ yields different optimizers. Choosing the Euclidean norm gives us gradient descent, and choosing the $\ell_{\infty}$ norm gives us scaled sign descent
\begin{align}\label{eq:sign-descent-steepest}
\argmin_{\Delta x \in \mathbb{R}^d} \left( f(x) + \ev{ \nabla f(x) , x + \Delta x } + \frac{\lambda}{2} \sqn{\Delta x}_{\infty} \right) &= - \frac{\norm{\nabla f(x)}_1}{\lambda} \mathrm{sign}(\nabla f(x)).
\end{align}
When exponentially moving averaging is turned off in Adam, it reduces to \emph{unscaled} sign descent~\citep{bernstein24_old_optim_new_norm} and there is some evidence that Adam's superior performance is explained by this connection~\citep{kunstner23_noise}.The steepest descent view results in the additional scaling by $\norm{\nabla f(x)}_{1}$ in the numerator, which means we have different parameter update norm every iteration ($\propto \norm{\nabla f(x)}_1$) and don't have the same connection to Adam.

We can instead explicitly control the parameter update norm by using the \textbf{Non-Euclidean Trust Region} (NTR) formulation. This is the formulation used by \citet{kovalev25_under_gradien_orthog_deep_learn}: at iterate $x$, NTR minimizes the first-order model of $f$ over a norm ball $\{\Delta: \|\Delta\|\leq 1/\lambda\}$, which yields the steepest-descent direction in that norm. Concretely,
\begin{align*}
\Delta x &= \argmin_{\Delta \mid \norm{\Delta} \leq \frac{1}{\lambda}} \left( f(x) + \ev{ \nabla f(x) , x + \Delta x } \right).
\end{align*}
For $\norm{\cdot}_{\infty}$ this recovers (unscaled) sign descent. The NTR formulation also allows for elegant theoretical analysis, including incorporating algorithmic techniques such as momentum~\citep{kovalev25_under_gradien_orthog_deep_learn}. For these reasons, we will adopt the NTR framework as our algorithmic template in \Cref{sec:algorithms}.

\textbf{Muon.} Changing the norm used in either steepest descent or NTR from $\norm{\cdot}_\infty$ to any other norm opens up a large design space of optimization algorithms. For example, we may use different norms for different parameters in a neural network depending on whether they are vectors or matrices. For matrix parameters $X \in \mathbb{R}^{m \times n}$, using the operator norm $\norm{X}_{\mathrm{op}} = \sup_{z \in \mathbb{R}^n} \frac{\norm{Xz}}{\norm{z}}$ gives
\begin{align}
\label{eq:1}
\argmin_{\Delta X \mid \norm{\Delta X}_{\mathrm{op}} \leq \frac{1}{\lambda}} \left( f(X) + \ev{ \nabla f(X) , X + \Delta X } \right) &= - \frac{1}{\lambda} \mathrm{Orth}(\nabla f(X)),
\end{align}
where $\mathrm{Orth}(U) = (U U^{\top})^{-\frac{\pinv}{2}} U$ and $\pinv$ denotes the Moore-Penrose pseudoinverse. If we use Newton-Schulz iterations (\Cref{alg:newton-schulz}) to approximately compute the orthogonalization and apply the maximization to a running momentum buffer instead of the gradient directly, we obtain Muon~\citep{jordan2024muon}. \citet{bernstein24_modul_dualit_deep_learn} argue for using layer-dependent norms depending on the expected norm for the inputs and outputs of each layer. In practice, the choice of norm is also motivated by empirical performance~\citep{jordan2024muon}. If we instead use the $\ell_1 \to \ell_2$-induced norm, we obtain column normalization. That is, given a gradient matrix $G = \m{G_{:, 1} & G_{:, 2} & \cdots & G_{:, n}}$, we set
$\Delta X = - \frac{1}{\lambda} \m{ \frac{G_{:, 1}}{\norm{G_{:, 1}}} & \cdots & \frac{G_{:, n}}{\norm{G_{:, n}}} }$. This was used for the first layer in Scion~\citep{pethick25_train_deep_learn_model_with} and for every layer save the last in SCALE~\citep{glentis25_minim_optim_desig_llm_pretr}. \citet{glentis25_minim_optim_desig_llm_pretr} show that this using column normalization with momentum on the last layer allows for training transformers competitive with Adam and Muon for up to 1B parameters scale.

\subsection{A systems perspective on optimizer costs}

The choice of norm dictates the operation to be done at every step and its structure (e.g. coordinate-wise vs. matrix-wise). This, in turn, determines both the computational cost of the update and whether distributed execution requires cross-device collectives.

\textbf{Computational costs.} The computational cost of running an optimizer step is just the number of floating point operations (FLOPs) we need to do per step. For methods that only perform coordinate-wise operations (such as Adam), this cost just scales with the number of parameters in the network. For methods like Muon that have to perform more sophisticated operations, this is higher. Concretely, for a parameter matrix of size $m \times n$, the per-step cost of (stochastic) gradient descent with momentum is just $2 m n$ floating point operations (FLOPs) and $4 m n$ FLOPs for Adam. In comparison, orthogonalization is more expensive. Using $K$ Newton-Schulz iterations (\Cref{alg:newton-schulz} in the Appendix), the total is $2 m n + 2K(2nm^2 + m^3)$ FLOPs assuming without loss of generality that $m \leq n$~\citep{jordan2024muon}. Some approaches to reducing the computational cost of orthogonalization include tuning $a, b, c$ in \Cref{alg:newton-schulz} to reduce the number of steps needed~\citep{jordan2024muon} or using adaptive per-step $a, b, c$~\citep{amsel25_polar_expres}.

In some large-scale pretraining regimes, the computational cost of running the optimizer steps might be small relative to the forward and backward passes in backpropagation. A common rule of thumb is fwd+bwd computation \(\approx 6NT\) FLOPs for a dense network with \(N\) params and input size of \(T\) tokens. For larger batch sizes, this becomes more dominant as the optimizer step is independent of the input size.

\textbf{Communication costs.} Modern neural networks are trained with a combination of data and model parallelism. Data Parallelism (DP) replicates model parameters, gradients, and optimizer states across the communication network but passes different data batches to each DP group. The gradients are synchronized across the different devices before applying the optimizer step. While this replicates the optimizer step computation across different DP groups, it adds no additional communication cost. In contrast, model parallelism typically will shard some or all of these tensors. Tensor Parallelism~\citep{megatron-lm} (TP) shards the model parameters for both storage and computation; This sharding is done along one or more dimensions (e.g. row, column) of each tensor. Pipeline Parallelism~\citep{huang18_gpipe} (PP) also shards model parameters for both storage and computation, but does so by dividing the layers among different PP groups. The Zero Redundancy Optimizer~\citep{rajbhandari19_zero} (ZeRO), Fully Sharded Data Parallelism ~\citep{zhao23_pytor_fsdp} (FSDP), and FSDP2~\citep{liang24_torch} shard model parameters either by layer or on the first dimension, but do that for the purpose of saving memory. Before doing the forward/backward computation involving a certain layer, ZeRO/FSDP2 undo the sharding they apply first. In practice, we often apply a combination of parallelism strategies for maximum compute utilization, with e.g. TP applied between different devices on the same node and ZeRO/FSDP/FSDP2 applied between different nodes. \Cref{tab:parallelism} summarizes what the different parallelism strategies shard.

\begin{table}
  \footnotesize
  \centering
\begin{tabular}{lccc}
\toprule
\textbf{Approach} & \textbf{Sharded (P/G/O)} & \textbf{How sharded} & \textbf{Params unsharded in F/B?} \\
\midrule
No parallelism & None & --- & No \\
Tensor Parallelism & P+G+O & (any dim) & No \\
FSDP2 (dim-0) & P+G+O & dim-0 & Yes \\
ZeRO-1 & O & layer & Yes \\
ZeRO-2 & G+O & layer & Yes \\
FSDP/ZeRO-3 & P+G+O & layer & Yes \\
PP & P+G+O & layer & No \\
\bottomrule
\end{tabular}
\caption{“Sharded” lists which of parameters (P), gradients (G), and optimizer states (O) are partitioned across ranks. F/B = forward/backward. TP=Tensor Parallelism. PP=Pipeline Parallelism.}
\label{tab:parallelism}
\end{table}

\textbf{Communication cost of Muon.} There are several strategies for parallelizing Muon and they determine the communication costs involved~\citep{essentialai2025layersharding}. If we use TP or FSDP2, we have to do an additional all-gather across the TP/FSDP2 groups to gather the model parameters. A naive all-gather would force us to orthogonalize the same matrix in parallel which is redundant. A better alternative is to use two all-to-all communications to redistribute different layer tensors. This suffers from two issues: (a) we still have to do two additional collective operations, and (b) if the number of matrices to be orthogonalized is larger than the number of GPUs, some GPUs would sit idle. If we use ZeRO, then the fact that the optimizer states, parameters, and gradients are already sharded layerwise helps greatly: we do not need to do an all-gather across the distributed optimizer groups and can apply orthogonalization layerwise in parallel. In this case, the only extra communication cost we suffer from comes from all-gathering across the TP groups. For an 8B parameter Llama-style transformer, this gives a throughput reduction of 8\%-10\%

This additional communication burden has motivated the development of Dion~\citep{ahn25_dion} and, concurrently to our work, \citet{boreiko2025towards} introduce a variant of BlockMuon (\Cref{alg:muonbp} with $P=\infty$). We compare against BlockMuon in detail in the next section and in the experiments. Dion~\citep{ahn25_dion} maintains a low-rank approximation of the momentum matrix and distributes the orthogonalization process. For large enough batch sizes, Dion's computational cost is perfectly divided by the number of devices and its communication cost scales with the smaller rank. In \Cref{sec:dion-comparison} we study the computational cost of Dion in more detail and compare it to our proposed algorithm MuonBP.

\textbf{Tools for communication efficiency.} Under data parallelism, the gradient synchronization step (which happens regardless of which optimizer we use) can itself be expensive, particularly in highly distributed or federated settings~\citep{kairouz19_advan_open_probl_feder_learn}. Researchers have developed techniques like gradient quantization~\citep{alistarh16_qsgd,horvath19_stoch_distr_learn_with_gradien}, intermittent communication~\citep{konecny16_feder_learn,stich18_local_sgd_conver_fast_commun_littl,douillard23_diloc}, and low-rank compression~\citep{vogels19_power_sgd} to reduce this cost. MuLoCo~\citep{therien2025muloco} applies both gradient quantization and intermittent communication to reduce the gradient synchronization cost under data parallelism, while Dion~\citep{ahn25_dion} optionally allows for reducing gradient synchronization cost via low-rank compression as well. In this work, we use the same technique of intermittent communication to reduce the communication costs arising from \emph{model} parallelism. Our algorithm can be combined with any of the aforementioned techniques for data parallelism as well.

Many of the same computational and communication constraints discussed above also apply to other gradient preconditioning algorithms, e.g. Shampoo~\citep{gupta18_shamp}, K-FAC~\citep{martens2015optimizing}, and ASGO/One-Sided Shampoo~\citep{an25_asgo,xie25_struc_precon_adapt_optim}. Distributed Shampoo~\citep{shi23_distr_data_paral_pytor_implem} uses blocking, intermittent preconditioner updates, and layer-wise sharding similar to ZeRO-1/FSDP to reduce the amount of communication.

\section{Algorithms and Convergence}
\label{sec:algorithms}
\vspace{-0.2cm}

Our starting point is the observation that column- or row-wise normalization can be viewed as orthogonalization applied on a submatrix of size $m \times 1$ or $1 \times n$. An intermediate method between row-wise normalization and column-wise normalization would be orthogonalizing submatrices of dimensions $p \times q$ each where $p \leq m$ and $q \leq n$. This has two benefits,
\begin{itemize}[leftmargin=*]
\item We reduce the amount of floating point operations per Newton-Schulz step from $2(2 n m^2 + m^3)$ to $2 (2 p q^2 + q^3) \times \frac{m n}{p q} = 2 (2 m n q + \frac{m n q^2}{p})$ floating point operations (assuming without loss of generality that $p \leq q$). For example, the MLP layers in Llama 3 405B~\citep{grattafiori2024llama} have $m, n \in \{ 53248, 16384 \}$. Here, orthogonalizing submatrices with 8-way TP gives a speedup of $\approx 2.36\times$ for the up-projection and $\approx 9.06\times$ for the down-projection per Newton–Schulz step relative to full orthogonalization.
  \item If we use \emph{blocks} corresponding to the model parallelism used, we can entirely eliminate orthogonalization's communication overhead under \emph{any} regime. We discuss this in more detail below.
\end{itemize}

\paragraph{How blocks align with model-parallel shards.}
We divide each parameter, gradient, and optimizer state tensor into blocks and define each of these blocks to be exactly the tensor shard that resides on a device under the chosen model-parallelism layout. This makes the communication pattern explicit and ensures that a ``block'' step never requires cross-device traffic.

\begin{itemize}[leftmargin=*]
\item \emph{Tensor Parallelism (TP).} In Megatron-style \citep{megatron-lm} \emph{column-parallel} linear layers, a weight $W \in \mathbb{R}^{m\times n}$ is split by columns across $c$ TP ranks, so each rank holds $W^{(j)} \in \mathbb{R}^{m\times (n/c)}$ and produces a local gradient shard $G^{(j)}\in\mathbb{R}^{m\times (n/c)}$. A \emph{block} is $G^{(j)}$; block-orthogonalization acts on $m\times (n/c)$ matrices and needs no gather/scatter. In \emph{row-parallel} layers, $W$ is split by rows across $r$ ranks, so each shard is $((m/r)\times n)$ and the block is $G^{(i)}\!\in\!\mathbb{R}^{(m/r)\times n}$. For hybrid 2D TP (row $\times$ column), the global $W$ is partitioned into an $r\times c$ grid of rectangular shards $((m/r)\times(n/c))$. TP is often applied not just to the linear layer but also to the attention weights as well, and the same discussion applies.

\item \emph{FSDP2 (dim-0 sharding).} When parameters are sharded only for memory (layer/dim-0), each rank holds a contiguous slice along the first dimension. During the optimizer step, \emph{block} denotes this local slice; thus block-orthogonalization again requires no parameter all-gather. The same definition applies under TP+FSDP: the block is the intersection of the TP and FSDP partitions, i.e., a single $\big(\frac{m}{r_{\text{row}}}\times \frac{n}{c_{\text{col}}}\big)$ shard.
\end{itemize}

In order to develop algorithms that minimize communication, we want to do block-wise operations as much as possible and keep ``global'' operations to a minimum. To this end, we analyze the variant of Muon that only does blockwise operations in \Cref{subsec:block-muon-analysis}. Our analysis shows that in the worst case, the convergence of this variant might be much worse than full Muon. To remedy this, we develop and analyze our block-periodic variant in \Cref{subsec:block-periodic-analysis}.

\subsection{Block orthogonalization}\label{subsec:block-muon-analysis}
BlockMuon (\Cref{alg:muonbp} with $P=\infty$) applies orthogonalization to these blocks, in parallel, on different devices~\citep{boreiko2025towards}. This removes the need for any added communication and reduces the computational cost of orthogonalization. To better understand the convergence of BlockMuon, we analyze the algorithm under the assumptions of smoothness, bounded stochastic gradient variance, and norm equivalence characterized by $\rho$. We state our assumptions more clearly below.

\begin{assumption}[Smoothness]\label{asm:smoothness}
We assume that $f: \mathbb{R}^{m \times n} \to \mathbb{R}$ is $L$-smooth with respect to a norm $\norm{\cdot}$. That is, let $\norm{\cdot}_{\ast}$ be the corresponding dual norm, then for all $X, Y \in \mathbb{R}^{m \times n} \to \mathbb{R}$ we assume $\norm{\nabla f(X) - \nabla f(Y)}_{\ast} \leq L \norm{X-Y}$.
\end{assumption}
\begin{assumption}[Bounded Variance]\label{asm:bounded-var}
Suppose that the stochastic gradients $G(X)$ are (a) unbiased, $\ec[\xi]{G(X; \xi)} = \nabla f(X)$, and (b) have bounded variance $\ecn[\xi]{G(X; \xi) - \nabla f(X)} \leq \sigma^2$.
\end{assumption}
\begin{assumption}[Norm Equivalence]\label{asm:norm}
The norm $\norm{\cdot}$ satisfies $\norm{X} \leq \rho \norm{X}_\mathrm{F}$ for some $\rho > 0$.
\end{assumption}
As mentioned before, we will use the Non-Euclidean Trust Region (NTR) template (provided next) to analyze both algorithms.
\begin{align}\label{eq:ntr}\tag{NTR}
M_{t} = \mu M_{t-1} + G_t, &&
X_{t+1} = \argmin_{X: \norm{X-X_t} \leq \eta} \ev{M_t, X - X_t},
\end{align}
where $G_t$ is a stochastic gradient with expectation $\nabla f(X_t)$. This framework was adopted for the convergence analysis of Muon by~\citet{kovalev25_under_gradien_orthog_deep_learn} and the next theorem is a slight modification of Theorem 2 in their work. The proof is also similar to \citep[Theorem 2.1]{li2025note}.
\begin{theorem}\label{thm:ntr-convergence} Suppose that the function $f$ satisfies \Cref{asm:bounded-var,asm:smoothness,asm:norm}  and that $f$ is lower bounded by $f_{\ast}$. Then for any $\eta > 0$ and $\mu \in [0, 1]$ the iterates generated by~\eqref{eq:ntr} satisfy
\begin{align}\label{eq:ntr-guarantee}
\begin{split}
\ec{\min_{t=0, \ldots, T-1} \norm{\nabla f(X_t)}_{\ast}} &\leq \frac{f(X_0) - f_{\ast}}{\eta T} + \frac{3 \sqrt{L(f(X_0) - f_{\ast})}}{T} \frac{\mu}{1-\mu} \\
&\qquad + \frac{2(1-\mu) \rho \sigma}{T} + \frac{L \eta \mu}{1-\mu} + \rho \sigma \sqrt{\frac{1-\mu}{1+\mu}} + \frac{L \eta}{2}.
\end{split}
\end{align}
\end{theorem}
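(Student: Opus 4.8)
The plan is to follow the standard template for non-Euclidean steepest descent with heavy-ball momentum used by \citet{kovalev25_under_gradien_orthog_deep_learn} and \citet{li2025note}: combine the smoothness descent lemma with the geometry of the trust-region step to obtain a one-step decrease, telescope, and then separately control the momentum-tracking error $\norm{M_t - \nabla f(X_t)}_{\ast}$. A useful preliminary observation is that the update in \eqref{eq:ntr} depends only on the \emph{direction} of $M_t$ (rescaling $M_t$ by any positive constant leaves $X_{t+1}$ unchanged), so the iterates are unchanged if we replace the recursion by the normalized one $M_t = \mu M_{t-1} + (1-\mu)G_t$. I would run the analysis with this normalized buffer, since it is a convex combination of past stochastic gradients and therefore tracks $\nabla f(X_t)$ up to bias and noise.

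First I would establish the per-step inequality. Writing $\Delta_t = X_{t+1} - X_t$, the trust-region step gives $\norm{\Delta_t} = \eta$ and $\langle M_t, \Delta_t \rangle = -\eta \norm{M_t}_{\ast}$. Adding and subtracting $\nabla f(X_t)$, bounding $\langle \nabla f(X_t) - M_t, \Delta_t \rangle \le \eta \norm{M_t - \nabla f(X_t)}_{\ast}$ by the definition of the dual norm, and using $\norm{M_t}_{\ast} \ge \norm{\nabla f(X_t)}_{\ast} - \norm{M_t - \nabla f(X_t)}_{\ast}$, I substitute into the descent inequality from \Cref{asm:smoothness}, $f(X_{t+1}) \le f(X_t) + \langle \nabla f(X_t), \Delta_t \rangle + \tfrac{L}{2}\norm{\Delta_t}^2$, to get $f(X_{t+1}) \le f(X_t) - \eta \norm{\nabla f(X_t)}_{\ast} + 2\eta \norm{M_t - \nabla f(X_t)}_{\ast} + \tfrac{L\eta^2}{2}$. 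Summing over $t = 0, \ldots, T-1$, using $f(X_T) \ge f_{\ast}$, dividing by $\eta T$, and bounding the minimum by the average gives
\begin{align*}
\ec{\min_{t=0,\ldots,T-1} \norm{\nabla f(X_t)}_{\ast}} \le \frac{f(X_0) - f_{\ast}}{\eta T} + \frac{2}{T}\sum_{t=0}^{T-1} \ec{\norm{M_t - \nabla f(X_t)}_{\ast}} + \frac{L \eta}{2},
\end{align*}
so everything reduces to bounding the averaged tracking error.

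For that I would unroll $M_t = (1-\mu)\sum_{s=0}^{t} \mu^{t-s} G_s$ and decompose $M_t - \nabla f(X_t)$ into three pieces: a \emph{drift} term $(1-\mu)\sum_s \mu^{t-s}(\nabla f(X_s) - \nabla f(X_t))$, a \emph{transient} term $-\mu^{t+1}\nabla f(X_t)$, and a \emph{noise} term $(1-\mu)\sum_s \mu^{t-s}\xi_s$ with $\xi_s = G_s - \nabla f(X_s)$. The drift term uses $\norm{\nabla f(X_s) - \nabla f(X_t)}_{\ast} \le L\norm{X_s - X_t} \le L\eta(t-s)$ (each step has length $\eta$); summing the geometrically weighted arithmetic series gives a bound of order $L\eta\mu/(1-\mu)$ uniformly in $t$, which is the origin of the $L\eta\mu/(1-\mu)$ term. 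For the transient term I would avoid assuming monotone decrease of $f$ and instead use the deterministic bound $\norm{\nabla f(X_t)}_{\ast} \le \norm{\nabla f(X_0)}_{\ast} + L\eta t \le \sqrt{2L(f(X_0)-f_{\ast})} + L\eta t$ (the last inequality from \Cref{asm:smoothness} and lower-boundedness); after the $\mu^{t+1}$ weighting and the $1/T$ average, the first part produces the $\tfrac{\mu}{1-\mu}\cdot\tfrac{\sqrt{L(f(X_0)-f_{\ast})}}{T}$ contribution and the $L\eta t$ part is strictly lower order and folds into the $L\eta$ terms already present. For the noise term, unbiasedness (\Cref{asm:bounded-var}) makes the cross terms vanish in expectation, so the contributions of the $\xi_s$ add in squared norm rather than in norm; carrying this out in the Hilbertian Frobenius norm and summing the geometric series, then using \Cref{asm:bounded-var} and the norm-equivalence \Cref{asm:norm} to return to $\norm{\cdot}_{\ast}$, bounds $\ec{\norm{n_t}_{\ast}}$ by $C\rho\sigma\sqrt{(1-\mu)/(1+\mu)}$ for an absolute constant $C$, with the initialization transient producing the $\tfrac{(1-\mu)\rho\sigma}{T}$ term. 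Substituting the three bounds into the displayed inequality and collecting constants yields \eqref{eq:ntr-guarantee}.

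The hard part is the noise term. The variance-reduction factor $\sqrt{(1-\mu)/(1+\mu)}$ — precisely what makes momentum worthwhile here — cannot come from the triangle inequality, which would only give a useless $1/(1-\mu)$; it requires a second-moment computation, and that computation is available in the Hilbertian Frobenius norm but not directly in $\norm{\cdot}$ or $\norm{\cdot}_{\ast}$, which is exactly why \Cref{asm:norm} (and the constant $\rho$) must be invoked to pass back and forth. The only other delicate point is the bookkeeping around the transient term: since $\norm{\nabla f(X_0)}_{\ast}$ need not be small, it must be routed through $\sqrt{2L(f(X_0)-f_{\ast})}$, and one has to track carefully which $\mu^{t+1}$-weighted pieces are genuinely $O(1/T)$ and which fold into the existing $O(L\eta)$ terms — this is where the precise constants in \eqref{eq:ntr-guarantee} get pinned down.
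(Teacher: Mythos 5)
Your proposal follows the same route as the paper: the one-step descent inequality combining smoothness with the trust-region geometry, telescoping, and a separate bound on the momentum-tracking error $\|M_t - \nabla f(X_t)\|_\ast$ obtained by unrolling the momentum recursion into a drift part (controlled by smoothness and the fixed step length $\eta$), an initial-condition transient (controlled via $\|\nabla f(X_0)\|_\ast \le \sqrt{2L(f(X_0)-f_\ast)}$), and a martingale-noise part handled by a second-moment computation in the Frobenius norm before invoking the $\rho$ norm-equivalence constant. The only cosmetic differences are that the paper expresses the unrolling as a recursion $E_k = \mu E_{k-1} + (1-\mu)S_k + \mu R_k$ before expanding, and keeps the transient as $\mu^{k+1}\nabla f(X_0)$ directly (avoiding the extra $\mu^{t+1} L\eta t$ term you later absorb); neither affects correctness.
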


\Cref{thm:ntr-convergence} applies to Muon, since under $\norm{\cdot} = \opnorm{\cdot}$, \cref{eq:ntr} reduces to orthogonalizing momentum. The next lemma shows that Block-Muon can also be studied in the same framework.

\begin{lemma}[Dual of the Block-Spectral Norm]\label{lem:dual_block_norm}
Let $X\in\mathbb{R}^{m\times n}$ be partitioned into $r\times c$ blocks. Define the \textbf{block-spectral norm} as $\blocknorm{X} = \max_{1\le i\le r,\ 1\le j\le c}\ \opnorm{X_{i, j}}.$ Its dual norm is $\dblocknorm{X} = \sum_{i,j} \dopnorm{X_{i, j}}$, where $\|\cdot\|_*$ is the nuclear norm.
\end{lemma}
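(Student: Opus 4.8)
The plan is to unfold the definition of the dual norm with respect to the trace (Frobenius) inner product and exploit the fact that the block-spectral ball is a \emph{product} of per-block spectral balls. Write $\langle X, Y\rangle = \Tr(X^{\top}Y)$ for the Frobenius inner product; since this inner product is additive over any block partition, $\langle X, Y\rangle = \sum_{i,j}\langle X_{i,j}, Y_{i,j}\rangle$. By the definition of the dual norm, I would start from $\dblocknorm{Y} = \sup\{\,\langle X, Y\rangle : \blocknorm{X}\le 1\,\} = \sup\{\,\sum_{i,j}\langle X_{i,j}, Y_{i,j}\rangle : \max_{i,j}\opnorm{X_{i,j}}\le 1\,\}$.

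The crucial observation is that the constraint $\max_{i,j}\opnorm{X_{i,j}}\le 1$ holds if and only if $\opnorm{X_{i,j}}\le 1$ for every pair $(i,j)$ individually, with no coupling across blocks; equivalently, the feasible set is the Cartesian product of the spectral unit balls of the block spaces. Since the objective $\sum_{i,j}\langle X_{i,j}, Y_{i,j}\rangle$ is separable over blocks, the supremum over the product set decomposes into a sum of independent suprema:
\[
\dblocknorm{Y} \;=\; \sum_{i,j}\; \sup_{\opnorm{X_{i,j}}\le 1}\;\langle X_{i,j}, Y_{i,j}\rangle \;=\; \sum_{i,j}\; \dopnorm{Y_{i,j}},
\]
where the last equality is just the definition of the dual of the spectral norm applied blockwise. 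The same argument, with $\opnorm{\cdot}$ replaced by an arbitrary norm on each block, gives the general principle that the dual of an $\ell_\infty$-type combination of norms is the $\ell_1$-type combination of the corresponding dual norms.

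It then remains to identify $\dopnorm{Z}$ with the nuclear norm $\sum_k \sigma_k(Z)$, which is the standard spectral–nuclear duality: taking an SVD $Z = U\Sigma V^{\top}$, for any $X$ with $\opnorm{X}\le 1$ we have $\langle X, Z\rangle = \Tr\!\big((U^{\top}XV)^{\top}\Sigma\big) = \sum_k (U^{\top}XV)_{kk}\,\sigma_k(Z) \le \sum_k \sigma_k(Z)$, since the diagonal entries of $U^{\top}XV$ are bounded in absolute value by $\opnorm{U^{\top}XV}=\opnorm{X}\le 1$, and equality is attained at $X = UV^{\top}$. Substituting into the previous display yields $\dblocknorm{Y} = \sum_{i,j}\|Y_{i,j}\|_{*}$, as claimed. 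I do not expect a genuine obstacle here: the only points requiring care are verifying that the block-spectral ball is really a product set (so that the supremum separates cleanly) and recalling the spectral/nuclear duality; if one additionally wants to confirm $\blocknorm{\cdot}$ is a norm rather than a seminorm, note that $\blocknorm{X}=0$ forces every block, hence $X$ itself, to vanish.
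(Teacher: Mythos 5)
Your proof is correct and follows essentially the same route as the paper: both decompose the duality supremum block-by-block using the product structure of the block-spectral ball, and both invoke the spectral/nuclear duality per block (the paper bounds $\langle X_{ij},G_{ij}\rangle \le \|X_{ij}\|_\ast \|G_{ij}\|_{\mathrm{op}}$ and exhibits the optimizer $Z^\star_{ij}=U_{ij}V_{ij}^\top$, while you make the ``separable sup over a Cartesian product splits into a sum of sups'' principle explicit and give a self-contained diagonal argument for the per-block duality). One tiny slip: you wrote $\opnorm{U^\top X V} = \opnorm{X}$, which need not hold for arbitrary $X$ when $U,V$ are only partial isometries; the correct (and sufficient) relation is $\opnorm{U^\top X V} \le \opnorm{U^\top}\opnorm{X}\opnorm{V} = \opnorm{X}$, which still bounds the diagonal entries and lets the argument go through.
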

BlockMuon is just \cref{eq:ntr} with $\norm{\cdot} = \blocknorm{\cdot}$. To compare between the convergence of Muon and BlockMuon, we consider the simplified setting when $\sigma = 0$ and apply \Cref{thm:ntr-convergence}. Minimizing \Cref{eq:ntr-guarantee} over $\eta$ and $\mu$ yields $\etaops = \sqrt{\frac{2(f(X_0) - f_{\ast})}{T \lop}}$ and $\mu = 0$ and the convergence guarantee $\dopnorm{\nabla f(X_{\tau})} \leq \sqrt{\frac{2 \lop (f(X_0) - f_{\ast})}{T}}$, where $\lop$ is the smoothness constant of $f$ with respect to the operator norm. Similarly, the best guarantee for BlockMuon is achieved by $\etabs = \sqrt{\frac{f(X_0) - f_{\ast}}{6 T \lblock}}$ and is $\dblocknorm{\nabla f(X_{\tau}^{\prime})} \leq \sqrt{\frac{2 \lblock (f(X_0) - f_{\ast})}{T}}$, where $\tau^\prime = \argmin_t \dblocknorm{\nabla f(X_t^{\prime})}$ and $\lblock$ is the smoothness constant of $f$ in the block norm $\blocknorm{\cdot}$. To compare the two guarantees for BlockMuon and Muon, we use the facts that $\dopnorm{\cdot} \leq \dblocknorm{\cdot}$ and $\lblock \leq rc \lop$ (proved in \Cref{sec:norm-equivalences}) to get $\dopnorm{\nabla f(X^{\prime}_{\tau^{\prime}})} \leq \sqrt{\frac{2 \lblock (f(X_0) - f_{\ast})}{T}} \leq \sqrt{rc}\sqrt{\frac{2\lop(f(X_0)-f_\ast)}{T}}$. Thus, under the same operator norm metric, BlockMuon's best point $X_{\tau^{\prime}}^{\prime}$ has a gradient dual norm that is at most a $\sqrt{rc}$ factor worse than Muon's best point $X_{\tau}$ in the worst case; when $\lblock \approx \lop$ (e.g., curvature well captured by blocks), the two bounds match up to constants. Note that in the former case, we would have $\lblock \approx (rc) \lop$ and $\frac{\etaops}{\etabs} = \sqrt{\frac{\lblock}{\lop}} = \sqrt{rc}$. Whereas, in the ideal scenario when $\etaops \approx \etabs$, the optimal learning rate would be the same for both algorithms. Thus \emph{the optimal ratio of the learning rate of Block-Muon and Muon is between $1$ and $1/\sqrt{rc}$}.

The picture we see is thus clear: BlockMuon is faster on a per-step basis, as we do not need to perform any additional communication over coordinate-wise methods, but this comes at the cost of a worse convergence guarantee (by a factor of $\sqrt{rc}$ in the worst case). It seems straightforward then that we should minimize wall-clock time by choosing block sizes $r$ and $c$ that balance this tradeoff. While this is theoretically plausible, in practice the block sizes are naturally a function of network topology (i.e. FSDP or TP degrees) and changing them would add more latency and require redistributing tensors to and from their original layouts.

\subsection{Block-periodic orthogonalization}\label{subsec:block-periodic-analysis}
We instead offer another alternative to tuning block sizes that (a) has a simple implementation, and (b) gives us a clear tunable knob that smoothly interpolates between BlockMuon and Muon. Given a period $P$, Muon with Block-Periodic orthogonalization instead uses BlockMuon for $\frac{P-1}{P}$ steps and then uses full orthogonalization for one step. If $P=1$ we get Muon, while if $P \to \infty$ we get BlockMuon. Using $P$ in between both extremes allows us to balance out the tradeoff between iteration complexity and per-step communication cost. We state the algorithm in full below as \Cref{alg:muonbp}. Note that we use two stepsizes, $\etafull$ and $\etablock$, depending on whether we communicate during that step or not. We will later show this gives a better convergence rate than just using one stepsize.

The next theorem studies the convergence of this algorithm and allows us to make the above intuition rigorous.

\begin{theorem}[Convergence of MuonBP]\label{thm:muonbp-convergence}
Suppose that $f$ satisfies \Cref{asm:smoothness} with respect to both the operator norm $\|\cdot\|_{\mathrm{op}}$ with constant $\lop$ and the block-spectral norm $B(\cdot)$ with constant $\lblock$, and that \Cref{asm:bounded-var} holds. Assume $f$ is lower bounded by $f_\ast$ and let $\Delta_0=f(X_0)-f_\ast$. Fix a period $P \ge 1$, momentum $\mu \in [0,1)$, and two stepsizes $\etafull>0$ and $\etablock>0$. Define $\etabar = \frac{\etafull}{P} + \frac{\etablock (P-1)}{P}$, $\etamax = \max(\etafull, \etablock)$, and
\begin{align*}
 A &= \max\{\etafull\sqrt{\lop},\etablock\sqrt{\lblock}\}, \qquad\qquad Q  =\frac{\lop\etafull^{2}}{2P}+\frac{\lblock\etablock^{2}(P-1)}{2P}, \\
 R &= \frac{2\mu}{1-\mu}\left(\frac{\lop\,\etafull\,\max\{\etablock\sqrt{rc},\etafull\}}{P}
+\frac{\lblock\,\etablock\,\max\{\etafull,\etablock\}(P-1)}{P}\right).
\end{align*}
Then for any horizon $T$ divisible by $P$, the iterates of \Cref{alg:muonbp} satisfy
\begin{equation}
\min_{t=0,\ldots,T-1}\ec{\dopnorm{\nabla f(X_t)}} \leq
\frac{\Delta_0}{\etabar T}
+\frac{4(1-\mu)\sigma\,\etamax}{\etabar T}
+\frac{6\mu\sqrt{\Delta_0}\,A}{(1-\mu)\etabar T}
+\frac{Q+R}{\etabar}
+2\sigma\sqrt{\frac{1-\mu}{1+\mu}}.
\label{eq:bp-muon-conv}
\end{equation}
\end{theorem}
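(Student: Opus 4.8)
The plan is to mimic the single-norm descent argument behind \Cref{thm:ntr-convergence}, but now carried out over one full period of $P$ consecutive iterations at a time, so that the block steps are measured in the block-spectral norm $B(\cdot)$ and the single full step in the operator norm $\opnorm{\cdot}$. The key structural fact I would use is that the NTR update $X_{t+1}=\argmin_{\norm{X-X_t}\le\eta}\ev{M_t,X-X_t}$ has closed form $X_{t+1}-X_t=-\eta\,\partial\dual{M_t}$ where the step has norm exactly $\eta$ and satisfies $\ev{M_t,X_{t+1}-X_t}=-\eta\norm{M_t}_\ast$; this holds verbatim for $\norm{\cdot}=\opnorm{\cdot}$ (giving orthogonalization) and for $\norm{\cdot}=B(\cdot)$ (giving blockwise orthogonalization), using \Cref{lem:dual_block_norm} for the latter. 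So the first step is to write the smoothness descent inequality for each iteration in the \emph{appropriate} norm: for a block step, $f(X_{t+1})\le f(X_t)+\ev{\nabla f(X_t),X_{t+1}-X_t}+\tfrac{\lblock}{2}B(X_{t+1}-X_t)^2 = f(X_t)-\etablock\dblocknorm{M_t}+\tfrac{\lblock\etablock^2}{2}+(\text{momentum/noise cross terms})$, and similarly for the full step with $\etafull$, $\lop$, $\dopnorm{\cdot}$.

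\textbf{The momentum bookkeeping} is where the constants $A$, $Q$, $R$ come from, and I would handle it exactly as in \citet{kovalev25_under_gradien_orthog_deep_learn} / \citep[Theorem 2.1]{li2025note}: decompose $M_t = \nabla f(X_t) + (\text{accumulated gradient drift}) + (\text{accumulated noise})$, bound the drift term by telescoping $\norm{\nabla f(X_{t})-\nabla f(X_{t-1})}_\ast$ via smoothness — and here each such increment involves a \emph{previous} step whose size is $\etablock$ or $\etafull$, which is precisely why the cross terms in $R$ pick up products like $\etafull\max\{\etablock\sqrt{rc},\etafull\}$ (a full step's drift may be fed by a preceding block step, and converting a block-norm step size into an operator-norm bound costs the $\sqrt{rc}$ from $\lblock\le rc\,\lop$ and $B\ge\opnorm{\cdot}$, both proved in \Cref{sec:norm-equivalences}). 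The noise terms are controlled by \Cref{asm:bounded-var} together with the geometric-sum identity $\sum_j\mu^{2j}\le\tfrac{1}{1-\mu^2}$, producing the $\sigma\sqrt{(1-\mu)/(1+\mu)}$ term and the $(1-\mu)\sigma\etamax$ term. The role of $\etamax$ is just a crude uniform bound on the per-step size when I do not want to track which kind of step occurred.

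\textbf{Then I would sum over one period and over the $T/P$ periods}: within a period, $\sum_{\text{block steps}}\etablock\dblocknorm{M_t} + \etafull\dopnorm{M_t}$ telescopes against $f(X_0)-f_\ast$, the $\tfrac{\lblock\etablock^2}{2}$ and $\tfrac{\lop\etafull^2}{2}$ terms aggregate (after dividing by the total "budget" $\sum\eta$ per period, which is $P\etabar$) into $Q$, and the momentum cross terms into $R$. Using $\dopnorm{\cdot}\le\dblocknorm{\cdot}$ I can lower-bound every term by $\dopnorm{\nabla f(X_t)}$ (minus a noise correction), so after dividing through by $\etabar T$ I get a bound on $\min_t\ec{\dopnorm{\nabla f(X_t)}}$. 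Finally, the $\min$ over $t$ is bounded by the weighted average, and collecting terms gives exactly \eqref{eq:bp-muon-conv}.

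\textbf{The main obstacle} I expect is the mixed-norm momentum drift term: a single momentum buffer $M_t$ is stepped by block-norm updates on most iterations and by an operator-norm update on period boundaries, so when I expand $M_t-\nabla f(X_t)$ and apply smoothness I must consistently choose, for each historical increment $X_{s}-X_{s-1}$, a norm in which to measure it, and then pay the conversion factor ($\sqrt{rc}$ or $1$) whenever the "measuring norm" of the current step disagrees with the "true norm" of that past step. Getting the indices right so that $R$ comes out with exactly the stated $\max\{\cdot,\cdot\}$ structure — rather than a looser bound with $\sqrt{rc}$ multiplying everything — is the delicate part; everything else is the standard NTR-with-momentum computation applied period-by-period.
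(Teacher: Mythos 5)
Your proposal matches the paper's proof in all essentials: apply the descent lemma in the step-appropriate norm (operator on full steps, block-spectral on block steps), control the momentum error via the telescoping-drift decomposition of \Cref{lem:mom-diff} with a uniform bound $A$ on $\norm{Z_\tau}\eta_\tau$ in the ambient norm (which is exactly where $\max\{\etablock\sqrt{rc},\etafull\}$ appears for full steps and $\max\{\etafull,\etablock\}$ for block steps), lower-bound $\dblocknorm{\nabla f(X_k)}\ge\dopnorm{\nabla f(X_k)}$ so every iteration is measured in the same quantity, then telescope the function values and split the error sum over $S_P$ and $S_B$. One small slip in your write-up: you wrote ``$B\ge\opnorm{\cdot}$'', but the inequality from \Cref{lem:norm-relations} is $\blocknorm{Z}\le\opnorm{Z}\le\sqrt{rc}\,\blocknorm{Z}$; the $\sqrt{rc}$ enters because a block step with $\blocknorm{Z_\tau}\le 1$ yields only $\opnorm{Z_\tau}\le\sqrt{rc}$, while a full step measured in the block norm costs nothing (the favorable direction is $\blocknorm{Z}\le\opnorm{Z}$).
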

To simplify the comparison we consider the noiseless case where $\sigma =0$ and the optimal momentum parameter is then $\mu = 0$. To minimize \Cref{eq:bp-muon-conv}, we define the harmonic-average smoothness $\lbp$ by $\lbp^{-1}=\frac{1}{P}\lop^{-1}+\frac{P-1}{P}\,\lblock^{-1}$. The optimal stepsizes are then $\etafull^\ast = \frac{1}{\lop}\sqrt{\frac{2\Delta_0}{T}\lbp}$ and $\etablock^\ast =\frac{1}{\lblock}\sqrt{\frac{2\Delta_0}{T}\lbp}$ and the convergence rate is $\min_{t<T}\dopnorm{\nabla f(X_t)} \leq \sqrt{\frac{2\Delta_0 \lbp}{T}}$. Therefore, the convergence of BlockMuon, Muon, and MuonBP is proportional to $\sqrt{\lblock}, \sqrt{\lop}$, and $\sqrt{\lbp}$, respectively. It is easy to see that $\lop \leq \lbp \leq \lblock$ and thus the convergence rate of MuonBP is in between Muon and BlockMuon. The period $P$ acts as a tunable knob that lets us slide between the two extremes and this is directly reflected in the convergence rates we obtain. Observe that to get this rate, it is crucial that we use two stepsizes $\etafull$ and $\etablock$ depending on whether we are applying full orthogonalization or block-wise orthogonalization. On the contrary,
if we were to force using a single stepsize for all steps $\eta_t\equiv\eta$, the optimal choice becomes
$\eta^{\ast}=\sqrt{\frac{2\Delta_0}{T\,\lbpt}}$ with $\lbpt=\frac{\lop}{P}+\frac{P-1}{P}\lblock$, yielding a convergence rate proportional to $\lbpt$ rather than $\lbp$. Since $\lbp$ is the weighted harmonic mean and $\lbpt$ is the weighted arithmetic mean of the same constants, we have $\lbp \leq \lbpt$ with strict inequality unless $\lop=\lblock$, so tying the stepsizes generally \emph{yields worse convergence.} Observe that, as in our previous comparison, the optimal ratio between $\etablock$ and $\etafull$ is between $1$ and $1/\sqrt{rc}$.

\begin{wrapfigure}{L}{0.5\textwidth}
\begin{minipage}{0.5\textwidth}
\begin{algorithm}[H]
\caption{MuonBP}\label{alg:muonbp}
$M_{-1}^{(m)} \leftarrow 0$ for all devices $m$\\
\For{$t \leftarrow 0$ \KwTo $T-1$}{
 \ForPar{each device $m$}{
  Get local shard $G_t^{(m)}$ of the full gradient $G_t$\\
  $M_t^{(m)} \leftarrow \mu M_{t-1}^{(m)} + G_t^{(m)}$
 }
 \eIf{$t \bmod P = 0$}{
  Gather $\{M_t^{(m)}\}_m$ to form full $M_t$\\
  $U_t \leftarrow \text{Orthogonalize-via-NS}(M_t)$\\
  $X_{t+1} \leftarrow X_t - \etafull  U_t$
 }{
  \ForPar{each device $m$}{
   $U_t^{(m)} \leftarrow \text{Orthogonalize-via-NS}(M_t^{(m)})$\\
   $X_{t+1}^{(m)} \leftarrow X_t^{(m)} - \etablock  U_t^{(m)}$
  }
 }
}
\Return $X_T$
\end{algorithm}
\vspace{-0.2cm}
\end{minipage}
\end{wrapfigure}

\textbf{AdamW learning rate transfer.} \citet{liu25_muon_is_scalab_llm_train} introduce a learning rate scaling rule that allows reusing the AdamW learning rate for Muon by matching the root-mean square norm of the updates to be the same as AdamW. To ensure that the updates have RMS $\beta$, they scale the update matrices by $\beta \cdot \sqrt{\max(m, n)}$ where $m \times n$ are the update matrix dimensions. Following our theorem above, which shows using different learning rates for the blocking and non-blocking matrices is ideal, we also adopt this rule and scale the updates by the dimensions of the smaller matrix on block steps and the dimensions of the full matrix on non-blocking steps.

\textbf{Communication cost of MuonBP.} On a \emph{block} step, MuonBP performs orthogonalization on the local shard and updates the local parameter slice; no optimizer-state all-gather/scatter is needed. Only the usual DP gradient all-reduce (already required by the training stack) occurs. On a \emph{full} step, MuonBP temporarily gathers shards to materialize $M_t$ (or $G_t$) per tensor, performs global orthogonalization, then scatters back.

\textbf{Choice of period.} Ideally the period $P$ should be chosen to minimize the wall-clock time to reach a certain accuracy $\varepsilon$, defined as the product of number of iterations to reach this accuracy $T_{\mathrm{iter}}(\epsilon, P)$ (which can be derived from Theorem~\ref{thm:muonbp-convergence}) and the expected wall-clock time per iteration $T_{\mathrm{wall}}(P)$. The latter is a function of network communication speed, the model parallelism used, and tensor dimensions. This can be difficult to model in closed form, and in practice we resort to trying out different values of $P$ for shorter runs first. We found that the simple choice $P=5$ balanced this tradeoff well in most of our experiments.

\vspace{-0.2cm}
\section{Experiments} \label{sec:experiments}
\vspace{-0.2cm}
\begin{wrapfigure}{r}{0.4\textwidth}
    \centering    \includegraphics[width=0.4\textwidth]{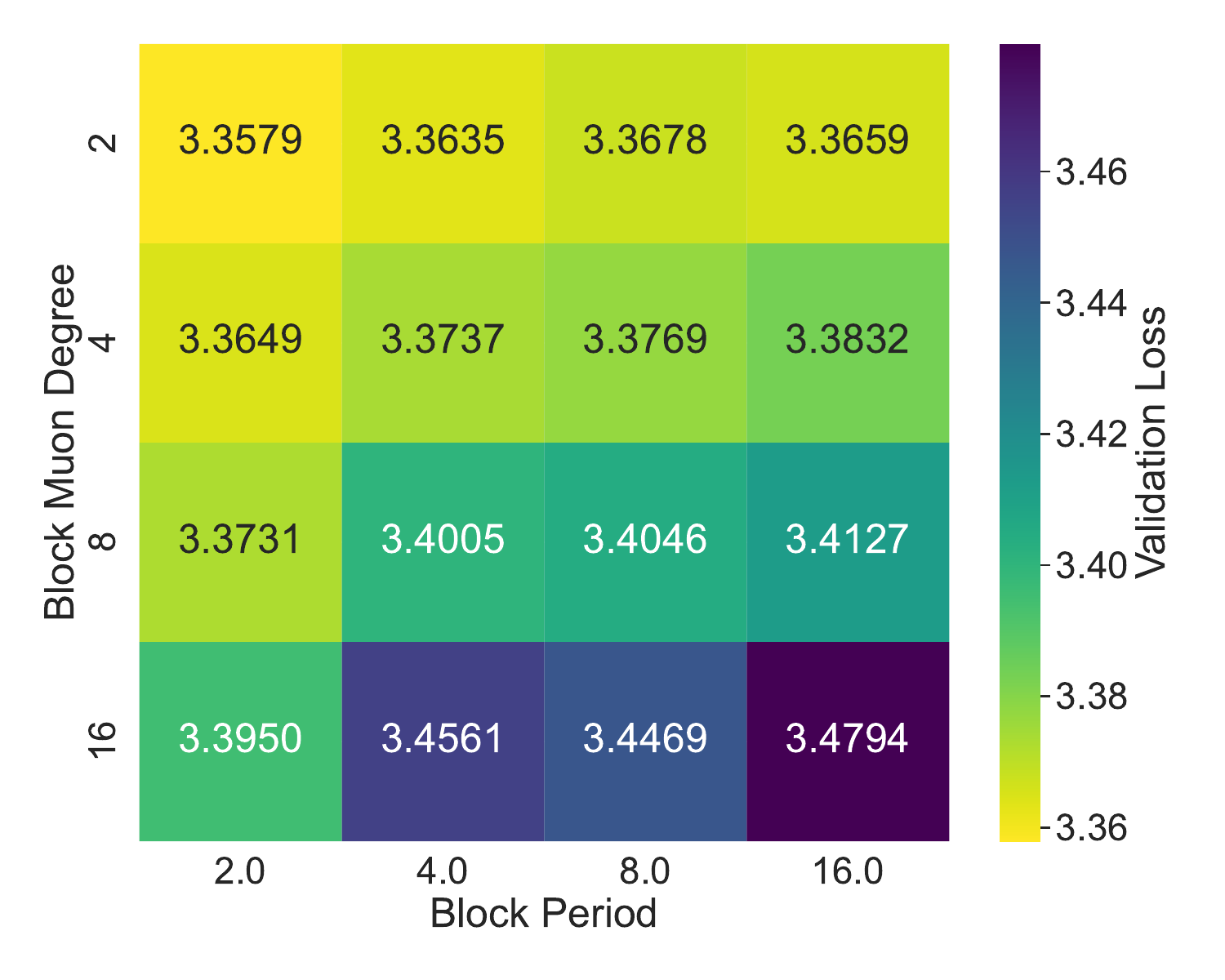}
    \caption{Validation loss as a function of orthogonalization period for different TP degrees (280M model).}
    \label{fig:period_vs_degree}
\end{wrapfigure}

We conduct experiments in two main settings both of which are Llama-style language model pretraining setups. Firstly, we use a setting with FSDP2 and TP where we study the effect of varying blocking degree and orthogonalization period on convergence under extensive hyperparameter tuning. Then, we benchmark our method with a small 160M model setup from \citep{ahn25_dion}; and compare MuonBP to AdamW, Muon (with full all-gather at every step), BlockMuon, and Dion. FSDP2 shards optimizer states in 0th dimension to different workers, resulting in increased communication for Muon. In the second setting we use ZeRO layer-wise \citep{rajbhandari19_zero} optimizer state sharding and TP. Here, we primarily compare MuonBP (\Cref{alg:muonbp}), BlockMuon (\Cref{alg:muonbp} with $P=\infty$), and baseline Muon (with full all-gather every step), under billion scale model sizes and longer tokens. Both experiment groups are meant to showcase the accuracy and throughput improvements brought about by our algorithm in realistic pretraining settings.

\subsection{Training with dim-$0$ data sharding} %

\textbf{Experimental setting and hyperparameters.} We augment the Modded-NanoGPT codebase~\citep{modded_nanogpt_2024} with SimpleFSDP~\citep{simplefsdp} and TP~\citep{megatron-lm} via the DTensor API integrated into PyTorch 2.0~\citep{dtensor}. We use the \textbf{FineWeb} dataset~\citep{penedo2024fineweb} for the experiments in this section.

\Cref{fig:period_vs_degree} shows the effect of varying both the TP degree and the period of orthogonalization on the final validation loss achieved. We use the modernized GPT-style architecture of Modded-NanoGPT~\citep{modded_nanogpt_2024} for this experiment. We use 12 layers, 6 attention heads, and a model dimension of $768$. We use the smaller model size (280M) in order to run an extensive grid search. Following the codebase, we use separate learning rates for Adam (applied to 1D parameters and the input embedding) and Muon, and do not use the RMS norm matching trick of \Cref{sub:layerwise}. We tune the Adam/Muon learning rates over the grid $(0.0001, 0.001, 0.01, 0.1, 0.5, 1, 2, 4, 8) * \mathrm{base}$ where $\mathrm{base} = 0.012$ for Adam and $\mathrm{base} = 0.08$ for Muon. We see that decreasing the block period directly decreases the loss for all the degrees we consider, with the effect most pronounced at the highest degrees.

We use the Dion codebase~\citep{ahn25_dion} for the second comparison and train a 160M parameter model with a batch size of $1024$, sequence length $1024$, model dimension $768$, $12$ layers and $12$ attention heads per attention layer. We use the WSD schedule with no warmup and a 20\% cooldown. The learning rate is $0.02$ for all methods (with AdamW rms norm matching) except for AdamW, where we found by a grid search that $0.008$ performed better. We use TP degree of $2$ and FSDP degree of $4$, and use Lion as the scalar optimizer in line with the codebase. The throughputs for all the methods were similar at this scale, although they were significantly lower compared to throughputs on Megatron-LM with layerwise sharding. We believe more experiments are needed to compare against Dion, particularly to integrate it into widely used open source frameworks such as Megatron-LM. We also plot the loss curves in Figure~\ref{fig:160m_dion} in \Cref{app:experiments}. \Cref{sec:dion-comparison} also gives a brief comparison of the cost of running MuonBP vs Dion from a theoretical perspective. We note that we only conducted the experimental comparison between Dion and MuonBP on a small scale, and that more work is needed to compare in this setting on a larger scale. We leave this to future work.

\begin{table}[htbp]
\centering
\begin{tabular}{lccccc}
\toprule
& \textbf{Muon} & \textbf{BlockMuon} & \textbf{MuonBP} & \textbf{Dion} & \textbf{AdamW} \\
\midrule
\textbf{Min Validation Loss} & 3.36 & 3.36 & 3.34 & 3.37 & 3.62 \\
\textbf{Min Training Loss} & 3.02 & 2.97 & 2.94 & 2.95 & 3.21 \\
\textbf{Throughput (TFLOP/s/GPU)} & 50.90 & 51.77 & 51.40 & 45.64 & 52.80 \\
\bottomrule
\end{tabular}
\caption{Training/validation losses and throughput on 160M model trained with TP=2 and FSDP=4.}
\label{tab:dion160_results}
\end{table}
\vspace{-0.2cm}

\subsection{Training with layerwise sharding}
\label{sub:layerwise}

\textbf{Experimental setting and hyperparameters.} We built upon the Distributed Muon implementation of~\citep{liu25_muon_is_scalab_llm_train} in the Megatron-LM framework~\citep{megatron-lm} and modified it to support block-wise tensor parallel orthogonalization with periodic full orthogonalization. We used Llama-style model architecture~\citep{touvron23_llama,touvron23_llama1} with RoPE~\citep{su2024roformer}, SwiGLU activation~\citep{shazeer2020gluvariantsimprovetransformer}, and mixed-precision training (bf16 computations with fp32 master weights). We use the Llama 3 tokenizer~\citep{grattafiori2024llama} on the \textbf{OpenWebText} dataset~\citep{Gokaslan2019OpenWeb} for experiments at the 0.9-1.2B scale and the \textbf{FineWeb} data~\citep{penedo2024fineweb} for experiments at the 8B scale. For the experiments in this section, we used nodes that have 8xA100 GPUs with 40GB of RAM each.

We train models in the following scales and settings: 960M and 1.26B, 1.26B with extended training (3x Chinchilla tokens), and 8B parameters with large ($1.2 \times 10^{-3}$) and small ($0.6 \times 10^{-3}$) learning rates. The models below 8B in scale use a batch size of 128 sequences and each run takes place on a single node with 2 DP groups and 4 TP nodes per group. The 8B model uses a batch size of 256 sequences with 4 DP groups distributed across 4 nodes and 8 TP nodes per group. As discussed in \Cref{subsec:block-periodic-analysis}, we use AdamW RMS norm matching for learning rate scaling~\citep{liu25_muon_is_scalab_llm_train}. All of the architectural details are provided in \Cref{tab:_hparams} in the supplementary material and more details on our choices of hyperparameters, learning rate, and learning rate scheduling are found in the appendix. We do the two learning rate runs at 8B scale to show that with the larger base learning rate, even after adjusting for blocking with the RMS norm matching, BlockMuon becomes unstable.

\begin{wrapfigure}{r}{0.3\textwidth}
    \centering
    \includegraphics[width=0.3\textwidth]{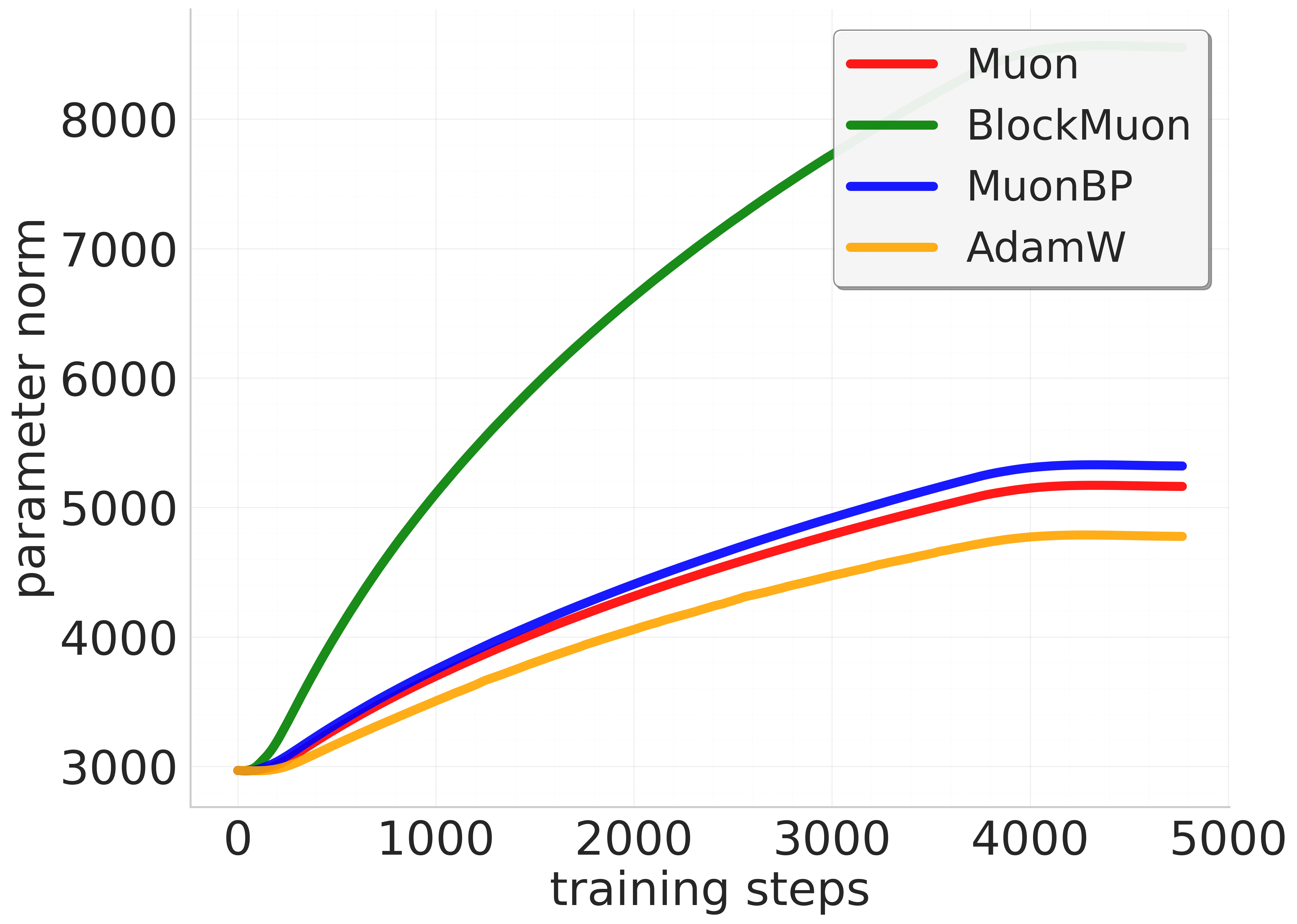}
    \vspace{-0.2cm}
    \caption{Parameter norm vs iteration of competing methods.}
    \label{fig:param_norm}
\end{wrapfigure}

\begin{figure}[htbp]
    \centering
    \subfigure[Validation ppl. vs. runtime]{
        \includegraphics[width=0.45\textwidth]{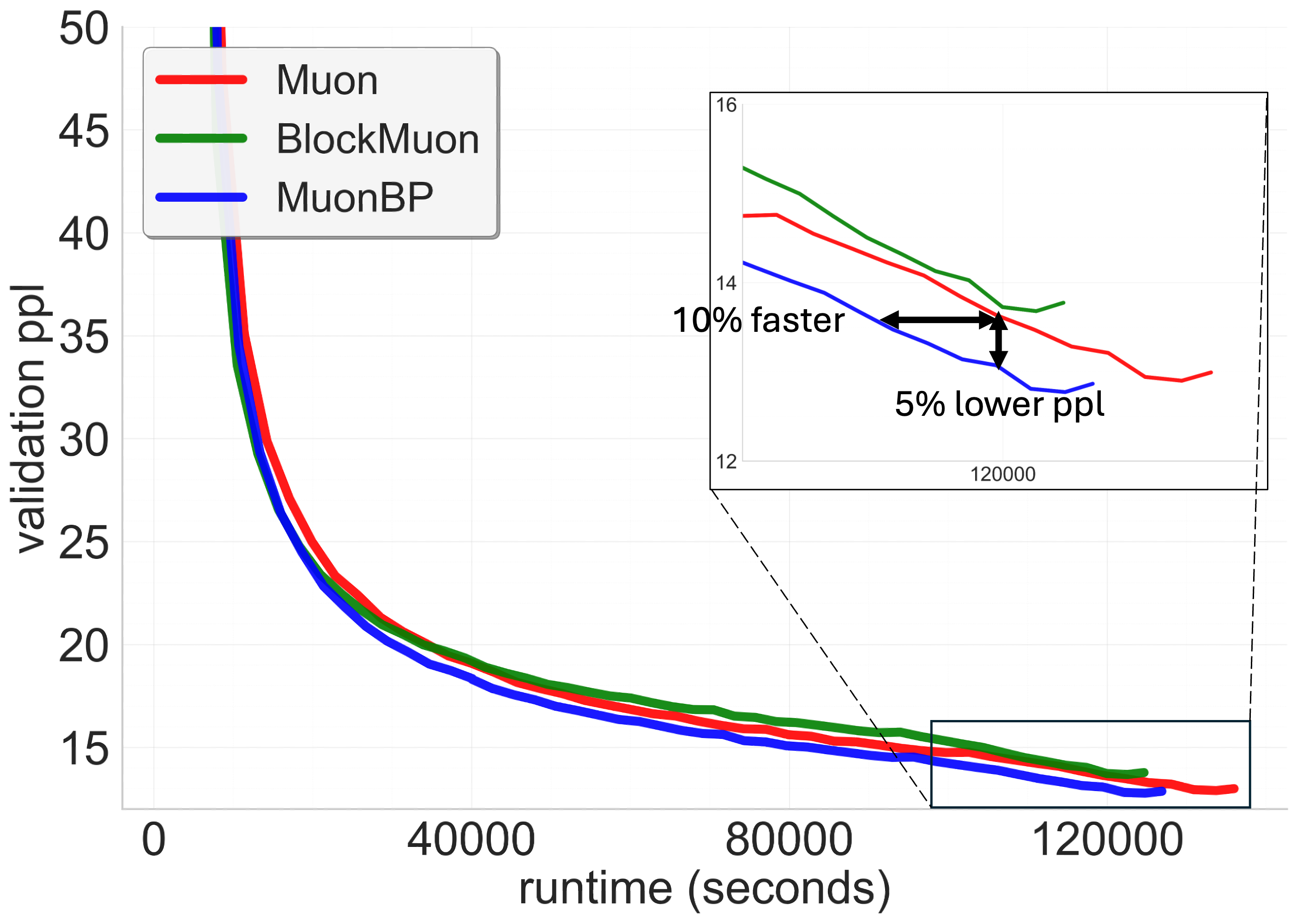}
        \label{fig:8bm_val_step}
    }
    \subfigure[Validation ppl. vs. runtime for large lr]{
        \includegraphics[width=0.45\textwidth]{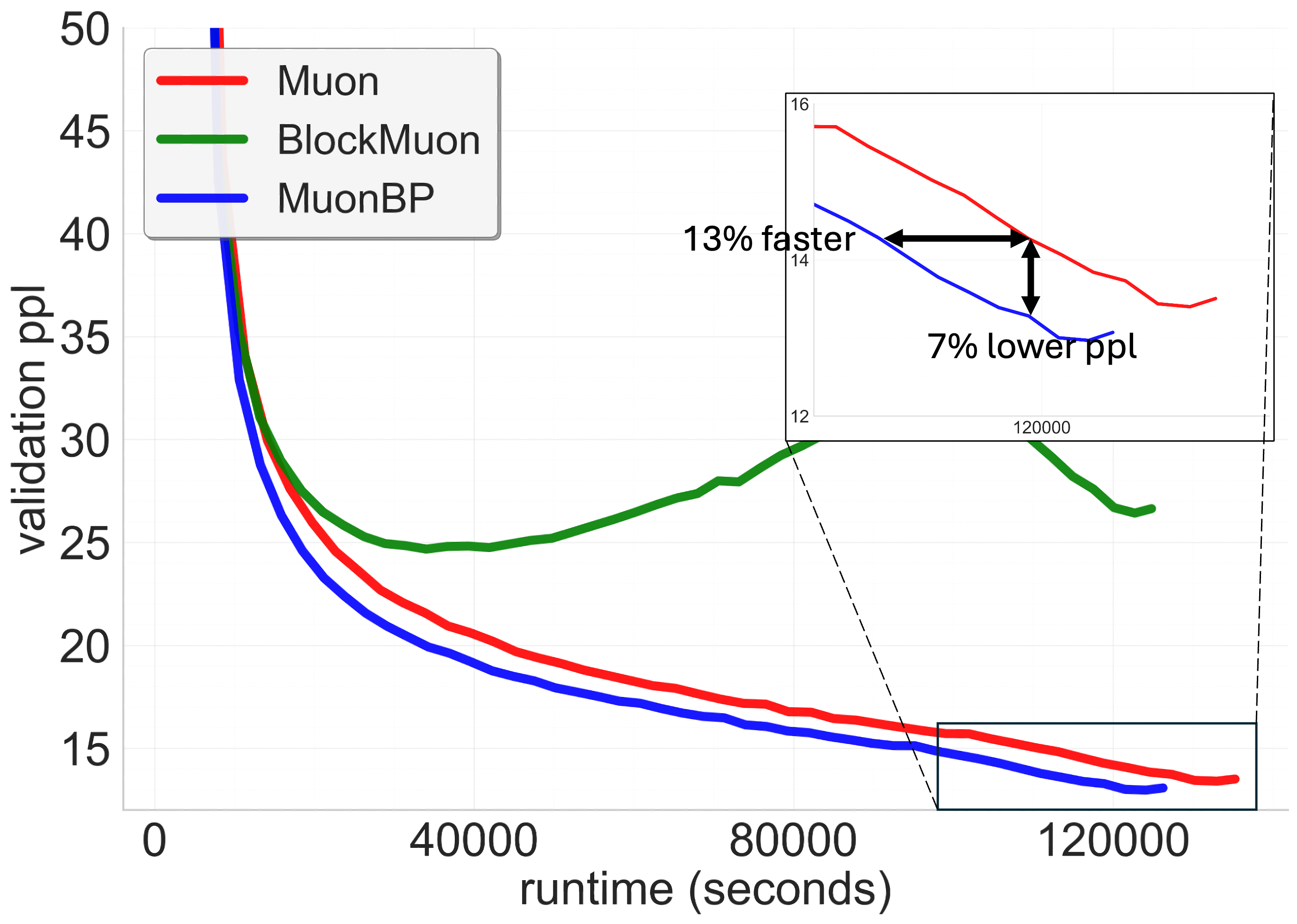}
        \label{fig:8bm_val_step}
    }
    \caption{8B model validation perplexities. Comparison of Muon, BlockMuon, and MuonBP across wall-clock time. For a target validation perplexity our method is $\sim10-13\%$ faster in terms of the wall-clock time to reach it, and for a given time point before the learning rate decay our method results in $\sim5-7\%$ lower perplexity compared to the baseline.}
    \label{fig:val_ppl}
\end{figure}

\begin{table*}[t]
\centering
\caption{Validation and training perplexity (\emph{lower is better}).
Columns show models; each model has validation and training sub-columns.
Best perplexities within each model size are in \textbf{bold}.}
\label{tab:results}
\resizebox{0.9\textwidth}{!}{%
\begin{tabular}{
    l
    S[table-format=2.2] S[table-format=2.2]
    S[table-format=2.2] S[table-format=2.2]
    S[table-format=2.2] S[table-format=2.2]
    S[table-format=2.2] S[table-format=2.2]
    S[table-format=2.2] S[table-format=2.2]
}
\toprule
 & \multicolumn{2}{c}{960M} &
   \multicolumn{2}{c}{1.2B} &
   \multicolumn{2}{c}{1.2B$^{\text{a}}$} &
   \multicolumn{2}{c}{8B} &
   \multicolumn{2}{c}{8B$^{\text{b}}$} \\
Method &
 {Val} & {Train} &
 {Val} & {Train} &
 {Val} & {Train} &
 {Val} & {Train} &
 {Val} & {Train} \\
\midrule
Muon &
 15.33 & 13.44 &
 14.13 & 12.83 &
 12.62 & 10.88 &
 12.90 & 11.74 &
 13.40 & 12.39 \\
BlockMuon &
 20.29 & 18.08 &
 16.28 & 14.86 &
 13.29 & 11.51 &
 13.68 & 12.62 &
 24.68 & 23.17 \\
MuonBP &
 \textbf{15.12} & \textbf{13.21} &
 \textbf{13.78} & \textbf{12.44} &
 \textbf{12.45} & \textbf{10.71} &
 \textbf{12.77} & \textbf{11.59} &
 \textbf{12.97} & \textbf{11.93} \\
Adam &
 22.51 & 20.16 &
 {--} & {--} &
 15.03 & 13.25 &
 14.47 & 13.48 &
 {--} & {--} \\
\bottomrule
\end{tabular}} \\
\vspace{2mm}
\textsuperscript{a}\,Three-times data with large learning rate (small learning rate for Adam). \textsuperscript{b}\,Large learning rate.
\vspace{-17pt}
\end{table*}
\textbf{Results.} Resulting perplexities are summarized in \Cref{tab:results}. The loss curves for all models are deferred to \Cref{app:experiments}.
\Cref{tab:results} shows that BlockMuon performs worse in both training and validation loss across all model scales considered. This still holds true for relatively long (~3x Chinchilla) training, as the parameter norms grow a lot more for the fully blocked version of Muon compared to either baseline or blocking with intermittent orthogonalization. Note that this happens despite the fact that we use AdamW RMS norm matching scaled with the dimensions of the sliced blocks (as outlined in Section~\Cref{sec:algorithms}). We observe that we have to use smaller learning rates to keep BlockMuon stable compared to Muon and MuonBP; This is potentially a symptom of the instability we observe when using BlockMuon. We do not observe instability when using smaller learning rates (\Cref{fig:8b_smaller}), but then baseline Muon, BlockMuon, and MuonBP all lead to the same suboptimal performance. Adam consistently underperformed across all scales, failing to converge at the larger learning rates (hence its absence in some columns). However, the performance gap between Adam and Muon also narrowed substantially with scale: the relative perplexity improvement decreased from 31.9\% at 960M (22.51 vs 15.33) to 10.9\% at 8B (14.47 vs 12.90), indicating that Adam's disadvantage diminishes at larger scales. This underscores the importance of MuonBP, as at larger scales it is then critical to minimize throughput losses as much as possible—even a 7\% improvement in throughput would be highly significant.  In \Cref{fig:val_ppl}, we plot the validation ppl vs wall-clock time. We characterize our method's performance with respect to two related metric: firstly, given a target ppl value our method reaches considerably faster in wall-clock time; secondly given a runtime budget our method results in lower validation ppl (we give exemplary points in \Cref{fig:val_ppl}). These two views indicate the usefulness of MuonBP in practical scenarios.

Interestingly, overall, our method outperforms Muon despite doing less number of full orthogonalization, we believe this may be due to a regularization effect due to intermittency, we leave the analysis of this behavior as future work.
\begin{wraptable}[8]{r}{0.3\textwidth}
\centering
\caption{Average throughput (TFLOP/s/GPU) for each method and model.}
\label{tab:tput}
\resizebox{0.3\textwidth}{!}{%
\begin{tabular}{
    l
    S[table-format=3.1]
    S[table-format=3.1]
    S[table-format=3.1]
}
\toprule
Method & {960M} & {1.2B} & {8B} \\
\midrule
Muon & 112.97 & 118.29 & 105.09 \\
BlockMuon & 115.43 & 120.14 & 114.75 \\
MuonBP & 113.54 & 119.79 & 113.37 \\
Adam & 117.21 & 120.20 & 117.30 \\
\bottomrule
\end{tabular}}
\end{wraptable}

\textbf{Throughput.} We report throughput numbers in \Cref{tab:tput}. We observe similar throughput across methods in smaller scale experiments as layer-wise sharding results in minimal all-gathers for the Muon. However, as the model scale increases the effect of all-gathers makes its presence felt. Consequently, in 8B model setting we observe a $\sim 8\%$ increase in throughput for our method compared to the Muon without any degradation in performance. This translates to hundreds of thousands of dollars saved in training costs in today's large-scale pretraining runs.

\section{Conclusion}

We have introduced a new algorithm, MuonBP, and analyzed its convergence properties. MuonBP shows promising performance in training models up to the 8B parameter scale compared to Muon, BlockMuon, and Adam. There are many questions still left: for example, we did not explore varying the period $P$ over the duration of training, or how we might adaptively tune it based on observed properties. Exploring the use of block orthogonalization with expert parallelism is also an important topic we leave to future work. \\

\printbibliography

@article{simplefsdp,
      title={{SimpleFSDP}: Simpler Fully Sharded Data Parallel with torch.compile}, 
      author={Ruisi Zhang and Tianyu Liu and Will Feng and Andrew Gu and Sanket Purandare and Wanchao Liang and Francisco Massa},
      year={2024},
      journal={arXiv preprint arXiv:2411.00284},
      archivePrefix={arXiv},
      primaryClass={cs.DC},
      url={https://arxiv.org/abs/2411.00284}, 
}

@inproceedings{ainslie2023gqa,
    title = "{GQA}: Training Generalized Multi-Query Transformer Models from Multi-Head Checkpoints",
    author = "Ainslie, Joshua  and
      Lee-Thorp, James  and
      de Jong, Michiel  and
      Zemlyanskiy, Yury  and
      Lebr{\'o}n, Federico  and
      Sanghai, Sumit",
    editor = "Bouamor, Houda  and
      Pino, Juan  and
      Bali, Kalika",
    booktitle = "Proceedings of the 2023 Conference on Empirical Methods in Natural Language Processing",
    month = dec,
    year = "2023",
    address = "Singapore",
    publisher = "Association for Computational Linguistics",
    url = "https://aclanthology.org/2023.emnlp-main.298/",
    doi = "10.18653/v1/2023.emnlp-main.298",
    pages = "4895--4901",
    abstract = "Multi-query attention (MQA), which only uses a single key-value head, drastically speeds up decoder inference. However, MQA can lead to quality degradation, and moreover it may not be desirable to train a separate model just for faster inference. We (1) propose a recipe for uptraining existing multi-head language model checkpoints into models with MQA using 5{\%} of original pre-training compute, and (2) introduce grouped-query attention (GQA), a generalization of multi-query attention which uses an intermediate (more than one, less than number of query heads) number of key-value heads. We show that uptrained GQA achieves quality close to multi-head attention with comparable speed to MQA."
}

@article{megatron-lm,
  title={Megatron-LM: Training Multi-Billion Parameter Language Models Using Model Parallelism},
  author={Shoeybi, Mohammad and Patwary, Mostofa and Puri, Raul and LeGresley, Patrick and Casper, Jared and Catanzaro, Bryan},
  journal={arXiv preprint arXiv:1909.08053},
  year={2019}
}

@misc{dtensor,
  author = {Wanchao Liang},
  title = {PyTorch DTensor RFC},
  year = {2023},
  howpublished = {GitHub Issue},
  url = {https://github.com/pytorch/pytorch/issues/88838},
  note = {GitHub Issue \#88838}
}

@article{touvron23_llama1,
  author       = {Touvron, Hugo and Martin, Louis and Stone, Kevin and Albert, Peter and Almahairi, Amjad and Babaei, Yasmine and Bashlykov, Nikolay and Batra, Soumya and Bhargava, Prajjwal and Bhosale, Shruti and Bikel, Dan and Blecher, Lukas and Ferrer, Cristian Canton and Chen, Moya and Cucurull, Guillem and Esiobu, David and Fernandes, Jude and Fu, Jeremy and Fu, Wenyin and Fuller, Brian and Gao, Cynthia and Goswami, Vedanuj and Goyal, Naman and Hartshorn, Anthony and Hosseini, Saghar and Hou, Rui and Inan, Hakan and Kardas, Marcin and Kerkez, Viktor and Khabsa, Madian and Kloumann, Isabel and Korenev, Artem and Koura, Punit Singh and Lachaux, Marie-Anne and Lavril, Thibaut and Lee, Jenya and Liskovich, Diana and Lu, Yinghai and Mao, Yuning and Martinet, Xavier and Mihaylov, Todor and Mishra, Pushkar and Molybog, Igor and Nie, Yixin and Poulton, Andrew and Reizenstein, Jeremy and Rungta, Rashi and Saladi, Kalyan and Schelten, Alan and Silva, Ruan and Smith, Eric Michael and Subramanian, Ranjan and Tan, Xiaoqing Ellen and Tang, Binh and Taylor, Ross and Williams, Adina and Kuan, Jian Xiang and Xu, Puxin and Yan, Zheng and Zarov, Iliyan and Zhang, Yuchen and Fan, Angela and Kambadur, Melanie and Narang, Sharan and Rodriguez, Aurelien and Stojnic, Robert and Edunov, Sergey and Scialom, Thomas},
  url          = {https://arXiv.org/abs/2307.09288},
  year         = {2023},
  journal      = {arXiv preprint},
  title        = {Llama 2: Open Foundation and Fine-Tuned Chat Models},
  volume       = {abs/2307.09288},
}

@article{hagele2024scaling,
  title={Scaling laws and compute-optimal training beyond fixed training durations},
  author={H{\"a}gele, Alex and Bakouch, Elie and Kosson, Atli and Von Werra, Leandro and Jaggi, Martin},
  journal={Advances in Neural Information Processing Systems},
  volume={37},
  pages={76232--76264},
  year={2024}
}

@article{su2024roformer,
  title={{RoFormer}: Enhanced transformer with rotary position embedding},
  author={Su, Jianlin and Ahmed, Murtadha and Lu, Yu and Pan, Shengfeng and Bo, Wen and Liu, Yunfeng},
  journal={Neurocomputing},
  volume={568},
  pages={127063},
  year={2024},
  publisher={Elsevier}
}

@misc{shazeer2020gluvariantsimprovetransformer,
      title={{GLU} Variants Improve Transformer},
      author={Noam Shazeer},
      year={2020},
      eprint={2002.05202},
      archivePrefix={arXiv},
      primaryClass={cs.LG},
      url={https://arxiv.org/abs/2002.05202},
}

@article{grattafiori2024llama,
  title={The {Llama} 3 herd of models},
  author={Grattafiori, Aaron and Dubey, Abhimanyu and Jauhri, Abhinav and Pandey, Abhinav and Kadian, Abhishek and Al-Dahle, Ahmad and Letman, Aiesha and Mathur, Akhil and Schelten, Alan and Vaughan, Alex and others},
  journal={arXiv preprint arXiv:2407.21783},
  year={2024}
}

@misc{Gokaslan2019OpenWeb,
    title={{OpenWebText} Corpus},
    author={Gokaslan, Aaron and Cohen, Vanya and Pavlick, Ellie and Tellex, Stefanie},
    howpublished={\url{http://Skylion007.github.io/OpenWebTextCorpus}},
    year={2019}
}

@article{liu25_muon_is_scalab_llm_train,
  author          = {Liu, Jingyuan and Su, Jianlin and Yao, Xingcheng and Jiang,
                  Zhejun and Lai, Guokun and Du, Yulun and Qin, Yidao and Xu,
                  Weixin and Lu, Enzhe and Yan, Junjie and Chen, Yanru and
                  Zheng, Huabin and Liu, Yibo and Liu, Shaowei and Yin, Bohong
                  and He, Weiran and Zhu, Han and Wang, Yuzhi and Wang, Jianzhou
                  and Dong, Mengnan and Zhang, Zheng and Kang, Yongsheng and
                  Zhang, Hao and Xu, Xinran and Zhang, Yutao and Wu, Yuxin and
                  Zhou, Xinyu and Yang, Zhilin},
  title           = {Muon Is Scalable for {LLM} Training},
  journal         = {arXiv preprint arXiv:502.16982},
  year            = 2025,
  url             = {http://arxiv.org/abs/2502.16982},
  abstract        = {Recently, the Muon optimizer based on matrix
                  orthogonalization has demonstrated strong results in training
                  small-scale language models, but the scalability to larger
                  models has not been proven. We identify two crucial techniques
                  for scaling up Muon: (1) adding weight decay and (2) carefully
                  adjusting the per-parameter update scale. These techniques
                  allow Muon to work out-of-the-box on large-scale training
                  without the need of hyper-parameter tuning. Scaling law
                  experiments indicate that Muon achieves $\sim\!2\times$
                  computational efficiency compared to AdamW with compute
                  optimal training. Based on these improvements, we introduce
                  Moonlight, a 3B/16B-parameter Mixture-of-Expert (MoE) model
                  trained with 5.7T tokens using Muon. Our model improves the
                  current Pareto frontier, achieving better performance with
                  much fewer training FLOPs compared to prior models. We
                  open-source our distributed Muon implementation that is memory
                  optimal and communication efficient. We also release the
                  pretrained, instruction-tuned, and intermediate checkpoints to
                  support future research.},
  archivePrefix   = {arXiv},
  eprint          = {2502.16982},
  primaryClass    = {cs.LG},
  file = {/home/robo/bib-lib/pdfs/liu25_muon_is_scalab_llm_train.pdf}
  }

@article{penedo2024fineweb,
  title={The fineweb datasets: Decanting the web for the finest text data at scale},
  author={Penedo, Guilherme and Kydl{\'\i}{\v{c}}ek, Hynek and Lozhkov, Anton and Mitchell, Margaret and Raffel, Colin A and Von Werra, Leandro and Wolf, Thomas and others},
  journal={Advances in Neural Information Processing Systems},
  volume={37},
  pages={30811--30849},
  year={2024}
}

@article{shah2025practical,
  title={Practical efficiency of {Muon} for pretraining},
  author={{Essential AI} and Ishaan Shah and Anthony M. Polloreno and Karl Stratos and Philip Monk and Adarsh Chaluvaraju and Andrew Hojel and Andrew Ma and Anil Thomas and Ashish Tanwer and Darsh J Shah and Khoi Nguyen and Kurt Smith and Michael Callahan and Michael Pust and Mohit Parmar and Peter Rushton and Platon Mazarakis and Ritvik Kapila and Saurabh Srivastava and Somanshu Singla and Tim Romanski and Yash Vanjani and Ashish Vaswani},
  journal={arXiv preprint arXiv:2505.02222},
  year={2025}
}

@misc{essentialai2025layersharding,
  title={Layer Sharding for Large-Scale Training with {Muon}},
  author={{Essential AI}},
  year={2025},
  month={May},
  day={15},
  howpublished={Essential AI Blog},
  url={https://www.essential.ai/blog/infra}
}

@misc{bernstein2025deriving,
  author = {Jeremy Bernstein},
  title = {Deriving {Muon}},
  url = {https://jeremybernste.in/writing/deriving-muon},
  year = 2025
}

@article{kovalev25_under_gradien_orthog_deep_learn,
  title={Understanding gradient orthogonalization for deep learning via non-euclidean trust-region optimization},
  author={Kovalev, Dmitry},
  journal={arXiv preprint arXiv:2503.12645},
  year={2025}
}

@article{ahn25_dion,
  author          = {Ahn, Kwangjun and Xu, Byron and Abreu, Natalie and
                  Langford, John},
  title           = {Dion: Distributed Orthonormalized Updates},
  journal         = {arXiv preprint arXiv:2504.05295},
  year            = 2025,
  url             = {http://arxiv.org/abs/2504.05295v2},
  }

@inproceedings{alistarh16_qsgd,
  author    = {Alistarh, Dan and Grubic, Demjan and Li, Jerry and Tomioka, Ryota and Vojnovic, Milan},
  editor    = {Guyon, Isabelle and von Luxburg, Ulrike and Bengio, Samy and Wallach, Hanna M. and Fergus, Rob and Vishwanathan, S. V. N. and Garnett, Roman},
  url       = {https://proceedings.neurips.cc/paper/2017/hash/6c340f25839e6acdc73414517203f5f0-Abstract.html},
  booktitle = {Advances in Neural Information Processing Systems 30: Annual Conference on Neural Information Processing Systems 2017, December 4-9, 2017, Long Beach, CA, {USA}},
  year      = {2017},
  pages     = {1709--1720},
  title     = {{QSGD:} Communication-Efficient {SGD} via Gradient Quantization and Encoding},
}

@article{li2025note,
  title={A Note on the Convergence of Muon},
  author={Li, Jiaxiang and Hong, Mingyi},
  journal={arXiv preprint arXiv:2502.02900},
  year={2025}
}

@inproceedings{loshchilov2018decoupled,
title={Decoupled Weight Decay Regularization},
author={Ilya Loshchilov and Frank Hutter},
booktitle={International Conference on Learning Representations},
year={2019},
url={https://openreview.net/forum?id=Bkg6RiCqY7},
}

@article{kairouz19_advan_open_probl_feder_learn,
  author       = {Kairouz, Peter and McMahan, H. Brendan and Avent, Brendan and Bellet, Aurélien and Bennis, Mehdi and Bhagoji, Arjun Nitin and Bonawitz, Kallista and Charles, Zachary and Cormode, Graham and Cummings, Rachel and D'Oliveira, Rafael G. L. and Eichner, Hubert and Rouayheb, Salim El and Evans, David and Gardner, Josh and Garrett, Zachary and Gascón, Adrià and Ghazi, Badih and Gibbons, Phillip B. and Gruteser, Marco and Harchaoui, Zaid and He, Chaoyang and He, Lie and Huo, Zhouyuan and Hutchinson, Ben and Hsu, Justin and Jaggi, Martin and Javidi, Tara and Joshi, Gauri and Khodak, Mikhail and Konečný, Jakub and Korolova, Aleksandra and Koushanfar, Farinaz and Koyejo, Sanmi and Lepoint, Tancrède and Liu, Yang and Mittal, Prateek and Mohri, Mehryar and Nock, Richard and Özgür, Ayfer and Pagh, Rasmus and Raykova, Mariana and Qi, Hang and Ramage, Daniel and Raskar, Ramesh and Song, Dawn and Song, Weikang and Stich, Sebastian U. and Sun, Ziteng and Suresh, Ananda Theertha and Tramèr, Florian and Vepakomma, Praneeth and Wang, Jianyu and Xiong, Li and Xu, Zheng and Yang, Qiang and Yu, Felix X. and Yu, Han and Zhao, Sen},
  url          = {https://arXiv.org/abs/1912.04977},
  year         = {2019},
  journal      = {arXiv preprint},
  title        = {Advances and Open Problems in Federated Learning},
  volume       = {abs/1912.04977},
}

@inproceedings{konecny16_feder_learn,
  author    = {{Konečný}, Jakub and {Brendan McMahan}, H. and {Yu}, Felix X. and Richtárik, Peter and {Theertha Suresh}, Ananda and {Bacon}, Dave},
  booktitle = {NIPS Private Multi-Party Machine Learning Workshop},
  year      = {2016},
  keywords  = {Computer Science - Machine Learning},
  title     = {{Federated Learning: Strategies for Improving Communication Efficiency}},
}

@article{horvath19_stoch_distr_learn_with_gradien,
  author       = {Horváth, Samuel and Kovalev, Dmitry and Mishchenko, Konstantin and Stich, Sebastian and Richtárik, Peter},
  url          = {https://arXiv.org/abs/1904.05115},
  year         = {2019},
  journal      = {arXiv preprint},
  title        = {Stochastic Distributed Learning With Gradient Quantization and Variance Reduction},
  volume       = {abs/1904.05115},
}

@inproceedings{stich18_local_sgd_conver_fast_commun_littl,
  author    = {Stich, Sebastian U.},
  publisher = {OpenReview.net},
  url       = {https://openreview.net/forum?id=S1g2JnRcFX},
  booktitle = {7th International Conference on Learning Representations, {ICLR} 2019, New Orleans, LA, USA, May 6-9, 2019},
  year      = {2019},
  title     = {Local {SGD} Converges Fast and Communicates Little},
}

@inproceedings{kunstner23_noise,
  author    = {Kunstner, Frederik and Chen, Jacques and Lavington, Jonathan Wilder and Schmidt, Mark},
  url       = {https://openreview.net/forum?id=a65YK0cqH8g},
  booktitle = {The Eleventh International Conference on Learning Representations},
  year      = {2023},
  title     = {Noise Is Not the Main Factor Behind the Gap Between Sgd and Adam on Transformers, But Sign Descent Might Be},
}

@inproceedings{kingma14_adam,
  author    = {Kingma, Diederik P. and Ba, Jimmy},
  editor    = {Bengio, Yoshua and LeCun, Yann},
  url       = {http://arXiv.org/abs/1412.6980},
  booktitle = {3rd International Conference on Learning Representations, {ICLR} 2015, San Diego, CA, USA, May 7-9, 2015, Conference Track Proceedings},
  year      = {2015},
  title     = {Adam: {A} Method for Stochastic Optimization},
}

@inproceedings{loshchilov17_decoup_weigh_decay_regul,
  author    = {Loshchilov, Ilya and Hutter, Frank},
  publisher = {OpenReview.net},
  url       = {https://openreview.net/forum?id=Bkg6RiCqY7},
  booktitle = {7th International Conference on Learning Representations, {ICLR} 2019, New Orleans, LA, USA, May 6-9, 2019},
  year      = {2019},
  title     = {Decoupled Weight Decay Regularization},
}

@article{touvron23_llama,
  author       = {Touvron, Hugo and Lavril, Thibaut and Izacard, Gautier and Martinet, Xavier and Lachaux, Marie-Anne and Lacroix, Timothée and Rozière, Baptiste and Goyal, Naman and Hambro, Eric and Azhar, Faisal and Rodriguez, Aurelien and Joulin, Armand and Grave, Edouard and Lample, Guillaume},
  url          = {https://arXiv.org/abs/2302.13971},
  year         = {2023},
  journal      = {arXiv preprint},
  title        = {Llama: Open and Efficient Foundation Language Models},
  volume       = {abs/2302.13971},
}

@article{douillard23_diloc,
  author       = {Douillard, Arthur and Feng, Qixuan and Rusu, Andrei A. and Chhaparia, Rachita and Donchev, Yani and Kuncoro, Adhiguna and Ranzato, Marc'Aurelio and Szlam, Arthur and Shen, Jiajun},
  url          = {https://arXiv.org/abs/2311.08105},
  year         = {2023},
  journal      = {arXiv preprint},
  title        = {{DiLoCo}: Distributed Low-Communication Training of Language Models},
  volume       = {abs/2311.08105},
}

@article{bernstein24_old_optim_new_norm,
  author       = {Bernstein, Jeremy and Newhouse, Laker},
  url          = {https://arXiv.org/abs/2409.20325},
  year         = {2024},
  journal      = {arXiv preprint},
  title        = {Old Optimizer, New Norm: an Anthology},
  volume       = {abs/2409.20325},
}

@article{xie25_struc_precon_adapt_optim,
  author          = {Xie, Shuo and Wang, Tianhao and Reddi, Sashank and Kumar,
                  Sanjiv and Li, Zhiyuan},
  title           = {Structured Preconditioners in Adaptive Optimization: a
                  Unified Analysis},
  journal         = {arXiv preprint arXiv:2503.10537},
  year            = 2025,
  url             = {http://arxiv.org/abs/2503.10537v1},
  abstract        = {We present a novel unified analysis for a broad class of
                  adaptive optimization algorithms with structured (e.g.,
                  layerwise, diagonal, and kronecker-factored) preconditioners
                  for both online regret minimization and offline convex
                  optimization. Our analysis not only provides matching rate to
                  several important structured preconditioned algorithms
                  including diagonal AdaGrad, full-matrix AdaGrad, and
                  AdaGrad-Norm, but also gives an improved convergence rate for
                  a one-sided variant of Shampoo over that of original Shampoo.
                  Interestingly, more structured preconditioners (e.g., diagonal
                  Adagrad, AdaGrad-Norm which use less space and compute) are
                  often presented as computationally efficient approximations to
                  full-matrix Adagrad, aiming for improved optimization
                  performance through better approximations. Our unified
                  analysis challenges this prevailing view and reveals, perhaps
                  surprisingly, that more structured preconditioners, despite
                  using less space and computation per step, can outperform
                  their less structured counterparts. To demonstrate this, we
                  show that one-sided Shampoo, which is relatively much cheaper
                  than full-matrix AdaGrad could outperform it both
                  theoretically and experimentally.},
  archivePrefix   = {arXiv},
  eprint          = {2503.10537},
  primaryClass    = {cs.LG},
  file = {/home/robo/bib-lib/pdfs/xie25_struc_precon_adapt_optim.pdf}
  }

@article{kasimbeg25_accel_neural_networ_train,
  title={Accelerating neural network training: An analysis of the AlgoPerf competition},
  author={Kasimbeg, Priya and Schneider, Frank and Eschenhagen, Runa and Bae, Juhan and Sastry, Chandramouli Shama and Saroufim, Mark and Feng, Boyuan and Wright, Less and Yang, Edward Z and Nado, Zachary and others},
  journal={arXiv preprint arXiv:2502.15015},
  year={2025}
}

@inproceedings{gupta18_shamp,
  title={Shampoo: Preconditioned stochastic tensor optimization},
  author={Gupta, Vineet and Koren, Tomer and Singer, Yoram},
  booktitle={International Conference on Machine Learning},
  pages={1842--1850},
  year={2018},
  organization={PMLR}
}

@article{shi23_distr_data_paral_pytor_implem,
  author          = {Shi, Hao-Jun Michael and Lee, Tsung-Hsien and Iwasaki,
                  Shintaro and Gallego-Posada, Jose and Li, Zhijing and
                  Rangadurai, Kaushik and Mudigere, Dheevatsa and Rabbat,
                  Michael},
  title           = {A Distributed Data-Parallel Pytorch Implementation of the
                  Distributed Shampoo Optimizer for Training Neural Networks
                  At-Scale},
  journal         = {arXiv preprint arXiv:2309.06497},
  year            = 2023,
  url             = {http://arxiv.org/abs/2309.06497v1},
  abstract        = {Shampoo is an online and stochastic optimization algorithm
                  belonging to the AdaGrad family of methods for training neural
                  networks. It constructs a block-diagonal preconditioner where
                  each block consists of a coarse Kronecker product
                  approximation to full-matrix AdaGrad for each parameter of the
                  neural network. In this work, we provide a complete
                  description of the algorithm as well as the performance
                  optimizations that our implementation leverages to train deep
                  networks at-scale in PyTorch. Our implementation enables fast
                  multi-GPU distributed data-parallel training by distributing
                  the memory and computation associated with blocks of each
                  parameter via PyTorch's DTensor data structure and performing
                  an AllGather primitive on the computed search directions at
                  each iteration. This major performance enhancement enables us
                  to achieve at most a 10 \% performance reduction in per-step
                  wall-clock time compared against standard
                  diagonal-scaling-based adaptive gradient methods. We validate
                  our implementation by performing an ablation study on training
                  ImageNet ResNet50, demonstrating Shampoo's superiority over
                  standard training recipes with minimal hyperparameter tuning.},
  archivePrefix   = {arXiv},
  eprint          = {2309.06497},
  primaryClass    = {cs.LG},
  file = {/home/robo/bib-lib/pdfs/shi23_distr_data_paral_pytor_implem.pdf}
  }

@misc{modded_nanogpt_2024,
  author       = {Keller Jordan and Jeremy Bernstein and Brendan Rappazzo and
                  \@fernbear.bsky.social and Boza Vlado and You Jiacheng and
                  Franz Cesista and Braden Koszarsky and \@Grad62304977},
  title        = {modded-nanogpt: Speedrunning the NanoGPT baseline},
  year         = {2024},
  url          = {https://github.com/KellerJordan/modded-nanogpt}
}

@article{bernstein24_modul_dualit_deep_learn,
  author          = {Bernstein, Jeremy and Newhouse, Laker},
  title           = {Modular Duality in Deep Learning},
  journal         = {arXiv preprint arXiv:2410.21265},
  year            = 2024,
  url             = {http://arxiv.org/abs/2410.21265v2},
}

@misc{jordan2024muon,
  author       = {Keller Jordan and Yuchen Jin and Vlado Boza and Jiacheng You and
                  Franz Cesista and Laker Newhouse and Jeremy Bernstein},
  title        = {Muon: An optimizer for hidden layers in neural networks},
  year         = {2024},
  url          = {https://kellerjordan.github.io/posts/muon/}
}

@article{pethick25_train_deep_learn_model_with,
  author          = {Pethick, Thomas and Xie, Wanyun and Antonakopoulos, Kimon
                  and Zhu, Zhenyu and Silveti-Falls, Antonio and Cevher, Volkan},
  title           = {Training Deep Learning Models With Norm-Constrained {LMOs}},
  journal         = {arXiv preprint arXiv:2502.07529},
  year            = 2025,
  url             = {http://arxiv.org/abs/2502.07529},
  abstract        = {In this work, we study optimization methods that leverage
                  the linear minimization oracle (LMO) over a norm-ball. We
                  propose a new stochastic family of algorithms that uses the
                  LMO to adapt to the geometry of the problem and, perhaps
                  surprisingly, show that they can be applied to unconstrained
                  problems. The resulting update rule unifies several existing
                  optimization methods under a single framework. Furthermore, we
                  propose an explicit choice of norm for deep architectures,
                  which, as a side benefit, leads to the transferability of
                  hyperparameters across model sizes. Experimentally, we
                  demonstrate significant speedups on nanoGPT training using our
                  algorithm, Scion, without any reliance on Adam. The proposed
                  method is memory-efficient, requiring only one set of model
                  weights and one set of gradients, which can be stored in
                  half-precision. The code is available at
                  https://github.com/LIONS-EPFL/scion .},
  archivePrefix   = {arXiv},
  eprint          = {2502.07529},
  primaryClass    = {cs.LG},
}

@article{glentis25_minim_optim_desig_llm_pretr,
  author          = {Glentis, Athanasios and Li, Jiaxiang and Han, Andi and
                  Hong, Mingyi},
  title           = {A Minimalist Optimizer Design for {LLM} Pretraining},
  journal         = {arXiv preprint arXiv:2506.16659},
  year            = 2025,
  url             = {http://arxiv.org/abs/2506.16659v1},
  }

@inproceedings{rajbhandari19_zero,
author = {Rajbhandari, Samyam and Rasley, Jeff and Ruwase, Olatunji and He, Yuxiong},
title = {{ZeRO}: memory optimizations toward training trillion parameter models},
year = {2020},
isbn = {9781728199986},
publisher = {IEEE Press},
abstract = {Large deep learning models offer significant accuracy gains, but training billions to trillions of parameters is challenging. Existing solutions such as data and model parallelisms exhibit fundamental limitations to fit these models into limited device memory, while obtaining computation, communication and development efficiency. We develop a novel solution, Zero Redundancy Optimizer (ZeRO), to optimize memory, vastly improving training speed while increasing the model size that can be efficiently trained. ZeRO eliminates memory redundancies in data- and model-parallel training while retaining low communication volume and high computational granularity, allowing us to scale the model size proportional to the number of devices with sustained high efficiency. Our analysis on memory requirements and communication volume demonstrates: ZeRO has the potential to scale beyond 1 Trillion parameters using today's hardware.We implement and evaluate ZeRO: it trains large models of over 100B parameter with super-linear speedup on 400 GPUs, achieving throughput of 15 Petaflops. This represents an 8x increase in model size and 10x increase in achievable performance over state-of-the-art. In terms of usability, ZeRO can train large models of up to 13B parameters (e.g., larger than Megatron GPT 8.3B and T5 11B) without requiring model parallelism which is harder for scientists to apply. Last but not the least, researchers have used the system breakthroughs of ZeRO to create Turing-NLG, the world's largest language model at the time (17B parameters) with record breaking accuracy.},
booktitle = {Proceedings of the International Conference for High Performance Computing, Networking, Storage and Analysis},
articleno = {20},
numpages = {16},
location = {Atlanta, Georgia},
series = {SC '20}
}

@article{liang24_torch,
  title={{TorchTitan}: One-stop PyTorch native solution for production ready LLM pre-training},
  author={Liang, Wanchao and Liu, Tianyu and Wright, Less and
                  Constable, Will and Gu, Andrew and Huang, Chien-Chin and
                  Zhang, Iris and Feng, Wei and Huang, Howard and Wang, Junjie
                  and Purandare, Sanket and Nadathur, Gokul and Idreos, Stratos},
  journal={arXiv preprint arXiv:2410.06511},
  year={2024}
}

@article{zhao23_pytor_fsdp,
  author          = {Zhao, Yanli and Gu, Andrew and Varma, Rohan and Luo, Liang
                  and Huang, Chien-Chin and Xu, Min and Wright, Less and
                  Shojanazeri, Hamid and Ott, Myle and Shleifer, Sam and
                  Desmaison, Alban and Balioglu, Can and Damania, Pritam and
                  Nguyen, Bernard and Chauhan, Geeta and Hao, Yuchen and
                  Mathews, Ajit and Li, Shen},
  title           = {Pytorch {FSDP}: Experiences on Scaling Fully Sharded Data
                  Parallel},
  journal         = {arXiv preprint arXiv:2304.11277},
  year            = 2023,
  url             = {http://arxiv.org/abs/2304.11277v2},
}

@inbook{huang18_gpipe,
author = {Huang, Yanping and Cheng, Youlong and Bapna, Ankur and Firat, Orhan and Chen, Mia Xu and Chen, Dehao and Lee, HyoukJoong and Ngiam, Jiquan and Le, Quoc V. and Wu, Yonghui and Chen, Zhifeng},
title = {{GPipe}: efficient training of giant neural networks using pipeline parallelism},
year = {2019},
publisher = {Curran Associates Inc.},
address = {Red Hook, NY, USA},
abstract = {Scaling up deep neural network capacity has been known as an effective approach to improving model quality for several different machine learning tasks. In many cases, increasing model capacity beyond the memory limit of a single accelerator has required developing special algorithms or infrastructure. These solutions are often architecture-specific and do not transfer to other tasks. To address the need for efficient and task-independent model parallelism, we introduce GPipe, a pipeline parallelism library that allows scaling any network that can be expressed as a sequence of layers. By pipelining different sub-sequences of layers on separate accelerators, GPipe provides the flexibility of scaling a variety of different networks to gigantic sizes efficiently. Moreover, GPipe utilizes a novel batch-splitting pipelining algorithm, resulting in almost linear speedup when a model is partitioned across multiple accelerators. We demonstrate the advantages of GPipe by training large-scale neural networks on two different tasks with distinct network architectures: (i) Image Classification: We train a 557-million-parameter AmoebaNet model and attain a top-1 accuracy of 84.4\% on ImageNet-2012, (ii) Multilingual Neural Machine Translation: We train a single 6-billion-parameter, 128-layer Transformer model on a corpus spanning over 100 languages and achieve better quality than all bilingual models.},
booktitle = {Proceedings of the 33rd International Conference on Neural Information Processing Systems},
articleno = {10},
numpages = {10}
}

@article{amsel25_polar_expres,
  author          = {Amsel, Noah and Persson, David and Musco, Christopher and
                  Gower, Robert M.},
  title           = {The Polar Express: Optimal Matrix Sign Methods and Their
                  Application To the {Muon} Algorithm},
  journal         = {arXiv preprint arXiv:2505.16932},
  year            = 2025,
  url             = {http://arxiv.org/abs/2505.16932v2},
  }

@inproceedings{boreiko2025towards,
title={Towards understanding of orthogonalization in Muon},
author={Valentyn Boreiko and Zhiqi Bu and Sheng Zha},
booktitle={High-dimensional Learning Dynamics 2025},
year={2025},
url={https://openreview.net/forum?id=ppmyFtr9EW}
}

@inproceedings{martens2015optimizing,
  title={Optimizing neural networks with kronecker-factored approximate curvature},
  author={Martens, James and Grosse, Roger},
  booktitle={International conference on machine learning},
  pages={2408--2417},
  year={2015},
  organization={PMLR}
}

@article{an25_asgo,
  author          = {An, Kang and Liu, Yuxing and Pan, Rui and Ren, Yi and Ma,
                  Shiqian and Goldfarb, Donald and Zhang, Tong},
  title           = {{ASGO}: Adaptive Structured Gradient Optimization},
  journal         = {arXiv preprint arXiv:2503.20762},
  year            = 2025,
  url             = {http://arxiv.org/abs/2503.20762v2},
  }

@article{kimik2,
      title={{Kimi K2}: Open Agentic Intelligence}, 
      author={Kimi-AI and Yifan Bai and Yiping Bao and Guanduo Chen and Jiahao Chen and Ningxin Chen and Ruijue Chen and Yanru Chen and Yuankun Chen and Yutian Chen and Zhuofu Chen and Jialei Cui and Hao Ding and Mengnan Dong and Angang Du and Chenzhuang Du and Dikang Du and Yulun Du and Yu Fan and Yichen Feng and Kelin Fu and Bofei Gao and Hongcheng Gao and Peizhong Gao and Tong Gao and Xinran Gu and Longyu Guan and Haiqing Guo and Jianhang Guo and Hao Hu and Xiaoru Hao and Tianhong He and Weiran He and Wenyang He and Chao Hong and Yangyang Hu and Zhenxing Hu and Weixiao Huang and Zhiqi Huang and Zihao Huang and Tao Jiang and Zhejun Jiang and Xinyi Jin and Yongsheng Kang and Guokun Lai and Cheng Li and Fang Li and Haoyang Li and Ming Li and Wentao Li and Yanhao Li and Yiwei Li and Zhaowei Li and Zheming Li and Hongzhan Lin and Xiaohan Lin and Zongyu Lin and Chengyin Liu and Chenyu Liu and Hongzhang Liu and Jingyuan Liu and Junqi Liu and Liang Liu and Shaowei Liu and T. Y. Liu and Tianwei Liu and Weizhou Liu and Yangyang Liu and Yibo Liu and Yiping Liu and Yue Liu and Zhengying Liu and Enzhe Lu and Lijun Lu and Shengling Ma and Xinyu Ma and Yingwei Ma and Shaoguang Mao and Jie Mei and Xin Men and Yibo Miao and Siyuan Pan and Yebo Peng and Ruoyu Qin and Bowen Qu and Zeyu Shang and Lidong Shi and Shengyuan Shi and Feifan Song and Jianlin Su and Zhengyuan Su and Xinjie Sun and Flood Sung and Heyi Tang and Jiawen Tao and Qifeng Teng and Chensi Wang and Dinglu Wang and Feng Wang and Haiming Wang and Jianzhou Wang and Jiaxing Wang and Jinhong Wang and Shengjie Wang and Shuyi Wang and Yao Wang and Yejie Wang and Yiqin Wang and Yuxin Wang and Yuzhi Wang and Zhaoji Wang and Zhengtao Wang and Zhexu Wang and Chu Wei and Qianqian Wei and Wenhao Wu and Xingzhe Wu and Yuxin Wu and Chenjun Xiao and Xiaotong Xie and Weimin Xiong and Boyu Xu and Jing Xu and Jinjing Xu and L. H. Xu and Lin Xu and Suting Xu and Weixin Xu and Xinran Xu and Yangchuan Xu and Ziyao Xu and Junjie Yan and Yuzi Yan and Xiaofei Yang and Ying Yang and Zhen Yang and Zhilin Yang and Zonghan Yang and Haotian Yao and Xingcheng Yao and Wenjie Ye and Zhuorui Ye and Bohong Yin and Longhui Yu and Enming Yuan and Hongbang Yuan and Mengjie Yuan and Haobing Zhan and Dehao Zhang and Hao Zhang and Wanlu Zhang and Xiaobin Zhang and Yangkun Zhang and Yizhi Zhang and Yongting Zhang and Yu Zhang and Yutao Zhang and Yutong Zhang and Zheng Zhang and Haotian Zhao and Yikai Zhao and Huabin Zheng and Shaojie Zheng and Jianren Zhou and Xinyu Zhou and Zaida Zhou and Zhen Zhu and Weiyu Zhuang and Xinxing Zu},
      year={2025},
      journal={arXiv preprint arXiv:2507.20534},
      url={https://arxiv.org/abs/2507.20534}, 
}

@inproceedings{vogels19_power_sgd,
 author = {Vogels, Thijs and Karimireddy, Sai Praneeth and Jaggi, Martin},
 booktitle = {Advances in Neural Information Processing Systems},
 editor = {H. Wallach and H. Larochelle and A. Beygelzimer and F. d\textquotesingle Alch\'{e}-Buc and E. Fox and R. Garnett},
 pages = {},
 publisher = {Curran Associates, Inc.},
 title = {{PowerSGD}: Practical Low-Rank Gradient Compression for Distributed Optimization},
 url = {https://proceedings.neurips.cc/paper_files/paper/2019/file/d9fbed9da256e344c1fa46bb46c34c5f-Paper.pdf},
 volume = {32},
 year = {2019}
}

@article{therien2025muloco,
  title={{MuLoCo}: Muon is a practical inner optimizer for {DiLoCo}},
  author={Th{\'e}rien, Benjamin and Huang, Xiaolong and Rish, Irina and Belilovsky, Eugene},
  journal={arXiv preprint arXiv:2505.23725},
  year={2025}
}

\clearpage
\appendix
{\centering
     \normalfont\huge
     {Appendix}\par}

\section{Main Proofs}

\subsection{Norm equivalences}
\label{sec:norm-equivalences}

\begin{lemma}[Dual of the block-spectral norm]
Let $X\in\mathbb{R}^{m\times n}$ be partitioned into $r\times c$ blocks $X_{ij}\in\mathbb{R}^{m_b\times n_b}$ (not necessarily square). Define
\begin{align*}
B(X) &= \max_{1\le i\le r,\ 1\le j\le c}\ \norm{X_{ij}}_{\mathrm{op}} .
\end{align*}
With the Frobenius inner product $\langle X,G\rangle=\mathrm{tr}(X^\top G)=\sum_{i,j}\mathrm{tr}(X_{ij}^\top G_{ij})$, one has
\begin{align*}
\sup_{B(G)\leq 1}\ \langle X,G\rangle &= \sum_{i=1}^r\sum_{j=1}^c \|X_{ij}\|_* .
\end{align*}
Moreover, if $X_{ij}=U_{ij}\Sigma_{ij}V_{ij}^\top$ is an SVD, then
\begin{align*}
Z_{ij}^\star &=
\begin{cases}
U_{ij}V_{ij}^\top,& X_{ij}\neq 0,\\
0,& X_{ij}=0,
\end{cases}
\end{align*}
is feasible with $B(Z^\star)\leq 1$ and attains the supremum:
\begin{align*}
\langle X,Z^\star\rangle &= \sum_{i,j}\|X_{ij}\|_* .
\end{align*}
Consequently the dual norm of $B(\cdot)$ is $B^*(Y)=\sum_{i,j}\|Y_{ij}\|_*$.
\end{lemma}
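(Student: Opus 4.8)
The plan is to reduce everything to the single-block duality between the spectral and nuclear norms and then assemble the $rc$ blocks. First I would expand the Frobenius pairing along the block partition: since the $(j,k)$ block of $X^\top G$ is $\sum_i X_{ij}^\top G_{ik}$ and only the diagonal blocks ($j=k$) contribute to the trace, one gets $\langle X,G\rangle=\mathrm{tr}(X^\top G)=\sum_{i,j}\mathrm{tr}(X_{ij}^\top G_{ij})=\sum_{i,j}\langle X_{ij},G_{ij}\rangle$.

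For the upper bound I would invoke von Neumann's trace inequality (equivalently, H\"older for Schatten $1$/$\infty$ norms): for any conformable $A,Z$, $\langle A,Z\rangle\le\sum_k\sigma_k(A)\sigma_k(Z)\le\opnorm{Z}\,\|A\|_*$. Applied blockwise, together with the constraint $\opnorm{G_{ij}}\le B(G)\le 1$, this gives $\langle X,G\rangle\le\sum_{i,j}\|X_{ij}\|_*$ for every feasible $G$, hence $\sup_{B(G)\le1}\langle X,G\rangle\le\sum_{i,j}\|X_{ij}\|_*$.

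For the matching lower bound I would just verify the candidate $Z^\star$ from the statement. For a nonzero block take a thin SVD $X_{ij}=U_{ij}\Sigma_{ij}V_{ij}^\top$ and set $Z_{ij}^\star=U_{ij}V_{ij}^\top$; for a zero block set $Z_{ij}^\star=0$. Because $U_{ij},V_{ij}$ have orthonormal columns, $\opnorm{Z_{ij}^\star}\in\{0,1\}$, so $B(Z^\star)\le1$ and $Z^\star$ is feasible; moreover $\langle X_{ij},Z_{ij}^\star\rangle=\mathrm{tr}(V_{ij}\Sigma_{ij}U_{ij}^\top U_{ij}V_{ij}^\top)=\mathrm{tr}(\Sigma_{ij})=\|X_{ij}\|_*$. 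Summing over blocks gives $\langle X,Z^\star\rangle=\sum_{i,j}\|X_{ij}\|_*$, which meets the upper bound; since $\{G:B(G)\le1\}$ is compact the supremum is attained, and it is attained at $Z^\star$. The final claim is then immediate: by definition $B^\ast(Y)=\sup_{B(G)\le1}\langle Y,G\rangle$, so the displayed identity with $Y$ in place of $X$ is exactly $B^\ast(Y)=\sum_{i,j}\|Y_{ij}\|_*$.

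I do not expect a genuine obstacle. The only points needing a little care are (i) the blockwise trace decomposition when the partition has varying block shapes, which still works because the cross terms land in off-diagonal blocks of $X^\top G$; (ii) the degenerate zero-block case, which is precisely why the statement defines $Z_{ij}^\star=0$ there; and (iii) if one wishes to avoid citing von Neumann's inequality, the single-matrix fact $\sup_{\opnorm{Z}\le1}\langle A,Z\rangle=\|A\|_*$ is itself proved by the same SVD construction, so the whole argument is self-contained.
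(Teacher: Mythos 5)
Your proposal is correct and follows essentially the same argument as the paper: a blockwise Hölder/von Neumann bound for the upper estimate and the blockwise SVD construction of $Z^\star$ to attain it. The only cosmetic difference is that you correctly name the single-block inequality (von Neumann/Hölder) where the paper loosely calls it Cauchy–Schwarz, and you spell out the vanishing of cross-block trace terms and the attainment-by-compactness point, neither of which changes the substance.
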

\begin{proof}
For any feasible $G$ with $B(G)\leq 1$, Cauchy-Schwartz gives us
\begin{align*}
\langle X_{ij},G_{ij}\rangle &\leq \|X_{ij}\|_*\,\|G_{ij}\|_{\mathrm{op}} \leq \|X_{ij}\|_* .
\end{align*}
Summing over blocks,
\begin{align*}
\langle X,G\rangle &= \sum_{i,j}\langle X_{ij},G_{ij}\rangle \leq \sum_{i,j}\|X_{ij}\|_* .
\end{align*}
Taking the supremum over feasible $G$ yields
\begin{align*}
\sup_{B(G)\leq 1}\ \langle X,G\rangle &\leq \sum_{i,j}\|X_{ij}\|_* .
\end{align*}

We now show the above upper bound is achieved by $Z^{\star}$. Let $X_{ij}=U_{ij}\Sigma_{ij}V_{ij}^\top$ be an SVD and define $Z^\star$ blockwise by $Z_{ij}^\star=U_{ij}V_{ij}^\top$ if $X_{ij}\neq 0$ and $Z_{ij}^\star=0$ otherwise. Then $\|Z_{ij}^\star\|_{\mathrm{op}}=1$ when $X_{ij}\neq 0$ and $0$ when $X_{ij}=0$, so $B(Z^\star)\leq 1$. Moreover,
\begin{align*}
\langle X_{ij},Z_{ij}^\star\rangle
&= \mathrm{tr}\big((U_{ij}\Sigma_{ij}V_{ij}^\top)^\top (U_{ij}V_{ij}^\top)\big)
= \mathrm{tr}(\Sigma_{ij})
= \|X_{ij}\|_* .
\end{align*}
Summing over blocks gives $\langle X,Z^\star\rangle=\sum_{i,j}\|X_{ij}\|_*$, which matches the upper bound, hence $Z^\star$ is optimal and the stated supremum value holds.
\end{proof}

\begin{lemma}
\label{lem:rhos}
The norm equivalence constants in \Cref{asm:norm} for the operator norm and block-spectral norm are both equal to one. That is,
$$ \rho_{\mathrm{op}} = 1 \quad \text{and} \quad \rho_{\mathrm{block}} = 1. $$
\end{lemma}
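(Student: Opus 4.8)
The plan is to treat the two norms separately, in each case establishing first that $\rho=1$ is a valid constant in \Cref{asm:norm} and then that it is the smallest such constant (so that the equality claim is meaningful).

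For the operator norm, I would diagonalize via the singular value decomposition: if $\sigma_1 \ge \sigma_2 \ge \cdots \ge 0$ are the singular values of $X$, then $\opnorm{X} = \sigma_1$ while $\norm{X}_{\mathrm F} = \big(\sum_i \sigma_i^2\big)^{1/2}$. The inequality $\sigma_1 \le \big(\sum_i \sigma_i^2\big)^{1/2}$ is immediate, giving $\opnorm{X} \le \norm{X}_{\mathrm F}$ for all $X$, hence $\rho_{\mathrm{op}} \le 1$. Sharpness follows by taking any rank-one $X = uv^\top$: there is only one nonzero singular value, so $\opnorm{X} = \norm{X}_{\mathrm F}$, which forces $\rho_{\mathrm{op}} = 1$.

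For the block-spectral norm $B(X) = \max_{i,j}\opnorm{X_{ij}}$, I would combine the operator-norm bound above (applied blockwise) with the additivity of the squared Frobenius norm over blocks. Concretely, for each block, $\opnorm{X_{ij}} \le \norm{X_{ij}}_{\mathrm F} \le \big(\sum_{k,\ell}\norm{X_{k\ell}}_{\mathrm F}^2\big)^{1/2} = \norm{X}_{\mathrm F}$, where the middle step just drops the nonnegative contributions of the other blocks. Taking the maximum over $(i,j)$ yields $B(X) \le \norm{X}_{\mathrm F}$, so $\rho_{\mathrm{block}} \le 1$. For sharpness, take $X$ whose only nonzero block is a rank-one matrix: then $B(X) = \opnorm{X_{ij}} = \norm{X_{ij}}_{\mathrm F} = \norm{X}_{\mathrm F}$, so $\rho_{\mathrm{block}} = 1$.

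I do not expect any real obstacle here; the only point that needs care is that \Cref{asm:norm} implicitly asks for the best (smallest) constant, so the argument must exhibit equality cases and not merely prove the upper bounds. Both extremal configurations are rank-one matrices (globally, or supported on a single block), which is why the Frobenius and spectral norms coincide there.
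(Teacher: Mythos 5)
Your proof is correct and follows essentially the same route as the paper's: operator norm is the largest singular value, Frobenius is the root-sum-of-squares, and for the block norm you bound each block's operator norm by its Frobenius norm and then drop the nonnegative contributions of the other blocks. The only addition is your sharpness argument via rank-one extremal cases, which the paper omits (it proves only $\rho \le 1$ and then states equality); this is a harmless extra, since \Cref{asm:norm} only requires that some finite $\rho$ make the inequality hold and $\rho=1$ is plainly the tightest nonnegative constant here.
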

\begin{proof}
The operator norm $\opnorm{X}$ (the largest singular value) is always less than or equal to the Frobenius norm $\norm{X}_F$ (the root-sum-square of all singular values), hence $\rho_{\mathrm{op}}=1$. The block-spectral norm is a maximum of block operator norms, $\blocknorm{X} = \max_{i,j}\opnorm{X_{i,j}} \le \max_{i,j}\norm{X_{i,j}}_F \le \sqrt{\sum_{i,j}\norm{X_{i,j}}_F^2} = \norm{X}_F$, which implies $\rho_{\mathrm{block}}=1$ as well.
\end{proof}

\begin{lemma}\label{lem:norm-relations}
(Relations of norms)
Suppose that $\blocknorm{\cdot}$ is the block-norm corresponds to a partitioning a matrix of size $m \times n$ into $r \times c$ blocks of size $m_b \times n_b$ each. Then the following relations hold
\begin{align*}
\blocknorm{G} &\leq \opnorm{G} \leq \sqrt{rc}\blocknorm{G}, \\
\dopnorm{G} &\leq \dblocknorm{G} \leq \sqrt{rc}\dopnorm{G}.
\end{align*}
Moreover, if a function $f$ is $\lop$-smooth with respect to the operator norm and $\lblock$-smooth with respect to the block-norm, we have
\begin{align*}
\lop &\leq \lblock \leq (rc) \cdot \lop.
\end{align*}
\end{lemma}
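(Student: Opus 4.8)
The plan is to establish all four inequalities by reducing everything to two elementary facts about a single block: (i) the operator norm of a submatrix is bounded by the operator norm of the whole matrix, and (ii) the nuclear norm of a submatrix is bounded by the nuclear norm of the whole matrix. Fact (i) follows because restricting the quadratic form $z \mapsto \norm{Xz}$ to coordinates supported on a single column-block, and then reading off a single row-block of the output, can only decrease the supremum that defines $\opnorm{\cdot}$; equivalently, $X_{ij}$ is obtained from $X$ by left- and right-multiplication by coordinate projections, which are contractions in operator norm. For fact (ii) I would use the variational characterization $\norm{A}_* = \sup_{\opnorm{B}\le 1}\ev{A,B}$: given the optimal $B$ for the block $X_{ij}$, embed it (padding with zeros) into an $m\times n$ matrix $\tilde B$ with $\opnorm{\tilde B} = \opnorm{B}\le 1$, so that $\norm{X_{ij}}_* = \ev{X_{ij},B} = \ev{X,\tilde B}\le \norm{X}_*$.

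With these in hand: the lower bound $\blocknorm{G}\le\opnorm{G}$ is immediate from fact (i) applied to whichever block attains the maximum. For the upper bound $\opnorm{G}\le\sqrt{rc}\,\blocknorm{G}$, I would write $\opnorm{G}\le\norm{G}_F = \big(\sum_{i,j}\norm{G_{ij}}_F^2\big)^{1/2}$, bound each $\norm{G_{ij}}_F \le \sqrt{\min(m_b,n_b)}\,\opnorm{G_{ij}} \le \sqrt{m_b\wedge n_b}\,\blocknorm{G}$, and sum — but this gives an extra $\sqrt{m_b\wedge n_b}$ factor, so the cleaner route is a direct block-decomposition argument: for unit vectors $u,v$, split $u = (u_1,\dots,u_r)$ and $v=(v_1,\dots,v_c)$ along the block structure, expand $u^\top G v = \sum_{i,j} u_i^\top G_{ij} v_j \le \blocknorm{G}\sum_{i,j}\norm{u_i}\norm{v_j} = \blocknorm{G}\big(\sum_i\norm{u_i}\big)\big(\sum_j\norm{v_j}\big) \le \blocknorm{G}\cdot\sqrt{r}\cdot\sqrt{c}$ by Cauchy–Schwarz on the $r$ and $c$ terms respectively. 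Taking the supremum over $u,v$ gives the claim. The dual-norm relations then follow by a standard duality argument: for any pair of norms $\norm{\cdot}_a \le \norm{\cdot}_b \le \kappa\norm{\cdot}_a$ one has $\norm{\cdot}_{b,*}\le\norm{\cdot}_{a,*}$ and $\norm{\cdot}_{a,*}\le\kappa\norm{\cdot}_{b,*}$; applying this with $(\norm{\cdot}_a,\norm{\cdot}_b)=(\blocknorm{\cdot},\opnorm{\cdot})$ and $\kappa=\sqrt{rc}$ yields $\dopnorm{G}\le\dblocknorm{G}\le\sqrt{rc}\,\dopnorm{G}$. (Alternatively, one can verify this directly from \Cref{lem:dual_block_norm}, $\dblocknorm{G}=\sum_{ij}\norm{G_{ij}}_*$, using fact (ii) for the lower bound and $\norm{G_{ij}}_*\le\sqrt{m_b\wedge n_b}\,\opnorm{G_{ij}}$ together with counting for the upper bound — but the dimension-free $\sqrt{rc}$ constant again comes out most naturally from the duality route, or by noting $\sum_{ij}\norm{G_{ij}}_* \le \sqrt{rc}\,(\sum_{ij}\norm{G_{ij}}_*^2)^{1/2}$ is the wrong direction, so duality is really the way.)

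For the smoothness constants: smoothness of $f$ with respect to a norm $\norm{\cdot}$ means $\norm{\nabla f(X)-\nabla f(Y)}_* \le L\norm{X-Y}$. Using $\norm{\cdot}=\blocknorm{\cdot}\le\opnorm{\cdot}$ and $\dopnorm{\cdot}\le\dblocknorm{\cdot}$, if $f$ is $\lblock$-smooth in the block norm then $\dopnorm{\nabla f(X)-\nabla f(Y)} \le \dblocknorm{\nabla f(X)-\nabla f(Y)} \le \lblock\,\blocknorm{X-Y} \le \lblock\,\opnorm{X-Y}$, so $f$ is $\lblock$-smooth in the operator norm, giving $\lop\le\lblock$. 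Conversely, from $\opnorm{\cdot}\le\sqrt{rc}\,\blocknorm{\cdot}$ and $\dblocknorm{\cdot}\le\sqrt{rc}\,\dopnorm{\cdot}$, $\lop$-smoothness in the operator norm implies $\dblocknorm{\nabla f(X)-\nabla f(Y)} \le \sqrt{rc}\,\dopnorm{\nabla f(X)-\nabla f(Y)} \le \sqrt{rc}\,\lop\,\opnorm{X-Y} \le (rc)\,\lop\,\blocknorm{X-Y}$, hence $\lblock\le(rc)\lop$.

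The main obstacle is getting the clean dimension-free constant $\sqrt{rc}$ in the norm inequality $\opnorm{G}\le\sqrt{rc}\,\blocknorm{G}$ (and correspondingly in the dual): the naive route through the Frobenius norm loses an extra factor of $\sqrt{m_b\wedge n_b}$, so one has to argue via the block-structured bilinear expansion with the two separate Cauchy–Schwarz applications as above. Everything else is bookkeeping: the submatrix-norm monotonicity facts, and the generic "norm inequalities flip under duality and transfer to smoothness constants" lemma.
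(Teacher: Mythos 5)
Your proof is correct and follows essentially the same route as the paper: selection/projection contractions for $\blocknorm{G}\le\opnorm{G}$, the partitioned bilinear-form expansion with Cauchy--Schwarz applied separately over the $r$ row-blocks and $c$ column-blocks to get the dimension-free $\sqrt{rc}$, the generic duality transfer for the dual-norm inequalities, and the same dual-norm-plus-primal-norm chaining to transfer the smoothness constants. The only cosmetic differences are that the paper writes explicit row/column selector matrices $R_i, C_j$ and phrases the smoothness step as one combined chain of ratios before taking a supremum, whereas you split it into two separate implications; these are equivalent.
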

\begin{proof}
Write $G$ as an $r\times c$ block matrix with blocks $G_{ij}\in\mathbb{R}^{m_b\times n_b}$ and recall $\blocknorm{G}=\max_{i,j}\opnorm{G_{ij}}$. We first prove that $\blocknorm{G} \leq \opnorm{G}$. Let $R_i\in{0,1}^{m_b\times m}$ select the $i$-th block of rows and $C_j\in{0,1}^{n\times n_b}$ select the $j$-th block of columns, so $G_{ij}=R_i G C_j$. Because $R_i$ and $C_j$ are partial isometries, $\opnorm{R_i}=\opnorm{C_j}=1$. By submultiplicativity,
\begin{align*}
\opnorm{G_{ij}}=\opnorm{R_i G C_j}\leq \opnorm{R_i}\opnorm{G}\opnorm{C_j}=\opnorm{G}.
\end{align*}
Taking the maximum over $(i,j)$ yields $\blocknorm{G}\leq \opnorm{G}$.

Now we prove the upper bound on $\opnorm{G}$. Let $u\in\mathbb{R}^m$, $v\in\mathbb{R}^n$ be unit vectors and partition them as $u= \m{u_1 & \ldots & u_r}$, $v=\m{v_1 & \ldots & v_c}$ with $u_i\in\mathbb{R}^{m_b}$ and $v_j\in\mathbb{R}^{n_b}$. Then
\begin{align*}
\abs{u^\top G v}
&=\Big|\sum_{i=1}^r\sum_{j=1}^c u_i^\top G_{ij} v_j\Big| \\
&\leq \sum_{i=1}^r \sum_{j=1}^c \abs{u_i^\top G_{i, j} v_j} \\
&\leq \sum_{i=1}^r\sum_{j=1}^c \norm{u_i}_2 \opnorm{G_{ij}} \norm{v_j}_2 \\
&\leq \blocknorm{G} \left ( \sum_{i=1}^r \norm{u_i}_2 \right ) \left (\sum_{j=1}^c \norm{v_j}_2 \right) \\
&\leq \blocknorm{G} \sqrt{r} \sqrt{ \sum_{i=1}^r \norm{u_i}_2^2} \sqrt{c} \sqrt{\sum_{j=1}^c \norm{v_j}_2^2} \\
&= \sqrt{rc} \blocknorm{G},
\end{align*}
where we used Cauchy–Schwarz applied to the vectors of blockwise $\ell_2$ norms. Taking the supremum over unit $u,v$ gives $\opnorm{G}\leq \sqrt{rc} \blocknorm{G}$.

For the dual norm bounds, observe that if norms $||\cdot||_a$ and $||\cdot||_b$ satisfy $\alpha||\cdot||_a \leq ||\cdot||_b \leq \beta||\cdot||_a$, then their duals satisfy
\begin{align*}
\frac{1}{\beta}||\cdot||_a^\ast \leq ||\cdot||_b^\ast \leq \frac{1}{\alpha}||\cdot||_a^\ast.
\end{align*}
Applying this with $||\cdot||_a=\blocknorm{\cdot}$, $||\cdot||_b=\opnorm{\cdot}$, $\alpha=1$, $\beta=\sqrt{rc}$ yields
\begin{align*}
\dopnorm{G} \leq \dblocknorm{G} \leq \sqrt{rc} \dopnorm{G}.
\end{align*}

Finally, for the smoothness bounds, we have for any $X \neq Y$
\begin{align*}
    \frac{\dopnorm{\nabla f(X) - \nabla f(Y)}}{\opnorm{X-Y}} &\leq \frac{\dblocknorm{\nabla f(X) - \nabla f(Y)}}{\opnorm{X-Y}} \\
    &\leq \frac{\dblocknorm{\nabla f(X) - \nabla f(Y)}}{\blocknorm{X-Y}} \leq \frac{\sqrt{rc} \dopnorm{\nabla f(X) - \nabla f(Y)}}{\blocknorm{X-Y}} \\
    &\qquad\qquad\qquad\qquad\qquad\quad\leq (rc) \frac{\dopnorm{\nabla f(X) - \nabla f(Y)}}{\opnorm{X-Y}}.
\end{align*}
Taking the supremum over $X, Y$ such that $X \neq Y$ immediately yields $\lop \leq \lblock \leq (rc) \lop$.
\end{proof}

\subsection{Convergence proofs}

\begin{lemma}[Descent Lemma]
\label{lem:descent_simple}
Suppose that \Cref{asm:bounded-var,asm:smoothness} hold for $f: \mathbb{R}^{m \times n} \to \mathbb{R}$. Then the iterations of the algorithm without a regularizer satisfy:
\begin{align*}
   f(X_{k+1}) \leq f(X_k) - \eta \norm{\nabla f(X_k)} + 2 \eta  \norm{\nabla f(X_k) - (1-\mu) M_k}_{\ast} + \frac{3}{2} L \eta^2.
\end{align*}
\end{lemma}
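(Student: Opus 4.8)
The plan is to prove the descent lemma for a single step of the form $X_{k+1} = X_k - \eta U_k$, where $U_k$ is the (block- or full-)orthogonalized momentum and hence satisfies $\norm{U_k} \le 1$ in the relevant norm while attaining $\ev{M_k, U_k} = \norm{M_k}_\ast$ — this is exactly what the NTR subproblem in \eqref{eq:ntr} produces. I would start from the smoothness inequality (descent inequality) implied by \Cref{asm:smoothness}: for an $L$-smooth function in norm $\norm{\cdot}$ with dual $\norm{\cdot}_\ast$,
\begin{equation*}
f(X_{k+1}) \le f(X_k) + \ev{\nabla f(X_k), X_{k+1}-X_k} + \frac{L}{2}\norm{X_{k+1}-X_k}^2.
\end{equation*}
Substituting $X_{k+1}-X_k = -\eta U_k$ and using $\norm{U_k}\le 1$ gives $f(X_{k+1}) \le f(X_k) - \eta\ev{\nabla f(X_k), U_k} + \frac{L\eta^2}{2}$.

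Next I would handle the inner product $\ev{\nabla f(X_k), U_k}$. The key is that $U_k$ is dual to $M_k$, not to $\nabla f(X_k)$, so I would write $\nabla f(X_k) = (1-\mu)M_k + \big(\nabla f(X_k) - (1-\mu)M_k\big)$ and split accordingly:
\begin{equation*}
\ev{\nabla f(X_k), U_k} = (1-\mu)\ev{M_k, U_k} + \ev{\nabla f(X_k) - (1-\mu)M_k, U_k}.
\end{equation*}
The first term equals $(1-\mu)\norm{M_k}_\ast$ by the choice of $U_k$, and I would lower-bound $\norm{M_k}_\ast$ itself by $\frac{1}{1-\mu}\big(\norm{\nabla f(X_k)}_\ast - \norm{\nabla f(X_k)-(1-\mu)M_k}_\ast\big)$ via the triangle inequality, so $(1-\mu)\norm{M_k}_\ast \ge \norm{\nabla f(X_k)}_\ast - \norm{\nabla f(X_k)-(1-\mu)M_k}_\ast$. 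For the second term I would apply the duality bound $\ev{A, U_k} \ge -\norm{A}_\ast\norm{U_k} \ge -\norm{A}_\ast$ with $A = \nabla f(X_k)-(1-\mu)M_k$. Note the statement has $\norm{\nabla f(X_k)}$ (primal norm) rather than the dual norm on the right-hand side; I expect this to be a typo or a consequence of the specific normalization in their setup — in the $\sigma=0$, operator-norm instantiation the relevant quantity is $\norm{\nabla f(X_k)}_\ast$, and I would proceed with the dual norm and flag the discrepancy. Combining, $-\eta\ev{\nabla f(X_k), U_k} \le -\eta\norm{\nabla f(X_k)}_\ast + 2\eta\norm{\nabla f(X_k)-(1-\mu)M_k}_\ast$, which together with the $\frac{L\eta^2}{2}$ term almost gives the claim.

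The remaining gap is the constant in front of $L\eta^2$: the smoothness inequality only produces $\frac{L\eta^2}{2}$, but the lemma states $\frac{3}{2}L\eta^2$. I anticipate the extra $L\eta^2$ comes from a subtlety in the block-periodic setting — namely that the momentum buffer $M_k$ is built from stale iterates or that the ``effective'' update direction differs slightly from a single clean NTR step — or alternatively from bounding an additional cross term that arises when one is careful about $\norm{U_k}$ versus exact feasibility under the Newton–Schulz approximation. The main obstacle, then, is correctly accounting for this extra $L\eta^2/1$ term; I would look for it by not collapsing $\norm{U_k}\le 1$ too early, or by tracking the discrepancy between the orthogonalized direction and the true steepest-descent direction, and showing that discrepancy contributes at most $\eta^2 L$ to the descent. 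Everything else is routine manipulation of the smoothness inequality, the triangle inequality, and Fenchel–Young/duality bounds.
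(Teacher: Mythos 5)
Your derivation is exactly the paper's: start from the smoothness inequality, substitute the NTR step, decompose the inner product as $\nabla f(X_k) = (1-\mu)M_k + \big(\nabla f(X_k)-(1-\mu)M_k\big)$, bound the error term by Fenchel--Young duality with $\norm{U_k}\le 1$, and lower-bound $(1-\mu)\norm{M_k}_\ast$ via the reverse triangle inequality. You also correctly flag that the lemma statement's $\norm{\nabla f(X_k)}$ should carry a $\ast$; the paper's own proof and its downstream use in Theorem~\ref{thm:ntr-convergence} confirm it is the dual norm.

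Where you go astray is in treating the $\tfrac{3}{2}L\eta^2$ as a genuine gap that must be closed by locating an extra $L\eta^2$ term from stale momentum or Newton--Schulz inexactness. It is not: the paper's proof of this lemma also derives $\tfrac{L\eta^2}{2}$ (not $\tfrac{3}{2}L\eta^2$), and the proof of Theorem~\ref{thm:ntr-convergence} plugs in $\tfrac{L\eta^2}{2}$. The $\tfrac{3}{2}$ in the lemma statement is a typo, just like the missing $\ast$, and your speculation about tracking the discrepancy between the orthogonalized direction and the exact steepest-descent direction would only add looseness. You should stop at $\tfrac{L\eta^2}{2}$ and report both typos rather than hunt for a term that was never there.
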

\begin{proof}
By smoothness,
\begin{align*}
  f(X_{k+1}) &\le f(X_k) + \ev{ \nabla f(X_k) , X_{k+1} - X_k } + \frac{L}{2} \sqn{X_{k+1} - X_k}_{\ast} \\
             &\le f(X_k) - \eta \ev{ \nabla f(X_k) , \mathrm{Orth}(M_k) } + \frac{L \eta^2}{2} \\
             &= f(X_k) - \eta \ev{ \nabla f(X_k) - (1-\mu) M_k , \mathrm{Orth}(M_k) } - \eta (1-\mu) \ev{ M_k , \mathrm{Orth}(M_k) } + \frac{L \eta^2}{2}  \\
             &\leq f(X_k) + \eta \norm{\nabla f(X_k) - (1-\mu) M_k}_{\ast} \norm{\mathrm{Orth}(M_k)} - \eta (1-\mu) \norm{M_k}_{\ast} + \frac{L \eta^2}{2}  \\
             &\leq f(X_k) + \eta \norm{\nabla f(X_k) - (1-\mu) M_k}_{\ast} - \eta\norm{ (1-\mu) M_k}_{\ast} + \frac{L \eta^2}{2}  \\
  &\leq f(X_k) + 2 \eta \norm{\nabla f(X_k) - (1-\mu) M_k}_{\ast} - \eta \norm{\nabla f(X_k)}_{\ast} + \frac{L \eta^2}{2}.
\end{align*}
\end{proof}

\begin{lemma}\label{lem:grad-quad-bound}
  Suppose that $f$ satisfies \Cref{asm:smoothness} in some norm $\norm{\cdot}$ and is lower bounded by $f_{\ast}$, then
\begin{align*}
  \norm{\nabla f(X)}_{\ast}^2 \leq 2 L \left( f(X) - f_{\ast} \right).
\end{align*}
\end{lemma}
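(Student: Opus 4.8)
The plan is to prove the familiar quadratic growth bound that any $L$-smooth function (in a general norm) has a gradient upper bounded in terms of the optimality gap. The standard route is to pick a good descent direction from the current point $X$, plug it into the descent inequality implied by smoothness, and then optimize over the step length.

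First I would recall that $L$-smoothness with respect to $\norm{\cdot}$ (in the sense of \Cref{asm:smoothness}) implies the descent-type inequality
\begin{align*}
f(Y) \leq f(X) + \ev{\nabla f(X), Y - X} + \frac{L}{2}\norm{Y-X}^2 \quad \text{for all } X, Y.
\end{align*}
This follows from integrating $\frac{d}{dt} f(X + t(Y-X))$ and using the dual-norm Lipschitz bound on $\nabla f$ together with the definition of the dual norm; it is the standard ``smoothness $\Rightarrow$ descent lemma'' argument and I'd state it in one line.

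Next, the key step: choose $Y = X - \alpha D$ where $D$ is a unit-norm ($\norm{D} = 1$) direction achieving $\ev{\nabla f(X), D} = \dopnorm{\nabla f(X)}$ — such $D$ exists by the definition of the dual norm as a supremum over the unit ball (and by compactness of the unit ball in finite dimensions the supremum is attained). Plugging this in gives
\begin{align*}
f_\ast \leq f(Y) \leq f(X) - \alpha \norm{\nabla f(X)}_\ast + \frac{L\alpha^2}{2}.
\end{align*}
Then I would minimize the right-hand side over $\alpha \geq 0$, which yields the optimal choice $\alpha = \norm{\nabla f(X)}_\ast / L$ and the bound $f_\ast \leq f(X) - \frac{\norm{\nabla f(X)}_\ast^2}{2L}$. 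Rearranging gives exactly $\norm{\nabla f(X)}_\ast^2 \leq 2L(f(X) - f_\ast)$.

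There is no real obstacle here — this is a textbook result. The only point requiring a word of care is the existence of the norming direction $D$ in an arbitrary norm; in finite dimensions this is immediate from compactness, and since all norms in the paper act on $\mathbb{R}^{m\times n}$ this is fine. Everything else is a one-line convexity-free optimization of a scalar quadratic in $\alpha$.
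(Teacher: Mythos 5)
Your proof is correct and follows essentially the same route as the paper: pick a norming direction $D$ for the dual norm, take the step $Y = X - \alpha D$, plug into the smoothness descent inequality, and optimize the resulting scalar quadratic in $\alpha$ (the paper fixes $\eta = 1/L$ directly, which is the same optimal step length). The only cosmetic slip is a single use of $\dopnorm{\nabla f(X)}$ where you clearly mean the generic dual norm $\norm{\nabla f(X)}_{\ast}$; everything else matches.
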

\begin{proof}
  Define
  \begin{align*}
    X_+ &= \argmin_Y \left[ f(X) + \ev{ \nabla f(X) , Y-X } + \frac{L}{2} \norm{Y-X} \right] = X - \eta \norm{\nabla f(X)}_{\ast} Z,
  \end{align*}
  where $Z$ satisfies $\norm{Z} \leq 1$ and $\ev{ \nabla f(X) , Z } = \norm{\nabla f(X)}_{\ast}$. By smoothness,
  \begin{align*}
    f(X_+) &\leq f(X) + \ev{ \nabla f(X) , X_+ - X } + \frac{L}{2} \sqn{X_+ - X}  \\
           &= f(X) - \eta \norm{\nabla f(X)}_{\ast}^2 + \frac{L \eta^2}{2} \norm{\nabla f(X)}_{\ast}^2 \\
    &= f(X) - \eta \left( 1 - \frac{L \eta}{2} \right) \norm{\nabla f(X)}_{\ast}^2.
  \end{align*}
  Plugging $\eta = \frac{1}{L}$ gives
  \begin{align*}
  f_{\ast} \leq f(X_+) \leq f(X) - \frac{\norm{\nabla f(X)}_{\ast}^2}{2L} .
  \end{align*}
  Therefore,
  \begin{align*}
  \norm{\nabla f(X)}_{\ast}^2 \leq 2 L \left( f(X) - f_{\ast} \right).
  \end{align*}
\end{proof}

\begin{lemma}\label{lem:mom-diff}
  Let $f$ satisfy \Cref{asm:smoothness,asm:bounded-var,asm:norm} in some norm $\norm{\cdot}$ with smoothness constant $L$, stochastic gradient variance $\sigma^2$, and $\ell_2$-norm ratio $\rho$. Let $M_\tau$ be defined as
  \begin{align*}
      M_{\tau} &= \mu M_{\tau-1} + G_\tau, \\
      X_{\tau+1} &= X_\tau - \eta_{\tau} Z_\tau,
  \end{align*}
  where $\norm{Z_\tau} \eta_{\tau} \leq A$ for all $\tau \leq k$. Then,
  \begin{align*}
    \ecdu{\nabla f(X_k) - (1-\mu) M_k} \leq \mu^k (1-\mu)^2 \rho \sigma + \mu^{k+1} \sqrt{2L \Delta_0} + \frac{L A \mu}{1-\mu} + \rho \sigma \sqrt{\frac{1-\mu}{1+\mu}}.
  \end{align*}
\end{lemma}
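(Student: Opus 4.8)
The plan is to track the momentum error $D_\tau := \nabla f(X_\tau) - (1-\mu) M_\tau$ via a one-step recursion and split it into a deterministic ``drift'' part and a martingale ``noise'' part. First I would set $\xi_\tau := G_\tau - \nabla f(X_\tau)$ and substitute $(1-\mu) M_{\tau-1} = \nabla f(X_{\tau-1}) - D_{\tau-1}$ into $(1-\mu) M_\tau = \mu(1-\mu) M_{\tau-1} + (1-\mu) G_\tau$; a short computation gives, for $\tau \ge 1$,
\[
D_\tau = \mu D_{\tau-1} + \mu\bigl(\nabla f(X_\tau) - \nabla f(X_{\tau-1})\bigr) - (1-\mu)\xi_\tau ,
\]
with base case $D_0 = \mu\nabla f(X_0) - (1-\mu)\xi_0$ (taking $M_{-1}=0$). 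Unrolling this linear recursion I would write $D_k = B_k + N_k$ with $B_k = \mu^{k+1}\nabla f(X_0) + \mu\sum_{j=1}^{k}\mu^{k-j}\bigl(\nabla f(X_j)-\nabla f(X_{j-1})\bigr)$ and $N_k = -(1-\mu)\sum_{j=0}^{k}\mu^{k-j}\xi_j$.

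For the drift part I would take dual norms and bound term by term: \Cref{lem:grad-quad-bound} gives $\norm{\nabla f(X_0)}_{\ast} \le \sqrt{2L\Delta_0}$ (where $\Delta_0 = f(X_0)-f_{\ast}$), while \Cref{asm:smoothness} together with the step-size hypothesis gives $\norm{\nabla f(X_j)-\nabla f(X_{j-1})}_{\ast} \le L\norm{X_j-X_{j-1}} = L\,\eta_{j-1}\norm{Z_{j-1}} \le LA$. Summing $\sum_{j=1}^{k}\mu^{k-j} \le \frac{1}{1-\mu}$ then yields $\norm{B_k}_{\ast} \le \mu^{k+1}\sqrt{2L\Delta_0} + \frac{\mu LA}{1-\mu}$, which supplies the second and third terms of the claim.

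The heart of the argument --- and the step I expect to be the main obstacle --- is bounding $\ecdu{N_k}$. The triangle inequality is too lossy here: summing $\sum_j \mu^{k-j}\norm{\xi_j}_{\ast}$ gives a bound of order $\rho\sigma$ with no factor that decays as $\mu\to 1$, so it would not reproduce the $\sqrt{(1-\mu)/(1+\mu)}$ term. Instead I would use that $(\xi_j)$ is a martingale-difference sequence and that the Frobenius norm is Hilbertian, so the cross terms vanish in expectation and $\ec{\norm{N_k}_{\mathrm F}^{2}} = (1-\mu)^2\sum_{j=0}^{k}\mu^{2(k-j)}\,\ec{\norm{\xi_j}_{\mathrm F}^{2}}$. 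Then, using \Cref{asm:bounded-var} and the norm-equivalence constant $\rho$ of \Cref{asm:norm} to move between $\norm{\cdot}_{\mathrm F}$ and $\norm{\cdot}_{\ast}$, bounding $\sum_{j\ge 1}\mu^{2(k-j)} \le \frac{1}{1-\mu^2}$, peeling off the $j=0$ boundary term, and applying Jensen ($\ecdu{N_k}\le\sqrt{\ec{\norm{N_k}_{\ast}^{2}}}$) followed by $\sqrt{a+b}\le\sqrt a+\sqrt b$, I would arrive at $\ecdu{N_k} \le \mu^{k}(1-\mu)^2\rho\sigma + \rho\sigma\sqrt{\frac{1-\mu}{1+\mu}}$. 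Combining with the drift bound via $\ecdu{D_k}\le\norm{B_k}_{\ast}+\ecdu{N_k}$ then gives the statement. The delicate parts are keeping the martingale/orthogonality structure alive while transferring the estimate to the non-Euclidean dual norm, and tracking the geometric-sum boundary term carefully enough that the constants come out exactly as stated.
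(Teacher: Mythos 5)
Your proposal is correct and follows essentially the same route as the paper's proof: the same one-step recursion for the momentum error, the same split into a deterministic drift part and a martingale noise part, the same smoothness bound $\norm{\nabla f(X_j)-\nabla f(X_{j-1})}_{\ast}\le LA$, the same application of \Cref{lem:grad-quad-bound} at $X_0$, and the same orthogonality-plus-Jensen argument (pass to the Frobenius/$\ell_2$ norm via $\rho$, kill the cross-terms using unbiasedness of the stochastic gradient, then bound the geometric sum by $1/(1-\mu^2)$). The only cosmetic difference is bookkeeping of the base case: you unroll all the way to $D_0$ and carry the $j=0$ noise term inside the martingale sum, whereas the paper keeps $\mu^k\ecdu{E_0}$ separate and peels it by triangle inequality; the two accountings are equivalent. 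One tiny arithmetic slip: peeling the $j=0$ term from $\ec{\norm{N_k}_{\mathrm F}^2}$ gives $(1-\mu)^2\mu^{2k}\sigma^2$ inside the square root, hence $\mu^k(1-\mu)\rho\sigma$ after taking the root, not $\mu^k(1-\mu)^2\rho\sigma$ as you wrote --- but the paper's own displayed proof also only delivers the $(1-\mu)$ power, so this appears to be a typo in the stated lemma rather than an error in your argument.
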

\begin{proof}
  Let $M_k^{\prime} = (1-\mu) M_k$. We have,
  \begin{align*}
    \nabla f(X_k) &- M_k^{\prime} = \nabla f(X_k) - \left[ \mu M_{k-1}^{\prime} + (1-\mu) G_k \right] \\
                                 &\quad = \nabla f(X_k) - \mu M_{k-1}^{\prime} - (1-\mu) G_k \\
                                 &\quad = \nabla f(X_k) - G_k - \mu M_{k-1}^{\prime} + \mu G_k \\
                                 &\quad = \nabla f(X_k) - G_k + \mu \nabla f(X_{k-1}) - \mu M_{k-1}^{\prime} + \mu G_k - \mu \nabla f(X_{k-1}) \\
                                 &\quad = \nabla f(X_k) - G_k + \mu \nabla f(X_{k-1}) - \mu M_{k-1}^{\prime} + \mu G_k - \mu \nabla f(X_k) + \mu \nabla f(X_k) - \mu \nabla f(X_{k-1}) \\
    &\qquad = (1-\mu) \left( \nabla f(X_k) - G_k \right) + \mu (\nabla f(X_{k-1}) - M_{k-1}^{\prime}) + \mu \left( \nabla f(X_k) - \nabla f(X_{k-1}) \right).
  \end{align*}
  Let $E_k = \nabla f(X_k) - M_k^{\prime}$, $S_k = \nabla f(X_k) - G_k$, and $R_k = \nabla f(X_k) - \nabla f(X_{k-1})$. Then the above is,
  \begin{align*}
    E_k &= \mu E_{k-1} + (1-\mu) S_k + \mu R_k \\
        &= \mu^k E_0 + \sum_{j=0}^{k-1} \mu^j \left[ (1-\mu) S_{k-j} + \mu R_{k-j} \right].
  \end{align*}
  Now observe that
  \begin{align*}
      \norm{R_{k-j}}_{\ast} &= \norm{\nabla f(X_{k-j}) - \nabla f(X_{k-j-1})}_{\ast} \leq L \norm{X_{k} - X_{k-j-1}} = L \eta_k \norm{Z_k} \leq L A.
  \end{align*}
  Therefore
  \begin{align}
    \ecdu{E_k} &\leq \mu^k \ecdu{E_0} + \ecdu{\sum_{j=0}^{k-1} \mu^j [(1-\mu) S_{k-j} +  \mu R_{k-j}]} \nonumber\\
               &\leq \mu^k \ecdu{E_0} + \sum_{j=0}^{k-1} \mu^{j+1} \ecdu{R_{k-j}} + \ecdu{\sum_{j=0}^{k-1} \mu^j (1-\mu) S_{k-j}} \nonumber\\
               &\leq \mu^k \ecdu{E_0} + L A \sum_{j=0}^{k-1} \mu^{j+1} + \rho \ec{\norm{\sum_{j=0}^{k-1} \mu^j (1-\mu) S_{k-j}}_2} \nonumber\\
\label{eq:5}
    &\leq \mu^k \ecdu{E_0} + \frac{L A \mu}{1-\mu} + \rho \sqrt{\ec{\norm{\sum_{j=0}^{k-1} \mu^j (1-\mu) S_{k-j}}_2^2}}
  \end{align}
  We have
  \begin{align}
    \ec{\norm{\sum_{j=0}^{k-1} \mu^j (1-\mu) S_{k-j}}_2^2} &= \ec{\sum_{j=0}^{k-1} \sum_{i=0}^{k-1} \mu^j (1-\mu)^2 \mu^i \ev{ S_{k-j} , S_{k-i} }} \nonumber\\
                                                           &= \sum_{j=0}^{k-1} \mu^{2j} (1-\mu)^2 \ecn{S_{k-j}} \nonumber\\
                                                           &\leq \sigma^2 (1-\mu)^2 \sum_{j=0}^{k-1} \mu^{2j} \nonumber\\
                                                           &\leq \frac{\sigma^2 (1-\mu)^2}{1-\mu^2} \nonumber\\
                                                           &= \frac{\sigma^2 (1-\mu)^2}{(1-\mu) (1+\mu)} \nonumber\\
\label{eq:4}
    &= \frac{\sigma^2 (1-\mu)}{1+\mu}.
  \end{align}
  Using \cref{eq:4} back in \cref{eq:5} we get
  \begin{align}
\label{eq:6}
  \ecdu{E_k} \leq \mu^k \ecdu{E_0} + \frac{L A \mu}{1-\mu} + \rho \sigma \sqrt{\frac{1-\mu}{1+\mu}}.
  \end{align}
  For the first term,
  \begin{align*}
    \ecdu{E_0} &= \ecdu{\nabla f(X_0) - (1-\mu) M_0} \\
               &= \ecdu{\nabla f(X_0) - (1-\mu) G_0} \\
               &= \ecdu{(1-\mu) (\nabla f(X_0) - G_0) + \mu \nabla f(X_0)} \\
    &\leq (1-\mu) \ecdu{\nabla f(X_0) - G_0} + \mu \norm{\nabla f(X_0)}_{\ast}.
  \end{align*}
  By \Cref{lem:grad-quad-bound} we have $\norm{\nabla f(X_0)}_{\ast} \leq \sqrt{2 L \left( f(X_0) - f_{\ast} \right)}$ and by \Cref{asm:norm,asm:bounded-var} we have $\ecdu{\nabla f(X_0) - G_0} \leq \rho \sigma$. Plugging this back in gives
  \begin{align*}
  \ecdu{E_0} \leq (1-\mu) \rho \sigma + \mu \sqrt{2L \Delta_0},
  \end{align*}
  where $\Delta_0 = f(X_0) - f_{\ast}$. Using the last equation in \Cref{eq:6} yields
  \begin{align*}
  \ecdu{E_k} \leq \mu^k (1-\mu)^2 \rho \sigma + \mu^{k+1} \sqrt{2L \Delta_0} + \frac{L A \mu}{1-\mu} + \rho \sigma \sqrt{\frac{1-\mu}{1+\mu}}.
  \end{align*}
\end{proof}

\begin{proof}[Proof of \Cref{thm:ntr-convergence}]
   By \Cref{lem:descent_simple,lem:mom-diff} we have
   \begin{align*}
       \ec{f(X_{k+1})} &\leq \ec{f(X_k)} - \eta \ecdu{\nabla f(X_k)} + 2 \eta \ecdu{\nabla f(X_k) - (1-\mu) M_k} + \frac{L \eta^2}{2} \\
       \begin{split}
           &\leq \ec{f(X_k)} - \eta \ecdu{\nabla f(X_k)} + 2 \eta \mu^k (1-\mu)^2 \rho \sigma + 2 \eta \mu^{k+1} \sqrt{2 L \Delta_0} \\
           &\qquad + \frac{2 L \eta^2 \mu}{1-\mu} + 2 \eta \rho \sigma \sqrt{\frac{1-\mu}{1+\mu}} + \frac{L \eta^2}{2}.
       \end{split}
   \end{align*}
   Rearranging,
   \begin{align*}
   \begin{split}
       \ecdu{\nabla f(X_k)} &\leq \frac{1}{\eta} \left [ \ec{f(X_k)} - \ec{f(X_{k+1}} \right] + 2 \mu^k (1-\mu)^2 \rho \sigma + 2 \mu^{k+1} \sqrt{2 L \Delta_0} \\
       &\qquad + \frac{L \eta \mu}{1-\mu} + \rho \sigma \sqrt{\frac{1-\mu}{1+\mu}} + \frac{L \eta^2}{2}.
   \end{split}
   \end{align*}
   Summing up both sides as $k=0, 1, \ldots, T-1$ and telescoping
   \begin{align*}
   \begin{split}
       \sum_{k=0}^{T-1} \ecdu{\nabla f(X_k)} &\leq \frac{1}{\eta} \left [ f(X_0) - \ec{f(X_{T}} \right] + 2 (1-\mu)^2 \rho \sigma \sum_{k=0}^{T-1} \mu^k + 2 \mu \sqrt{2 L \Delta_0} \sum_{k=0}^{T-1} \mu^k \\
       &\qquad + \frac{L \eta \mu T}{1-\mu} + \rho \sigma T \sqrt{\frac{1-\mu}{1+\mu}} + \frac{L \eta}{2}.
   \end{split} \\
   \begin{split}
       &\leq \frac{\Delta_0}{\eta} + 2 (1-\mu) \rho \sigma + \frac{2 \mu \sqrt{2 L \Delta_0}}{1-\mu} + \frac{L \eta \mu T}{1-\mu} + \rho \sigma T \sqrt{\frac{1-\mu}{1+\mu}} + \frac{L \eta}{2}.
   \end{split}
   \end{align*}
   Dividing both sides by $T$ and lower bounding the average on the left hand side by the minimum yields the theorem's statement.
\end{proof}

\begin{proof}[Proof of \Cref{thm:muonbp-convergence}]
Let \Cref{asm:norm} hold for both norms with constants $\oprho$ and $\blockrho$ respectively and let $\rhobp = \frac{\oprho}{P} + \frac{P-1}{P} \blockrho$. We will later show that $\oprho, \blockrho$, and $\rhobp$ are  all bounded by $1$. Let $k \leq T-1$. If $k$ is divisible by $P$, then by \Cref{lem:descent_simple}  we have
   \begin{align}
   \label{eq:periodic-proof-1}
   \begin{split}
       \ec{f(X_{k+1})} &\leq \ec{f(X_k)} - \etafull \ec{\dopnorm{\nabla f(X_k)}} \\
       &\qquad + 2 \etafull \ec{ \dopnorm{\nabla f(X_k) - (1-\mu) M_k}} + \frac{\lop \etafull^2}{2}.
   \end{split}
    \end{align}
    To apply \Cref{lem:mom-diff} with the operator norm, we need to ensure that the updates $Z_{\tau} = \frac{X_{\tau} - X_{\tau-1}}{\eta}$ always satisfy $\opnorm{Z_{\tau}} \eta_{\tau} \leq A$ for all $\tau \leq k$, where $\eta_{\tau}$ is the stepsize used on the $\tau$-th iteration. Observe that on all $\tau$ such that $\tau \% P = 0$, this is trivially true with $A=\etafull$. On steps where $\tau$ is not divisible by $P$, we have $\opnorm{Z_{\tau}} \leq \sqrt{rc} \blocknorm{Z_{\tau}} \leq \sqrt{rc}$ (see \Cref{sec:norm-equivalences}), since on those steps $\eta_{\tau} = \etablock$ we therefore have $\opnorm{Z_{\tau}} \eta_{\tau} \leq \etablock \sqrt{rc}$. Therefore in all cases, $\opnorm{Z_{\tau}} \eta_{\tau} \leq \max( \etablock \sqrt{rc}, \etafull)$ and we can apply \Cref{lem:mom-diff} to get
   \begin{align*}
   \begin{split}
       \ec{\dopnorm{\nabla f(X_k) - (1-\mu) M_k}} &\leq \mu^k (1-\mu)^2 \oprho \sigma + \mu^{k+1} \sqrt{2 \lop \Delta_0} \\
       &\quad + \frac{\lop \mu}{1-\mu} \max ( \etafull, \etablock\sqrt{rc}) + \oprho \sigma \sqrt{\frac{1-\mu}{1+\mu}}.
   \end{split}
   \end{align*}
    We can then use this to bound the third term on the right hand side of \Cref{eq:periodic-proof-1} to get
    \begin{align}
       \begin{split}
           \ec{f(X_{k+1})} &\leq \ec{f(X_k)} - \etafull \ec{ \dopnorm{\nabla f(X_k)}} + 2 \etafull \mu^k (1-\mu)^2 \oprho \sigma  \\
           &\qquad + 2 \etafull \mu^{k+1} \sqrt{2 \lop \Delta_0} + \frac{2 \lop \eta^2 \sqrt{rc} \mu}{1-\mu} + 2 \eta \oprho \sigma \sqrt{\frac{1-\mu}{1+\mu}} + \frac{\lop \eta^2}{2}.
       \end{split}
       \label{eq:periodic-proof-2}
   \end{align}
   Alternatively, if $k$ is not divisible by $P$, then similar to the above we first use \Cref{lem:descent_simple} and the fact that $\dblocknorm{\nabla f(X_k)} \geq \dopnorm{\nabla f(X_k)}$ to get
   \begin{align}
   \begin{split}
      \ec{f(X_{k+1})} &\leq \ec{f(X_k)} - \etablock \ec{ \dopnorm{\nabla f(X_k)} } + 2 \etablock \ec { \dblocknorm{\nabla f(X_k) - (1-\mu) M_k} } \\
      &\quad + \frac{\lblock \etablock^2}{2}.
   \end{split}
      \label{eq:periodic-proof-3}
   \end{align}
   We now apply \Cref{lem:mom-diff} to the block norm. Note that if $\tau$ is not divisible by $P$, $\blocknorm{Z_{\tau}} \eta_{\tau} = \blocknorm{Z_{\tau}} \etablock \leq \etablock$. If $\tau$ is divisible by $P$, then $\blocknorm{Z_{\tau}} \eta_{\tau} = \blocknorm{Z_{\tau}} \etafull \leq \opnorm{Z_{\tau}} \etafull \leq \etafull$. Therefore by \Cref{lem:mom-diff} we have
   \begin{align}
   \begin{split}
       \ec{\dblocknorm{\nabla f(X_k) - (1-\mu) M_k}} \leq \mu^k (1-\mu)^2 \blockrho \sigma &+ \mu^{k+1} \sqrt{2 \lblock \Delta_0} + \frac{\lblock \max(\etablock, \etafull) \mu}{1-\mu} \\
       &\qquad + \blockrho \sigma \sqrt{\frac{1-\mu}{1+\mu}}.
   \end{split}
      \label{eq:periodic-proof-4}
   \end{align}
   Using \Cref{eq:periodic-proof-4}  in \Cref{eq:periodic-proof-3} we obtain
   \begin{align}
   \begin{split}
      &\ec{f(X_{k+1})} \leq \ec{f(X_k)} - \etablock \ec{ \dopnorm{\nabla f(X_k)} }  + \frac{\lblock \etablock^2}{2} \\
      &\quad + 2 \etablock \left [ \mu^k (1-\mu)^2 \blockrho \sigma + \mu^{k+1} \sqrt{2 \lblock \Delta_0} + \frac{\lblock \max(\etablock, \etafull) \mu}{1-\mu} + \blockrho \sigma \sqrt{\frac{1-\mu}{1+\mu}} \right ].
   \end{split}\label{eq:periodic-proof-5}
   \end{align}
   Let $S_P = \{k < T \mid k \pmod P = 0\}$ and $S_B = \{k < T \mid k \pmod P \neq 0\}$. By rearranging the one-step descent inequalities and using \Cref{eq:periodic-proof-2,eq:periodic-proof-5} we sum over $k=0, \ldots, T-1$:
   \begin{align}
   \begin{split}
      \sum_{k=0}^{T-1} (1_{k \in S_P} \etafull + 1_{k \in S_B} \etablock) \ec{\dopnorm{\nabla f(X_k)}} &\leq \sum_{k=0}^{T-1} (\ec{f(X_k)} - \ec{f(X_{k+1})}) \\
      &+\sum_{k \in S_P} (\text{Error}_k^{\text{full}}) + \sum_{k \in S_B} (\text{Error}_k^{\text{block}}),
      \end{split}
       \label{eq:periodic-proof-6}
   \end{align}
   where
   \begin{align*}
   \begin{split}
       \text{Error}_k^{\text{full}} &= 2 \etafull \mu^k (1-\mu)^2 \oprho \sigma + 2 \etafull \mu^{k+1} \sqrt{2 \lop \Delta_0} + \frac{2 \lop \etafull \max(\etablock \sqrt{rc}, \etafull) \mu}{1-\mu} \\
       &+ 2 \etafull \oprho \sigma \sqrt{\frac{1-\mu}{1+\mu}} + \frac{\lop \etafull^2}{2},
   \end{split}  \\
   \begin{split}
       \text{Error}_k^{\text{block}} &= 2 \etablock \mu^k (1-\mu)^2 \blockrho \sigma + 2 \etablock \mu^{k+1} \sqrt{2 \lblock \Delta_0} + \frac{2 \lblock \etablock \max(\etablock, \etafull) \mu}{1-\mu} \\
       &+ 2 \etablock \blockrho \sigma \sqrt{\frac{1-\mu}{1+\mu}} + \frac{\lblock \etablock^2}{2}.
   \end{split}
   \end{align*}
   The first term on the right hand side of \cref{eq:periodic-proof-6} is a telescoping sum bounded by $\Delta_0 = f(X_0) - f_\ast$. For the error terms dependent on $\mu^k$, we can form a simple upper bound by summing over all $k$ and using the larger constants ($\lblock, \blockrho$):
   \begin{align*}
       &\text{Momentum-dependent error} \\
       \begin{split}
       &= \sum_{k \in S_P} \left [ 2 \etafull \mu^k (1-\mu)^2 \oprho \sigma + 2 \etafull \mu^{k+1} \sqrt{2 \lop \Delta_0} \right] \\
       &\quad + \sum_{k \in S_B} \left [ 2 \etablock \mu^k (1-\mu)^2 \blockrho \sigma + 2 \etablock \mu^{k+1} \sqrt{2 \lblock \Delta_0} \right]
       \end{split} \\
       &\leq 4 \sum_{k=0}^{T-1} \left( \mu^k (1-\mu)^2 \max(\etafull \oprho, \etablock \blockrho) \sigma + \mu^{k+1} \max(\etafull \sqrt{\lop}, \etablock \sqrt{\lblock}) \sqrt{2 \lblock \Delta_0} \right) \\
       &\leq 4 \left( (1-\mu)  \max(\etafull \oprho, \etablock \blockrho) \sigma + \frac{\mu \max(\etafull \sqrt{\lop}, \etablock \sqrt{\lblock}) \sqrt{2\Delta_0}}{1-\mu} \right).
   \end{align*}
   For the other terms, we sum them proportionally. Observe that $|S_P| = T/P$ and $|S_B| = T(P-1)/P$, as we assume that $T$ is divisible by $P$. Therefore,
   \begin{align*}
   \begin{split}
       \text{Constant error} &= \sum_{k \in S_P} \left( \frac{\lop\etafull^2}{2} + \frac{2\lop \max(\etafull \etablock \sqrt{rc}, \etafull^2) \mu}{1-\mu} + 2\etafull \oprho\sigma\sqrt{\frac{1-\mu}{1+\mu}} \right) \\
       &\quad+ \sum_{k \in S_B} \left( \frac{\lblock\etablock^2}{2} + \frac{2\lblock \max(\etafull\etablock, \etablock^2) \mu}{1-\mu} + 2\etablock\blockrho\sigma\sqrt{\frac{1-\mu}{1+\mu}} \right)
   \end{split} \\
   \begin{split}
       &= T \Big ( \frac{\etafull^2 \lop}{2P} + \frac{\lblock \etablock^2 (P-1)}{2 P} +  \\
       &\qquad + \frac{2\mu}{1-\mu}\Big [\frac{\lop \max(\etafull \etablock \sqrt{rc}, \etafull^2)}{P}  + \frac{\lblock \max(\etafull\etablock,\etablock^2) (P-1)}{P}\Big] \\
       & \qquad + 2\sigma\sqrt{\frac{1-\mu}{1+\mu}}\left[\frac{\etafull \oprho}{P} + \frac{\etablock \blockrho(P-1)}{P}\right] \Big )
   \end{split}
   \end{align*}
   Observe that,
   \begin{align*}
       \sum_{k=0}^{T-1} (1_{k \in S_P} \etafull + 1_{k \in S_B} \etablock) \ec{\dopnorm{\nabla f(X_k)}} &\geq T \left [ \frac{\etafull}{P} + \frac{\etablock (P-1)}{P} \right] \min_{k} \ec{\dopnorm{\nabla f(X_k)}} \\
       &= T \bar{\eta} \min_k \ec{\dopnorm{\nabla f(X_k)}}.
   \end{align*}
   Observe that for the operator norm and the block spectrum norm, we have $\oprho \leq 1$ and $\blockrho \leq 1$ by \Cref{lem:rhos}. Using this and combining all the error parts we get
   \begin{align*}
   \begin{split}
        &\bar{\eta} T \min_{k} \ec{\dopnorm{\nabla f(X_k)}} \leq \Delta_0 + 4 (1-\mu) \sigma \eta_{\max} + \frac{6 \mu \sqrt{\Delta_0}}{1-\mu} \max(\etafull \sqrt{\lop}, \etablock \sqrt{\lblock}) \\
       &+ T \Bigg ( \frac{\etafull^2 \lop}{2P} + \frac{\lblock \etablock^2 (P-1)}{2 P} + 2\sigma\sqrt{\frac{1-\mu}{1+\mu}} \bar{\eta}   \\
        &\qquad + \frac{2\mu}{1-\mu}\Big [\frac{\lop \etafull \max(\etablock \sqrt{rc}, \etafull)}{P}  + \frac{\lblock \etablock \max(\etafull,\etablock) (P-1)}{P}\Big]\Bigg ).
   \end{split}
   \end{align*}
   Dividing both sides by $\bar{\eta} T$ yields
   \begin{align*}
&\min_{k<T}\ec{\dopnorm{\nabla f(X_k)}} \le
\frac{\Delta_0}{\bar{\eta}\,T}
+\frac{4(1-\mu)\sigma\,\eta_{\max}}{\bar{\eta}\,T}
+\frac{6\mu\sqrt{\Delta_0}}{1-\mu}\cdot\frac{\max\{\etafull\sqrt{\lop},\,\etablock\sqrt{\lblock}\}}{\bar{\eta}\,T} \\
&
+\frac{1}{\bar{\eta}}\Bigg[
\frac{\lop\,\etafull^{2}}{2P}
+\frac{\lblock\,\etablock^{2}(P-1)}{2P}
\\
&\qquad +\frac{2\mu}{1-\mu}\Big(
\frac{\lop\,\etafull\,\max\{\etablock\sqrt{rc},\,\etafull\}}{P}
+\frac{\lblock\,\etablock\,\max\{\etafull,\,\etablock\}(P-1)}{P}
\Big)
\Bigg] +2\sigma\sqrt{\frac{1-\mu}{1+\mu}},
\end{align*}
where
\begin{align*}
\bar{\eta}=\frac{\etafull}{P}+\frac{P-1}{P}\,\etablock,
\qquad
\eta_{\max}=\max\{\etafull,\etablock\}.
\end{align*}
\end{proof}

\section{Additional Algorithms, Results, and Experimental Details} \label{app:experiments}

The orthogonalization via Newton Schulz iterations procedure is stated as \Cref{alg:newton-schulz}.

\begin{algorithm}
\caption{Orthogonalize-via-NS($G$, $K$, $\varepsilon = 10^{-7}$)}\label{alg:newton-schulz}
$a\leftarrow 2,\; b\leftarrow-1.5,\; c\leftarrow0.5$\;
$X\leftarrow G / (\lVert G\rVert + \varepsilon)$\;
\For{$t\leftarrow1$ \KwTo $K$}{
  $A\leftarrow X X^\top$\;
  $B\leftarrow bA + cA^2$\;
  $X\leftarrow aX + BX$\;
}
\Return $X$\;
\end{algorithm}
All the architectural hyperparameters and learning rates used are given in \Cref{tab:_hparams} below. We used the learning rates from~\citep{liu25_muon_is_scalab_llm_train} as a starting point for each run. We found at a smaller scale that the learning rates recommended therein could be increased by a factor of $\approx 3$ with no harm to convergence for the baseline (Muon) algorithm, and we scaled the other learning rates accordingly. Nevertheless, we do two experiments with smaller learning rates. We use GQA~\citep{ainslie2023gqa}, RoPE~\citep{su2024roformer}, bf16 mixed-precision training, and a weight decay value of $0.1$. We also apply gradient clipping with value $1.0$ to the parameters optimized by AdamW (mainly 1D parameters and the input embedding). We use cosine decay with no warmup for the 960M and 1.2B experiments and the Warmup-Stable-Decay (WSD) schedule~\citep{hagele2024scaling} with linear decay to $4.2 \times 10^{-5}$ for the 8B model.

\begin{table}[htbp]
    \centering
    \caption{\Cref{sub:layerwise} experiments hyperparameters (sequence length = 8K, Batch size=128 sequences for 960M/1.2B models and 256 sequences for 8B models).}
    \label{tab:_hparams}
    \resizebox{\textwidth}{!}{%
    \begin{tabular}{lccccccc}
        \toprule
        Model & Layers & Heads & Query Groups & Hidden Size & (DP, TP) & LR ($\times10^{-3}$) & Tokens (B) \\ \midrule
        960M   & 12  & 16 & 4 & 1536 & (2, 4) & 3.503 & 9.503 \\
        1.2B & 14  & 16 & 4 & 1792 & (2, 4) & 3.291 & 14.143 \\
        1.2B (3x long, larger lr) & 14  & 16 & 4 & 1792 & (2, 4) & 3.291 & 42.143 \\
        1.2B (3x long, smaller lr) & 14  & 16 & 4 & 1792 & (2, 4) & 0.86 & 42.143 \\
        8B (smaller lr) & 32 & 32 & 8 & 4096 & (4, 8) & 0.6 & 9.99 \\
        8B (larger lr) & 32 & 32 & 8 & 4096 & (4, 8) & 1.2 & 9.99 \\
        \bottomrule
    \end{tabular}
    }
  \end{table}

We report the training curves for the 960M model in \Cref{fig:960m}, for the 1.2B model in \Cref{fig:1.2B}, for the 1.2B model trained to 3x Chinchilla with smaller learning rate in \Cref{fig:1.2B_3x_smaller}, with larger learning in \Cref{fig:1.2B_3x}, and for the 8B model in \Cref{fig:8b_smaller} (smaller learning rate) and \Cref{fig:8b_larger} (larger learning rate). Our main observation here is that even after doing RMS norm adjustment (i.e. update is scaled by $\sqrt{\max(A, B)}$ where $A$ and $B$ are the dimensions of the update matrix, which scale inversely with blocking), BlockMuon can become unstable with larger learning rates. On the contrary, MuonBP does not.

A reason why this might be the case is that BlockMuon almost always causes the parameter norms to grow larger over time compared to Muon or MuonBP, as can be seen in \Cref{tab:resultsold}. This holds even when we use small learning rates and learning rate adjustment.

\begin{figure}
    \centering
    \subfigure[Training loss vs. steps]{
        \includegraphics[width=0.3\textwidth]{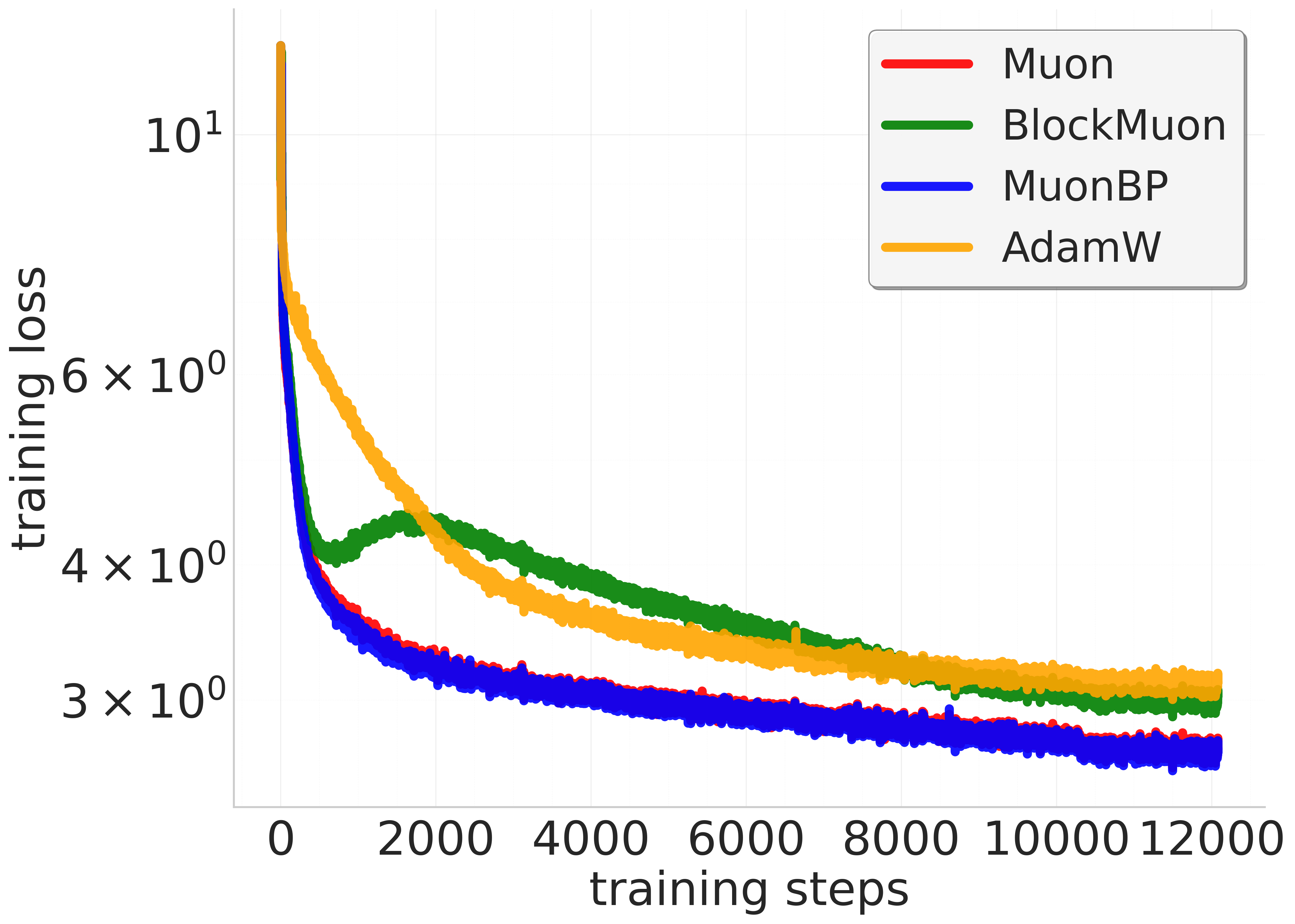}
        \label{fig:960m_train_loss}
    }
    \subfigure[Validation loss vs. steps]{
        \includegraphics[width=0.3\textwidth]{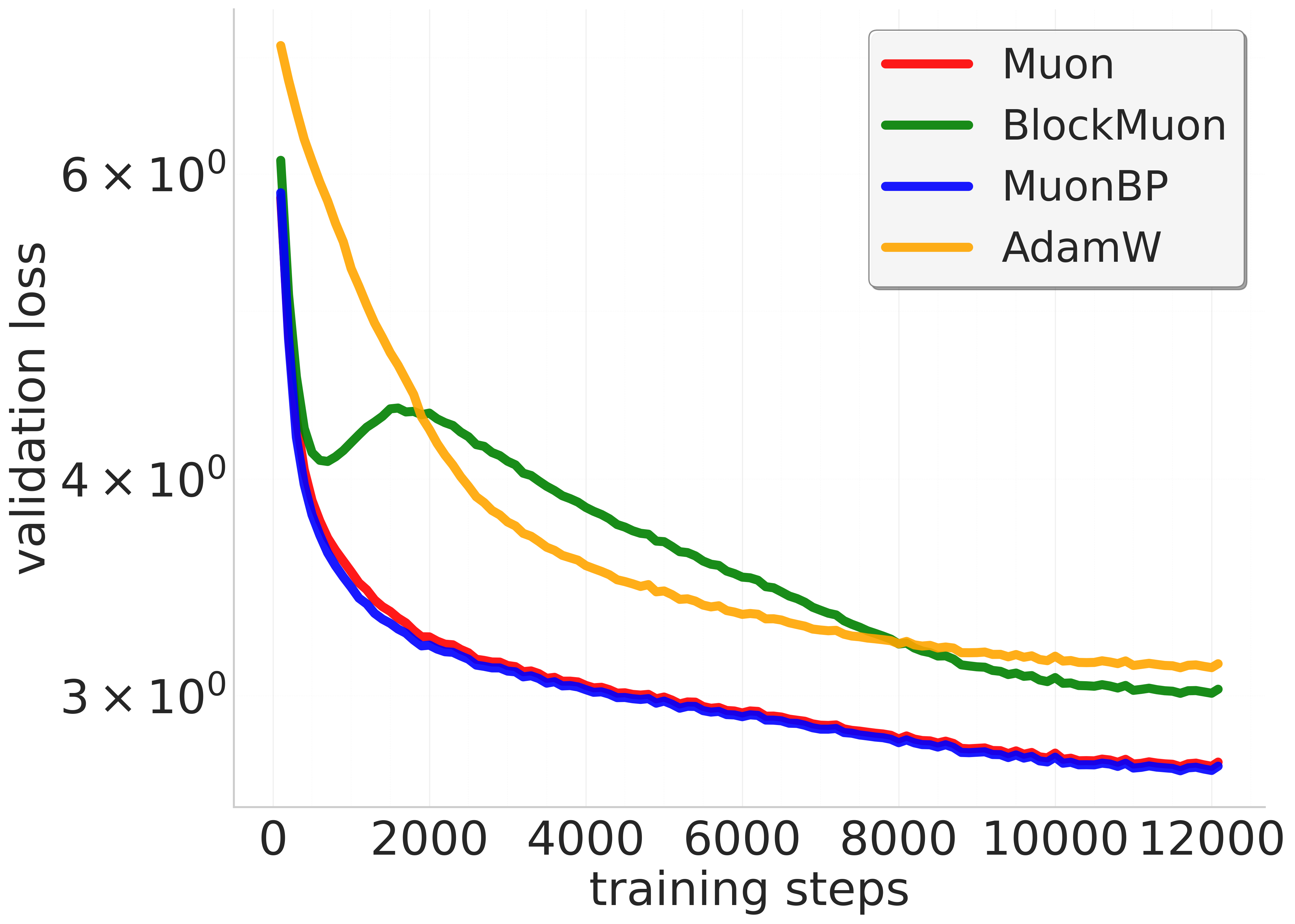}
        \label{fig:960m_valid_loss_steps}
    }
    \subfigure[Validation loss vs. runtime]{
        \includegraphics[width=0.3\textwidth]{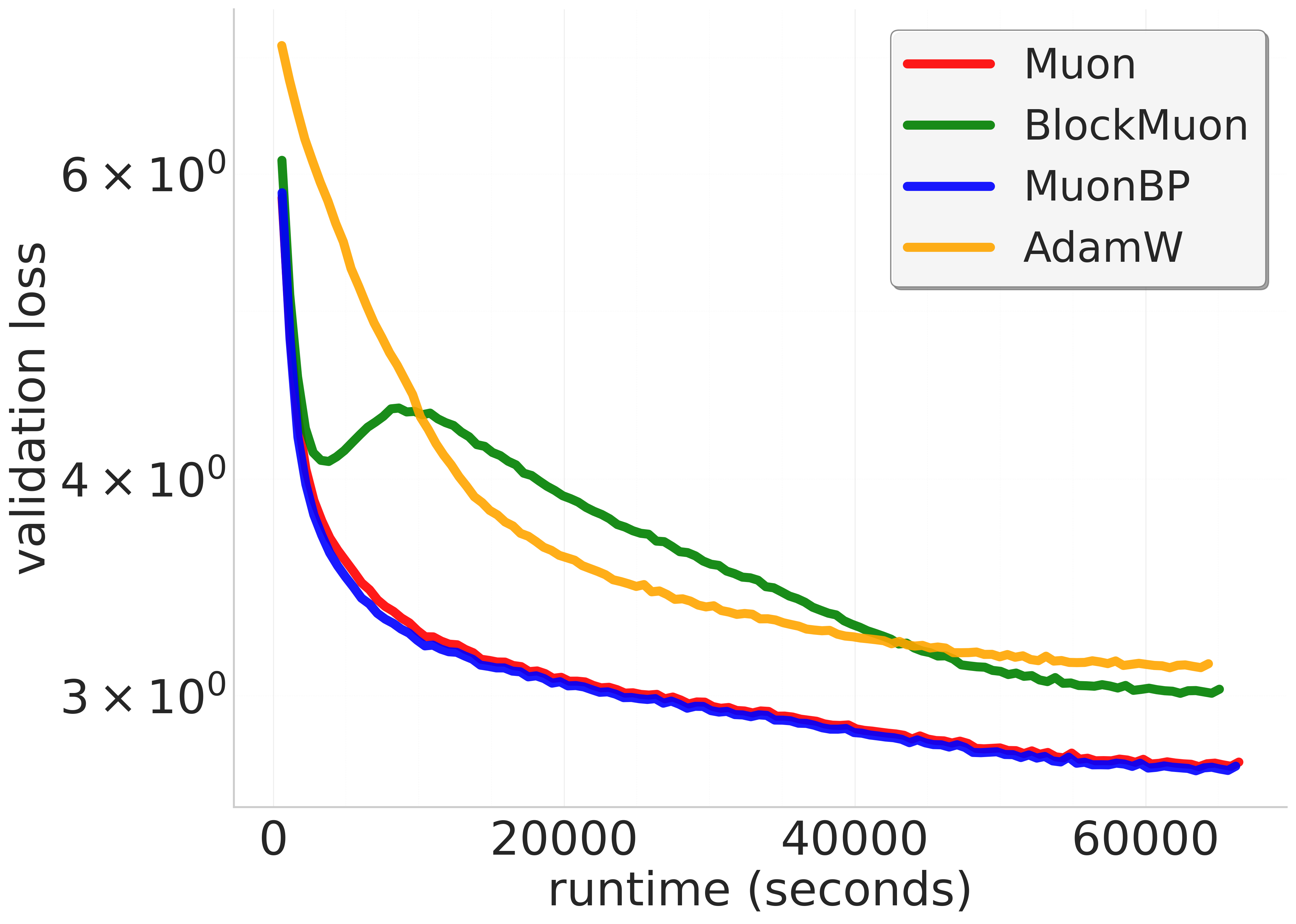}
        \label{fig:960m_valid_loss_runtime}
    }
    \caption{960M model. Comparison of baseline, block, and periodic orthogonal block methods across training steps and wall-clock time.}
    \label{fig:960m}
\end{figure}

\begin{figure}
    \centering
    \subfigure[Training loss vs. steps]{
        \includegraphics[width=0.3\textwidth]{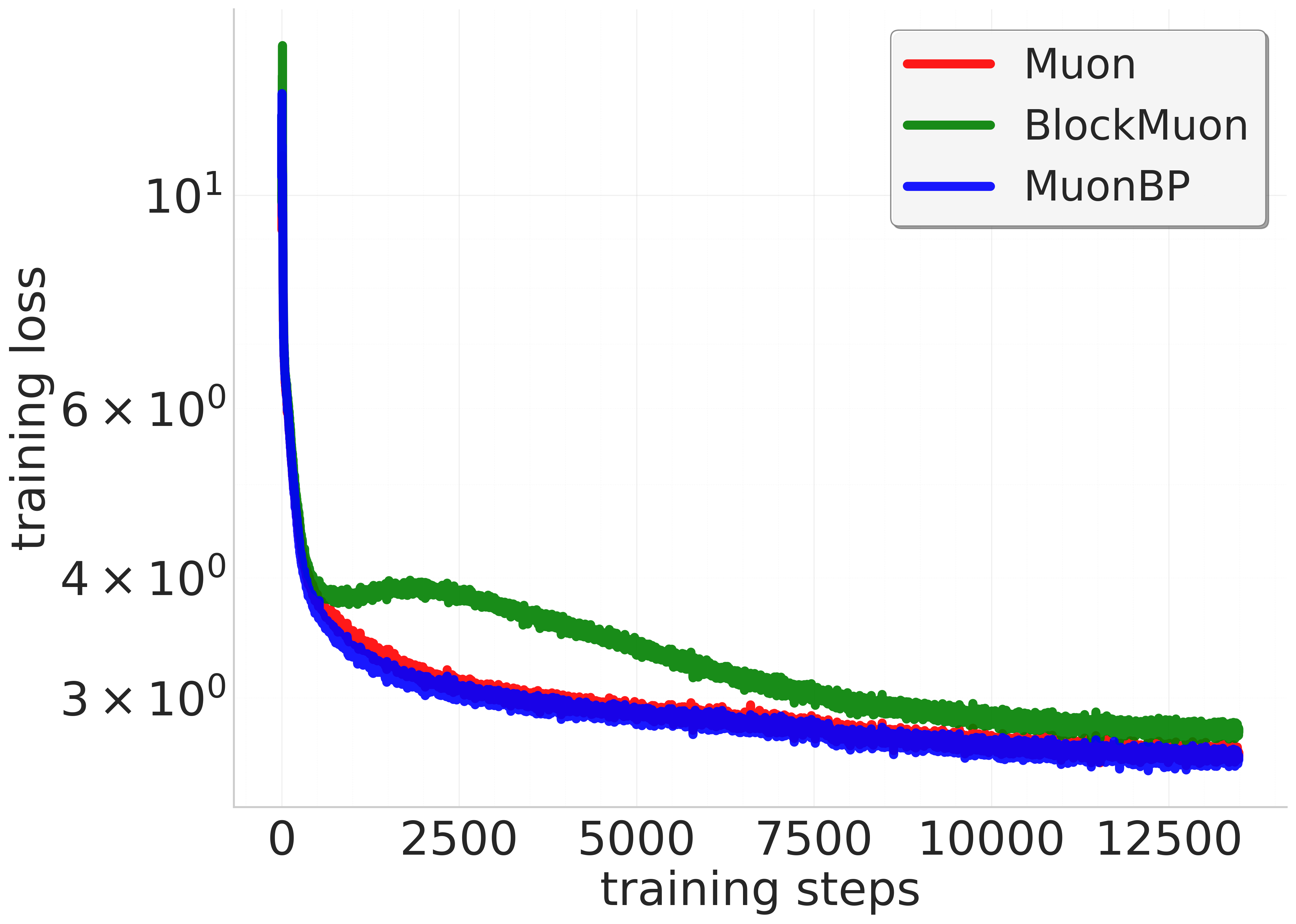}
        \label{fig:1.2B_train_loss}
    }
    \subfigure[Validation loss vs. steps]{
        \includegraphics[width=0.3\textwidth]{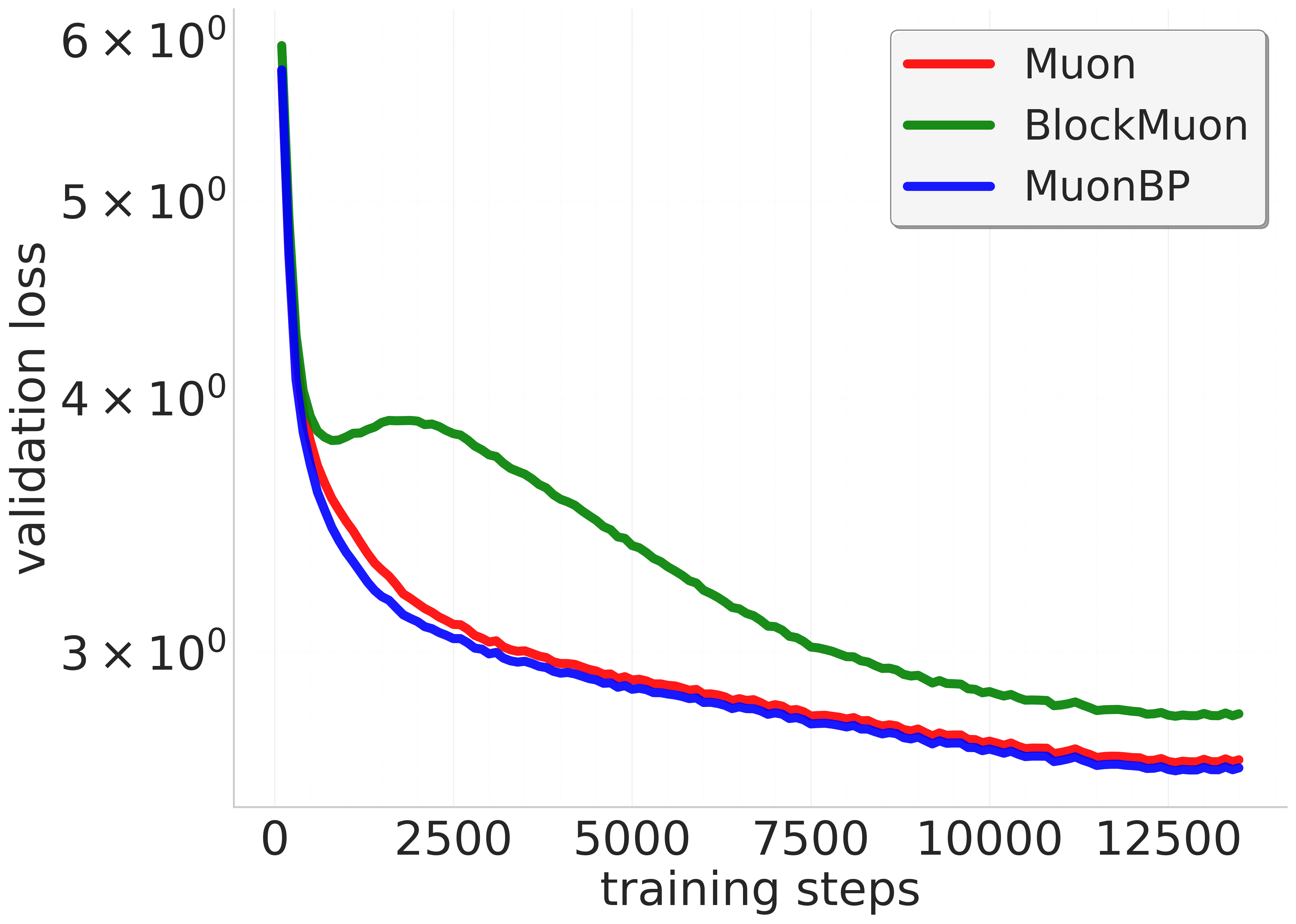}
        \label{fig:1.2B_valid_loss_steps}
    }
    \subfigure[Validation loss vs. runtime]{
        \includegraphics[width=0.3\textwidth]{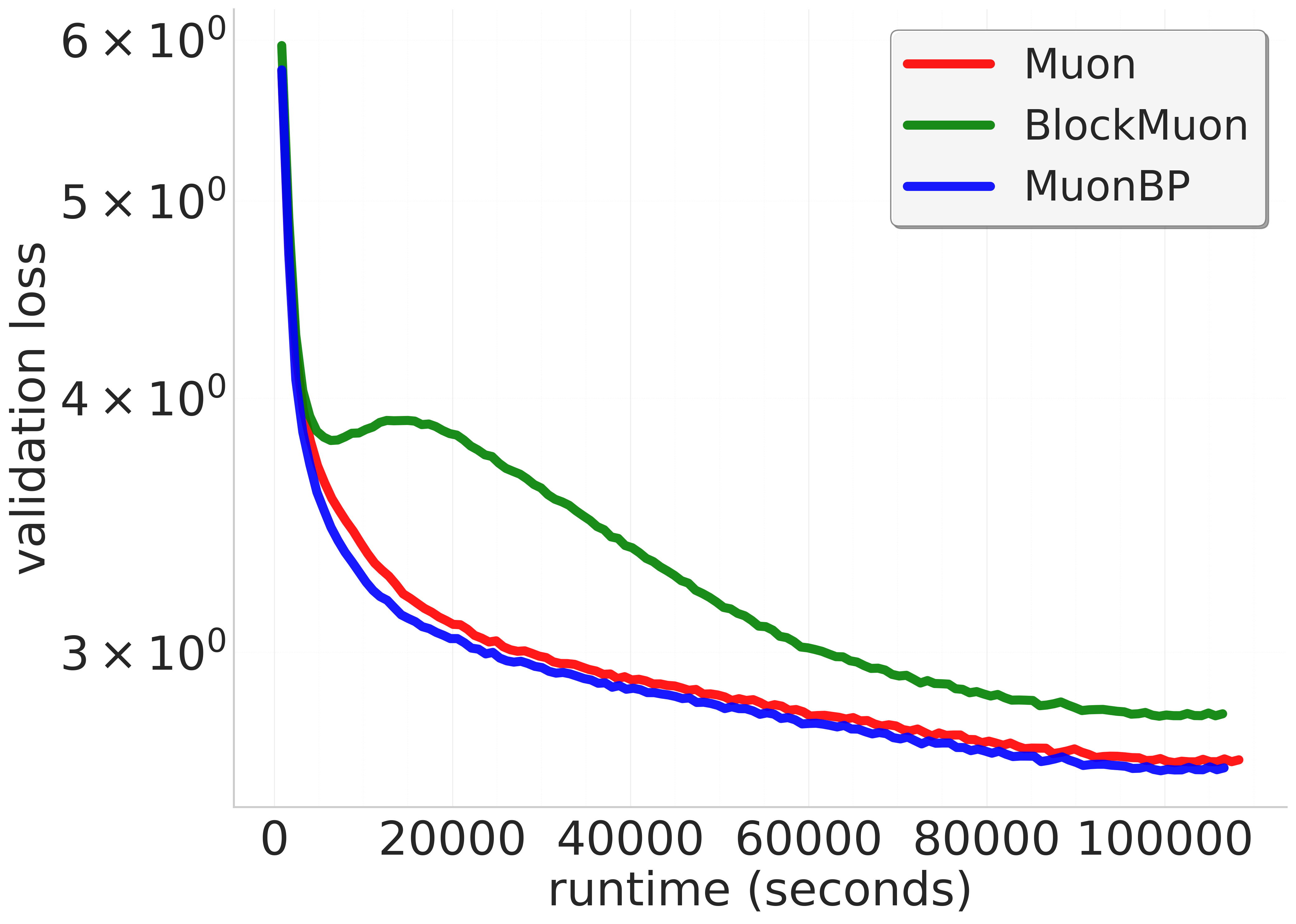}
        \label{fig:1.2B_valid_loss_runtime}
    }
    \caption{1.2B model. Comparison of baseline, block, and periodic orthogonal block methods across training steps and wall-clock time.}
    \label{fig:1.2B}
\end{figure}

\begin{figure}
    \centering
    \subfigure[Training loss vs. steps]{
        \includegraphics[width=0.3\textwidth]{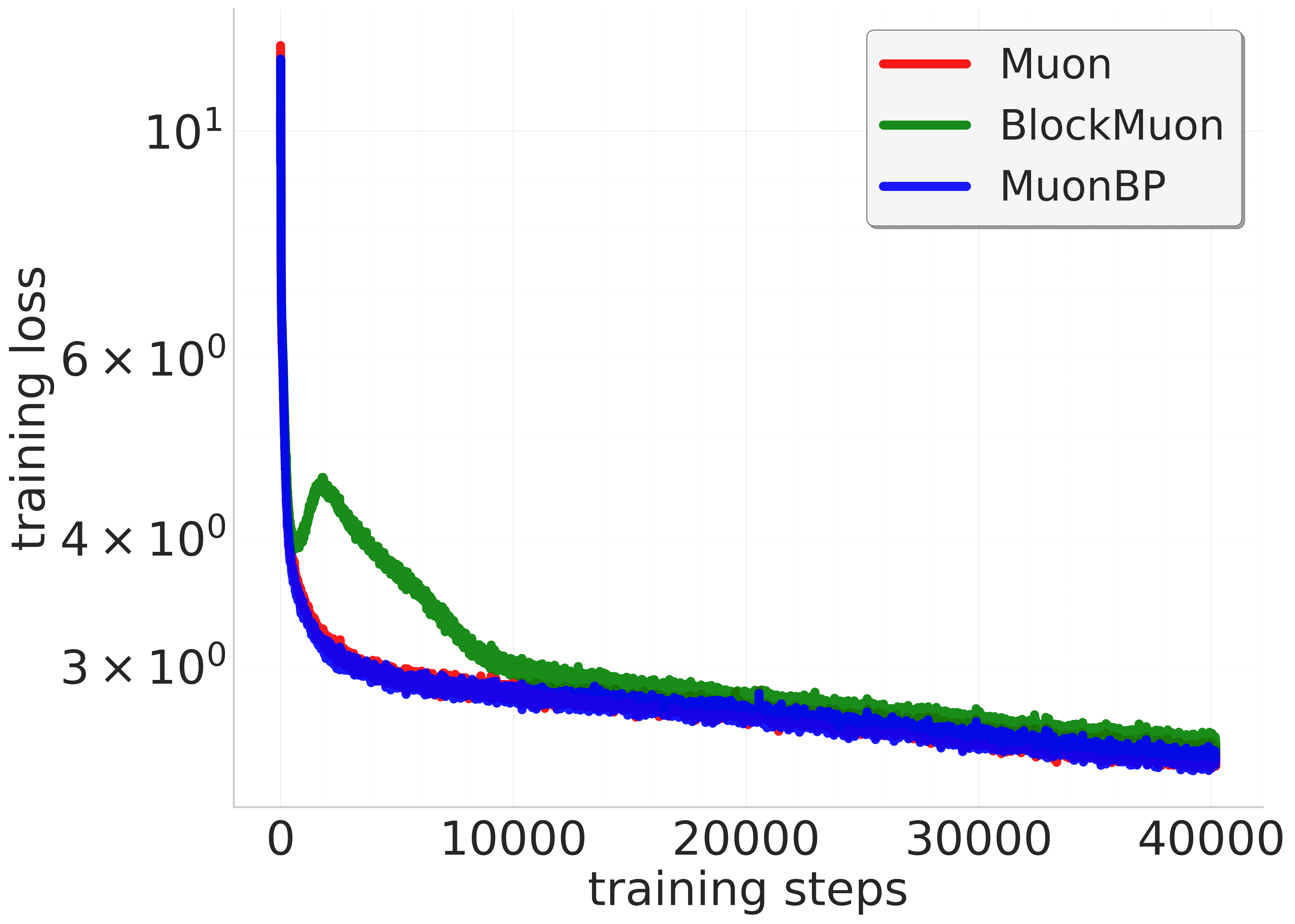}
        \label{fig:1.2B_3x_train_loss}
    }
    \subfigure[Validation loss vs. steps]{
        \includegraphics[width=0.3\textwidth]{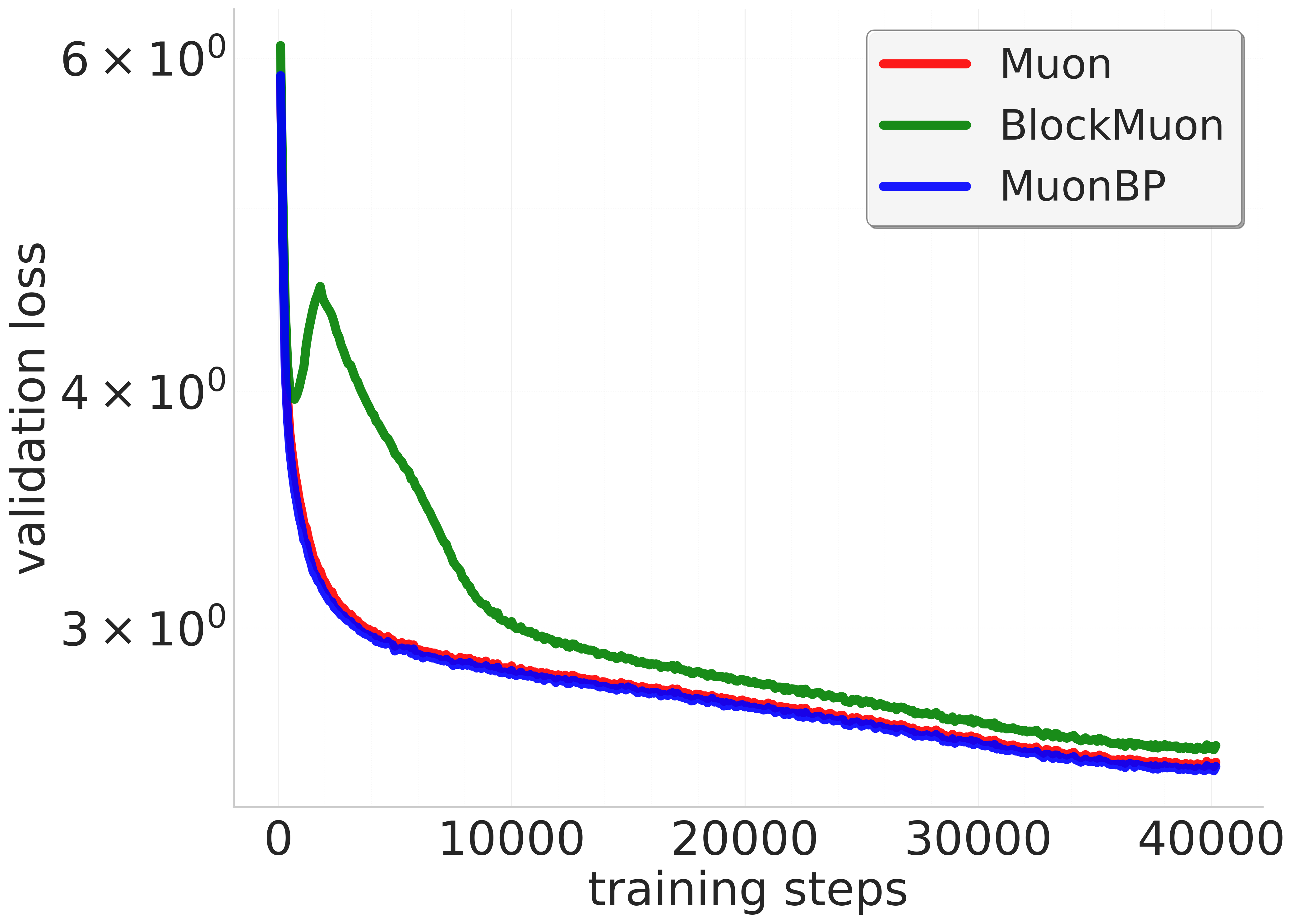}
        \label{fig:1.2B_3x_valid_loss_steps}
    }
    \subfigure[Validation loss vs. runtime]{
        \includegraphics[width=0.3\textwidth]{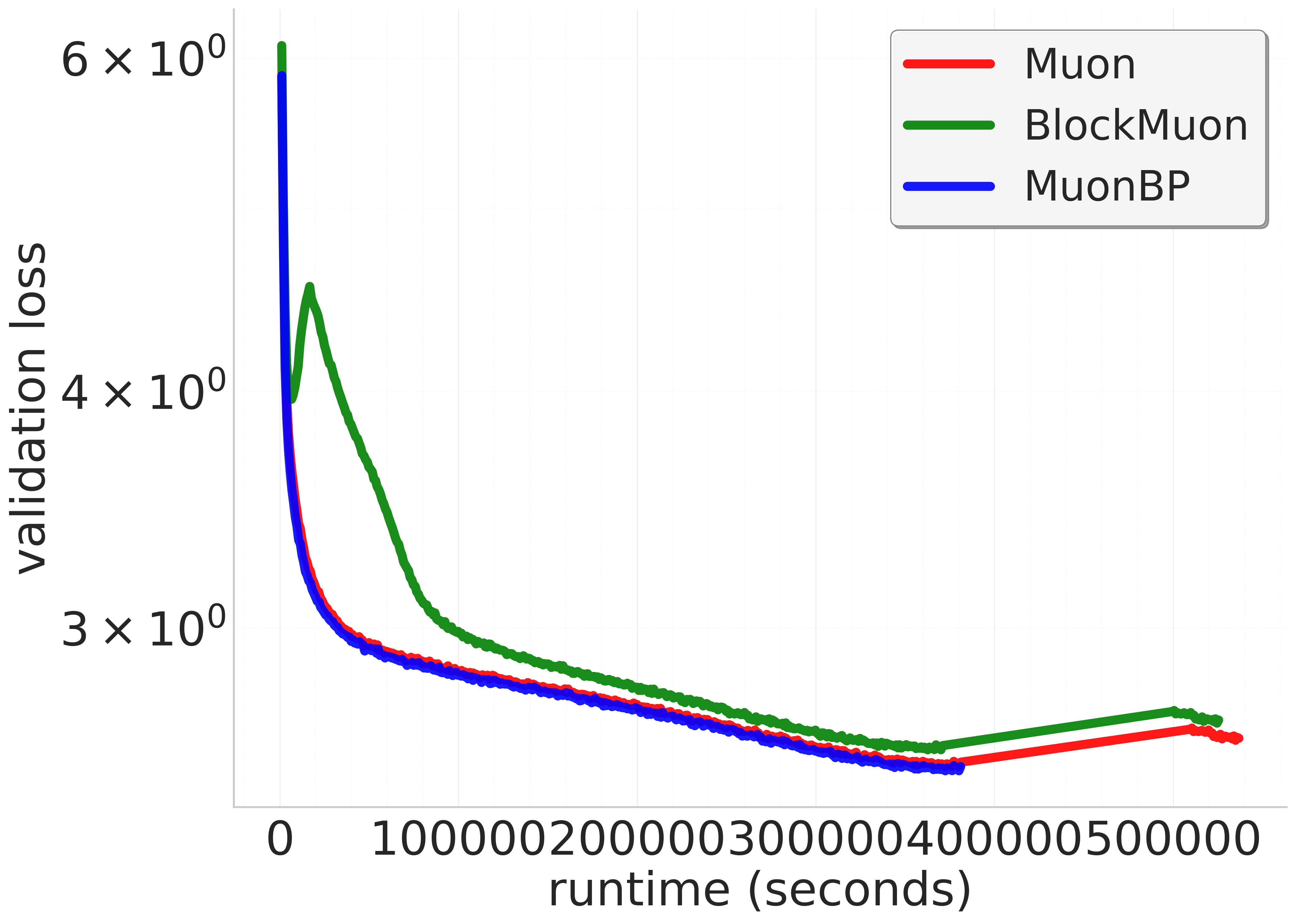}
        \label{fig:1.2B_3x_valid_loss_runtime}
    }
    \caption{1.2B model (larger lr), trained to 3x Chinchilla. Comparison of baseline, block, and periodic orthogonal block methods across training steps and wall-clock time.}
    \label{fig:1.2B_3x}
\end{figure}

\begin{figure}
    \centering
    \subfigure[Training loss vs. steps]{
        \includegraphics[width=0.3\textwidth]{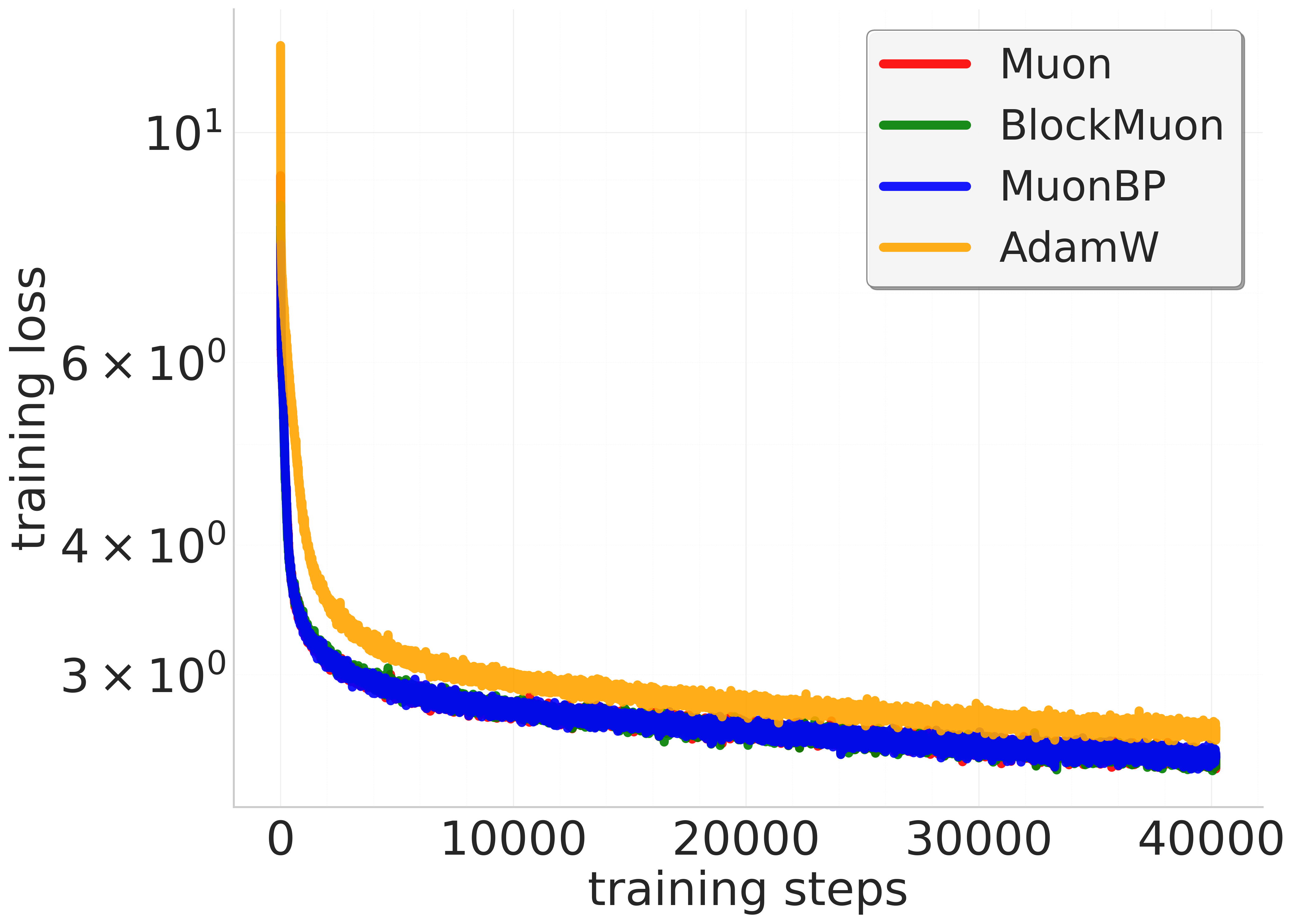}
        \label{fig:1.2B_3x_smaller_train_loss}
    }
    \subfigure[Validation loss vs. steps]{
        \includegraphics[width=0.3\textwidth]{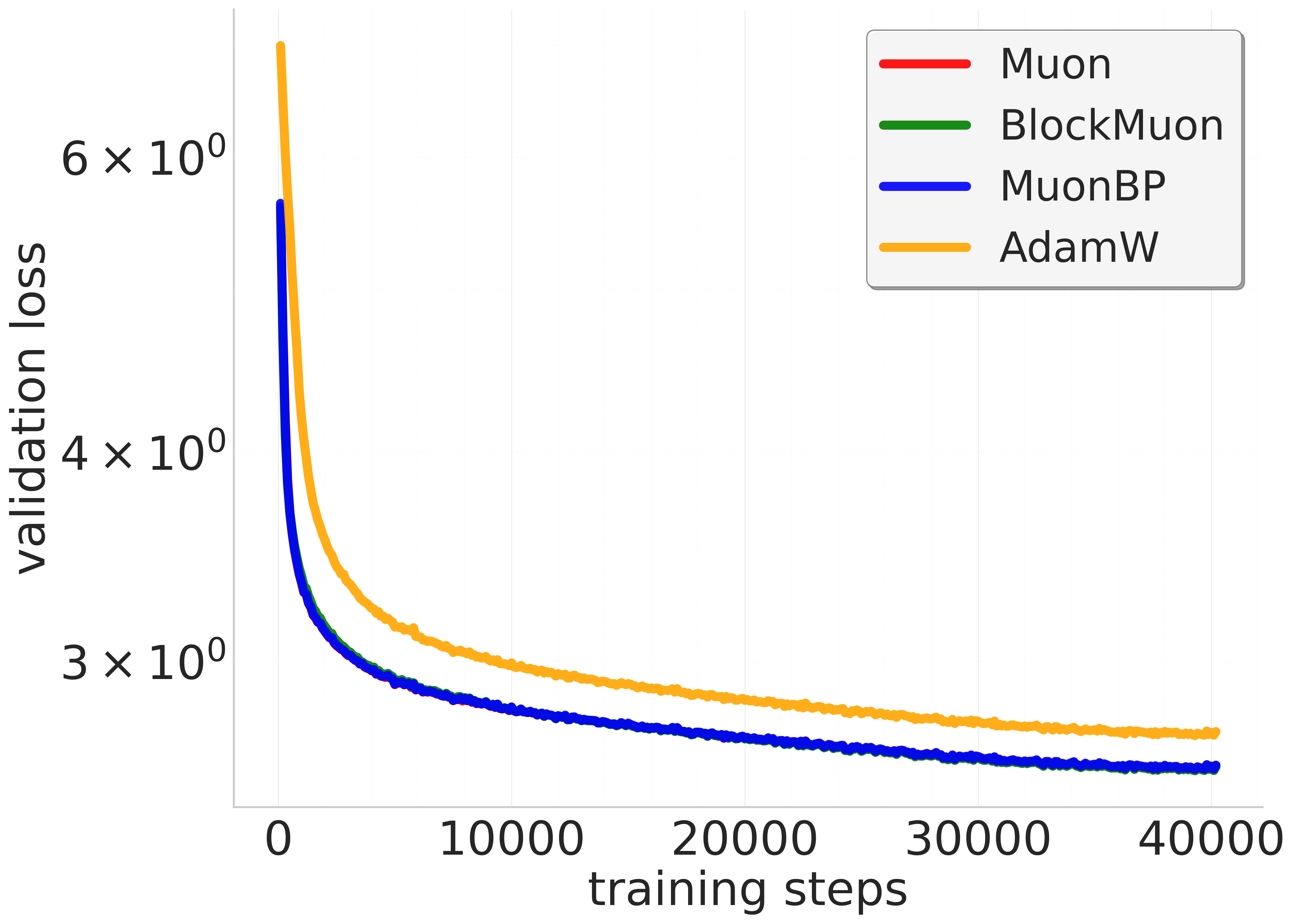}
        \label{fig:1.2B_3x_smaller_valid_loss_steps}
    }
    \subfigure[Validation loss vs. runtime]{
        \includegraphics[width=0.3\textwidth]{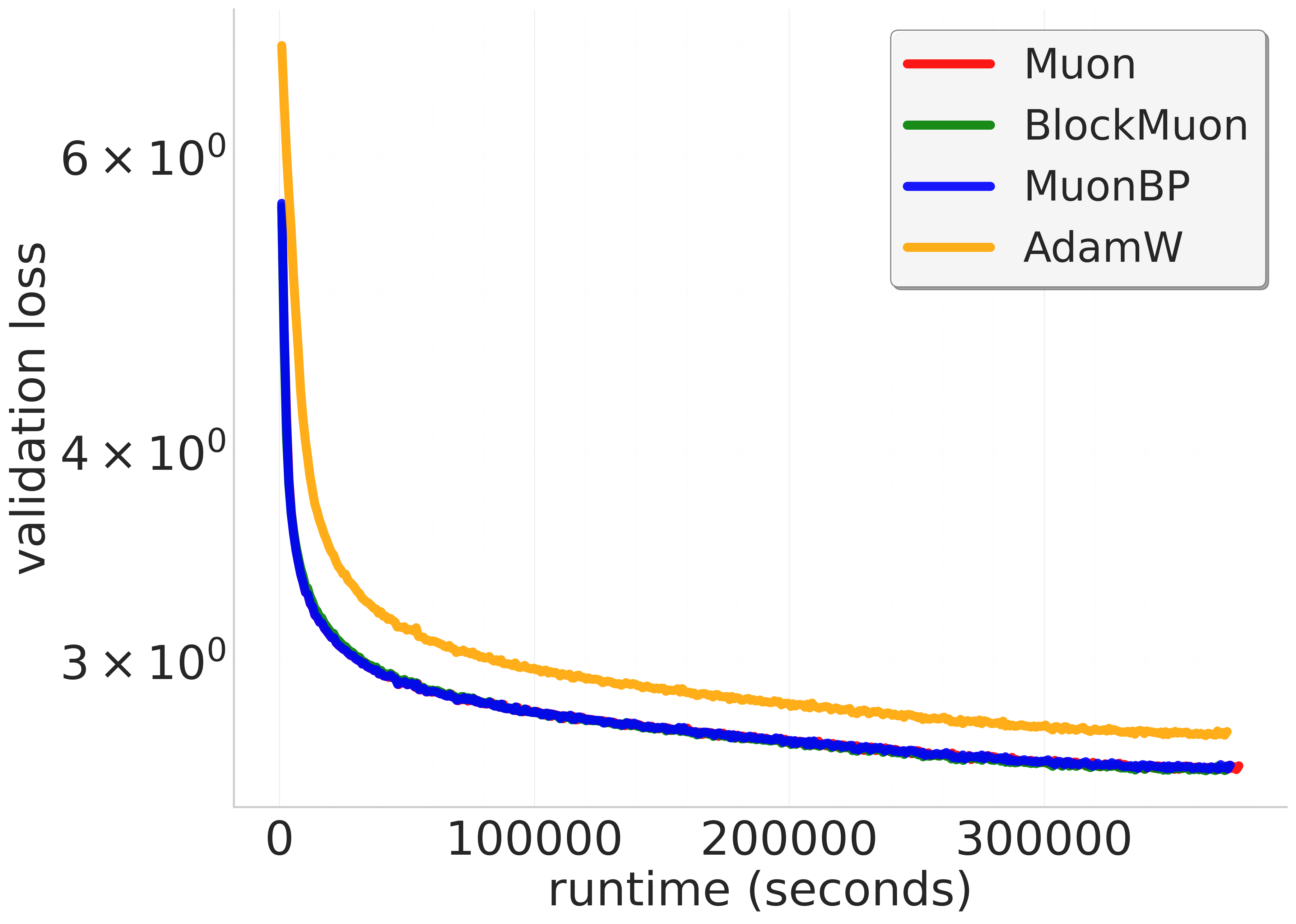}
        \label{fig:1.2B_3x_smaller_loss_runtime}
    }
    \caption{1.2B model (smaller lr), trained to 3x Chinchilla. Comparison of baseline, block, and periodic orthogonal block methods across training steps and wall-clock time.}
    \label{fig:1.2B_3x_smaller}
\end{figure}

\begin{figure}
    \centering
    \includegraphics[width=0.3\textwidth]{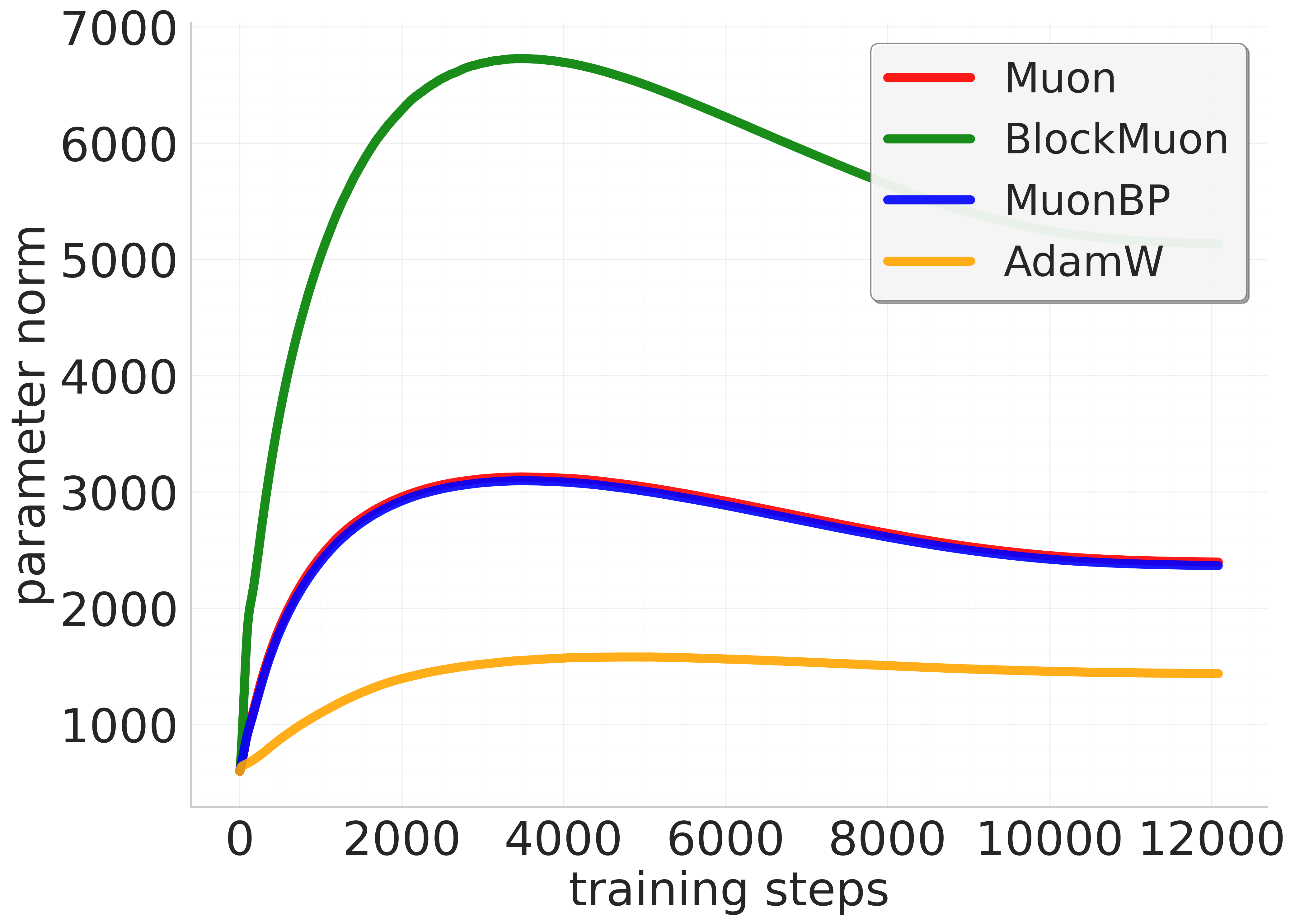}
    \includegraphics[width=0.3\textwidth]{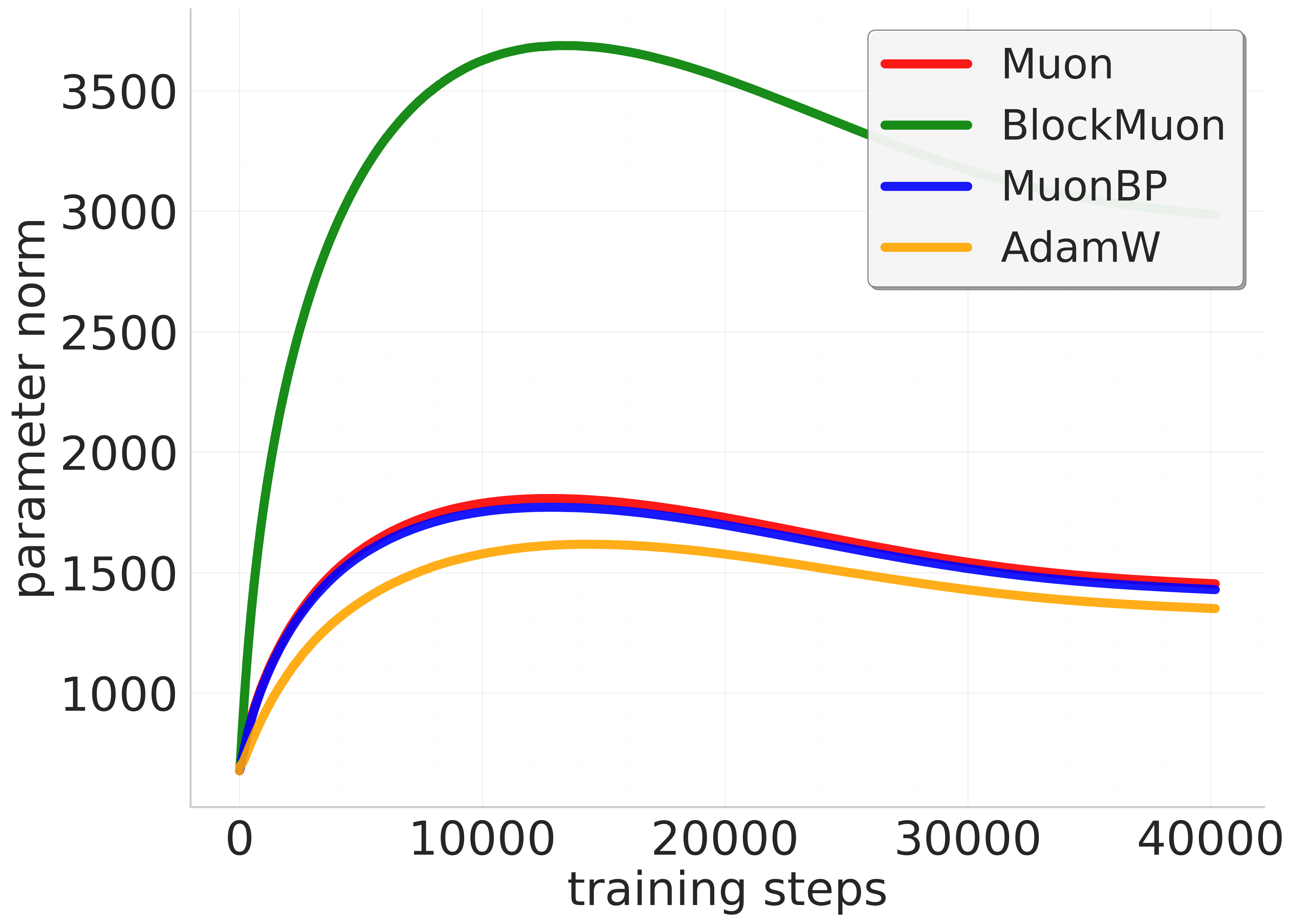}
    \includegraphics[width=0.3\textwidth]{figures/8000_medium/param_norms.png}
    \label{fig:param_norms}
    \caption{Comparison of parameter norms using Muon, BlockMuon, and MuonBP over training on 960M model (left), 1.2B model 3x-Chinchilla with smaller lr (center), and 8B model (right).}
\end{figure}

\begin{figure}
    \centering
     \subfigure[Training loss vs. steps]{
         \includegraphics[width=0.3\textwidth]{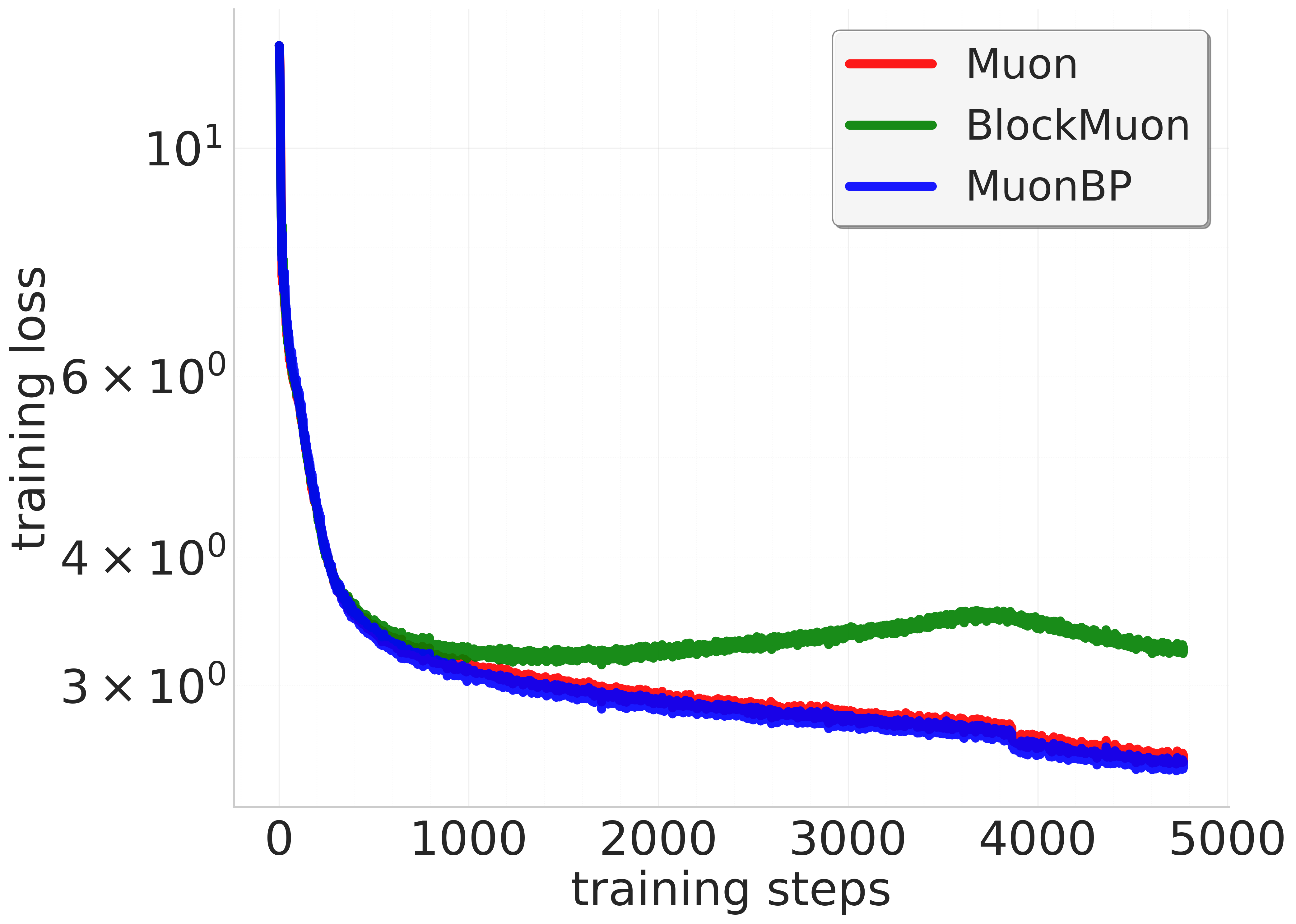}
         \label{fig:8b_train_step}
     }
    \subfigure[Validation loss vs. steps]{
        \includegraphics[width=0.3\textwidth]{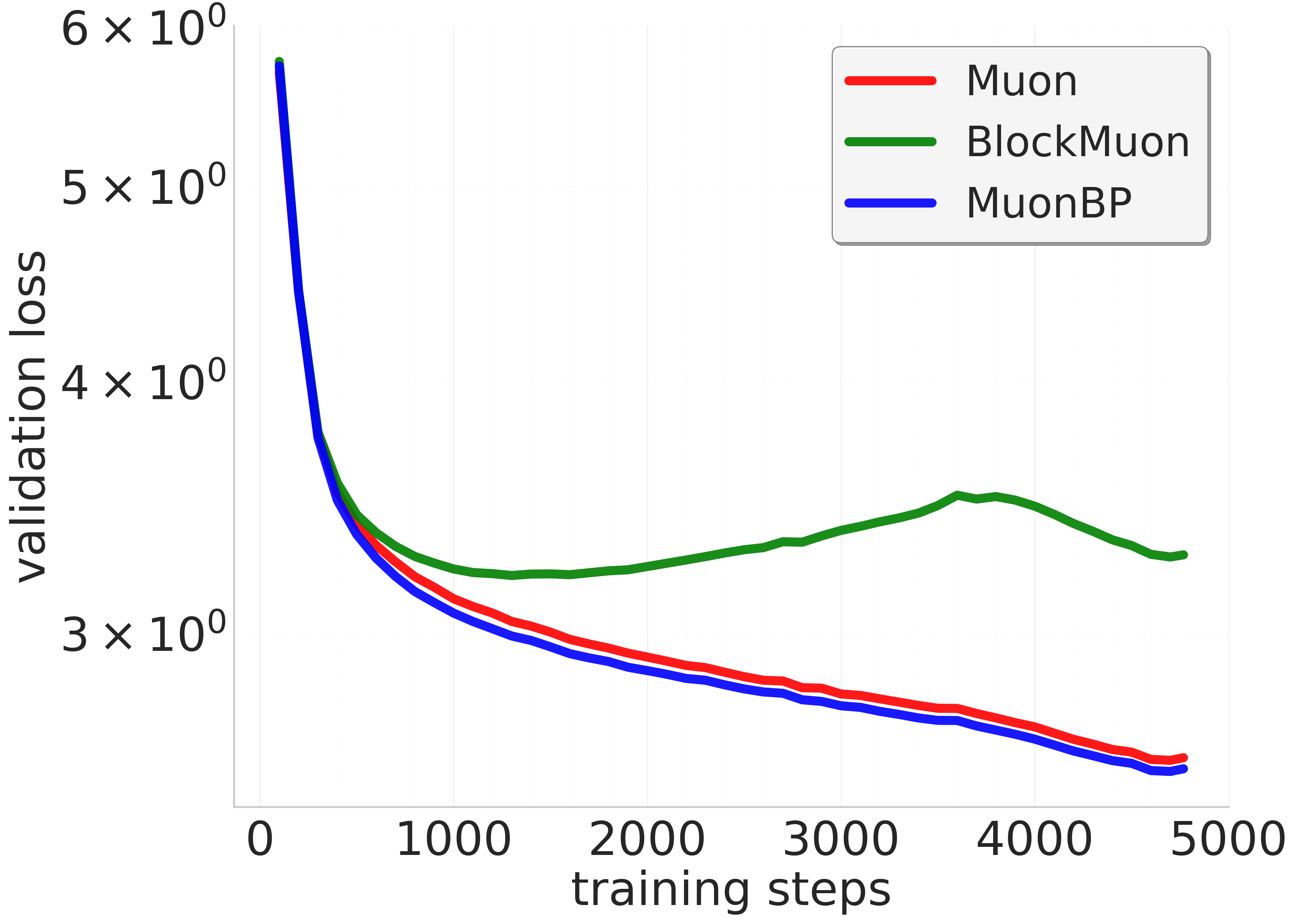}
        \label{fig:8b_val_step_app}
    }
    \subfigure[Validation loss vs. runtime]{
        \includegraphics[width=0.3\textwidth]{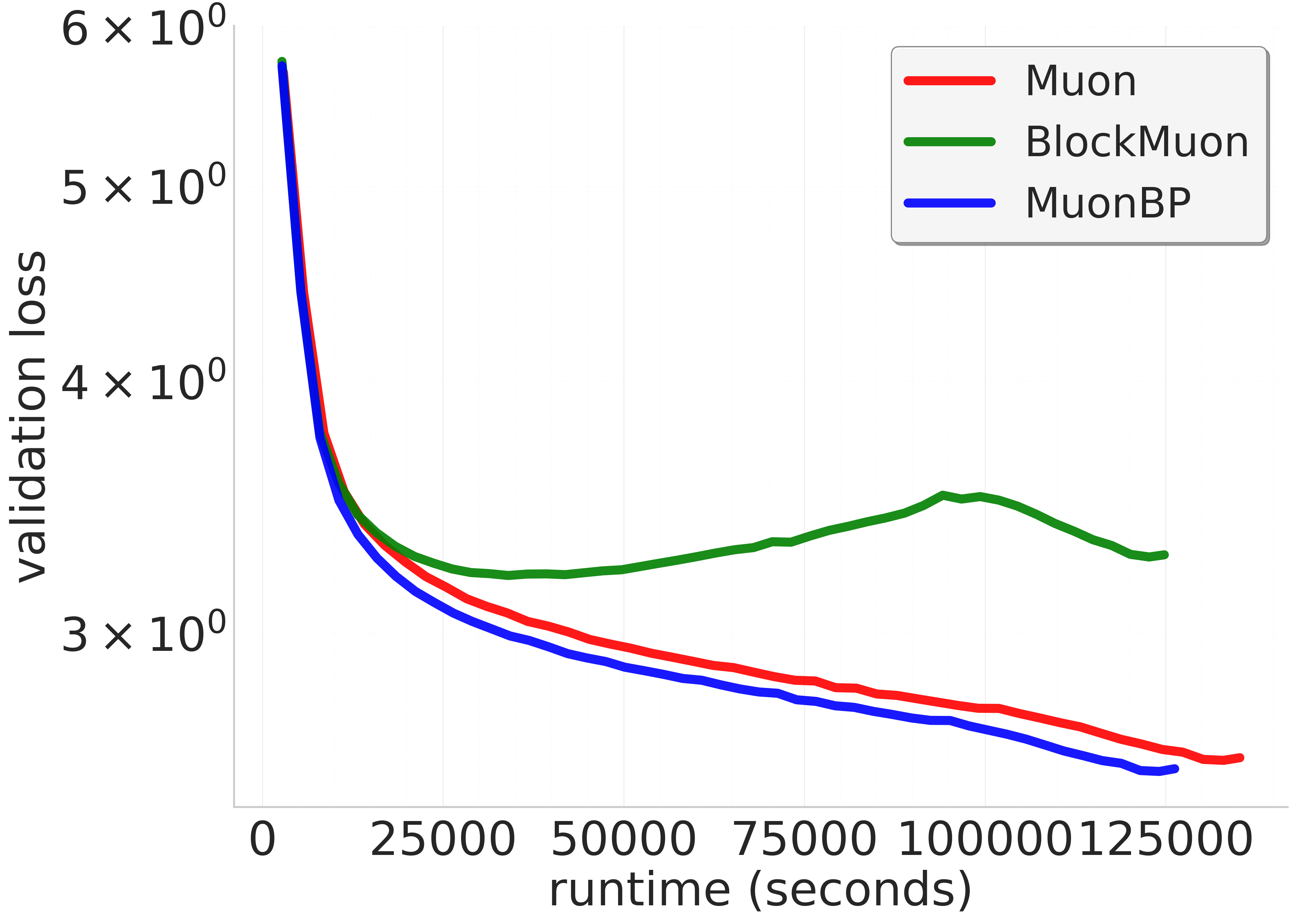}
        \label{fig:8b_runtime_app}
    }
    \caption{8B model (larger lr). Comparison of baseline, block, and periodic orthogonal block methods across training steps and wall-clock time.}
    \label{fig:8b_larger}
\end{figure}

\begin{figure}
    \centering
    \subfigure[Training loss vs. steps]{
         \includegraphics[width=0.3\textwidth]{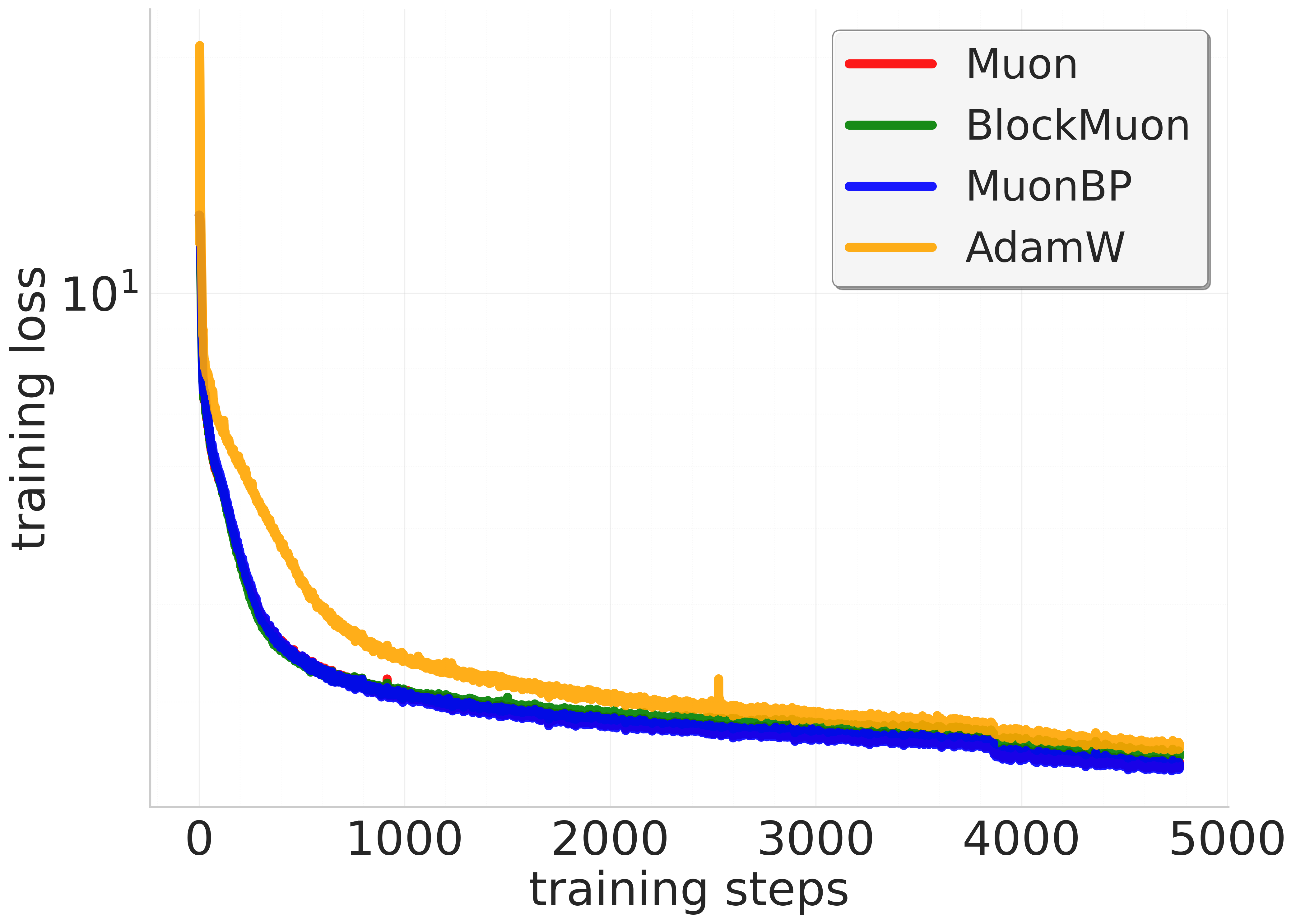}
         \label{fig:8bm_train_step}
    }
    \subfigure[Validation loss vs. steps]{
        \includegraphics[width=0.3\textwidth]{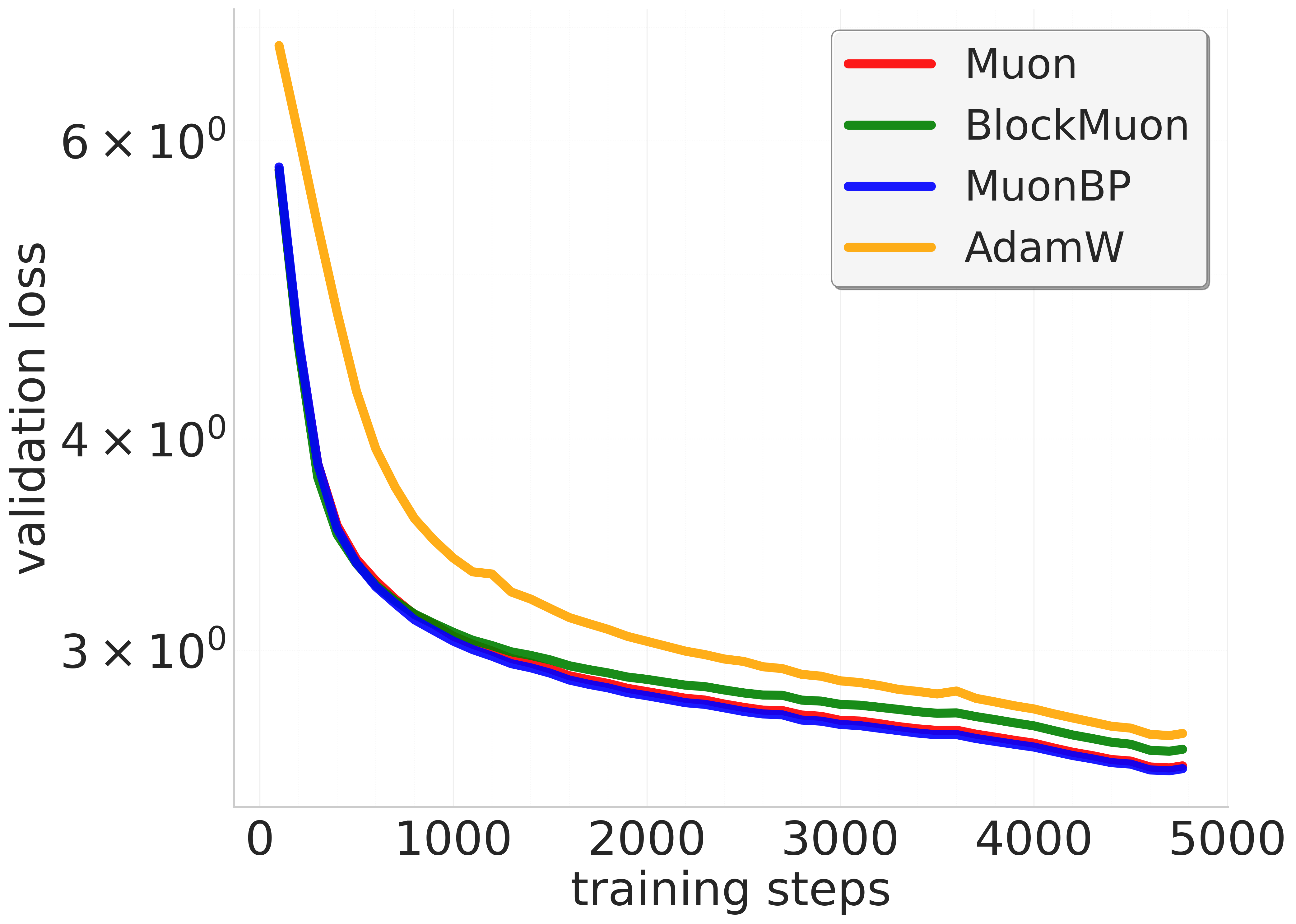}
        \label{fig:8bm_val_step_app}
    }
    \subfigure[Validation loss vs. runtime]{
        \includegraphics[width=0.3\textwidth]{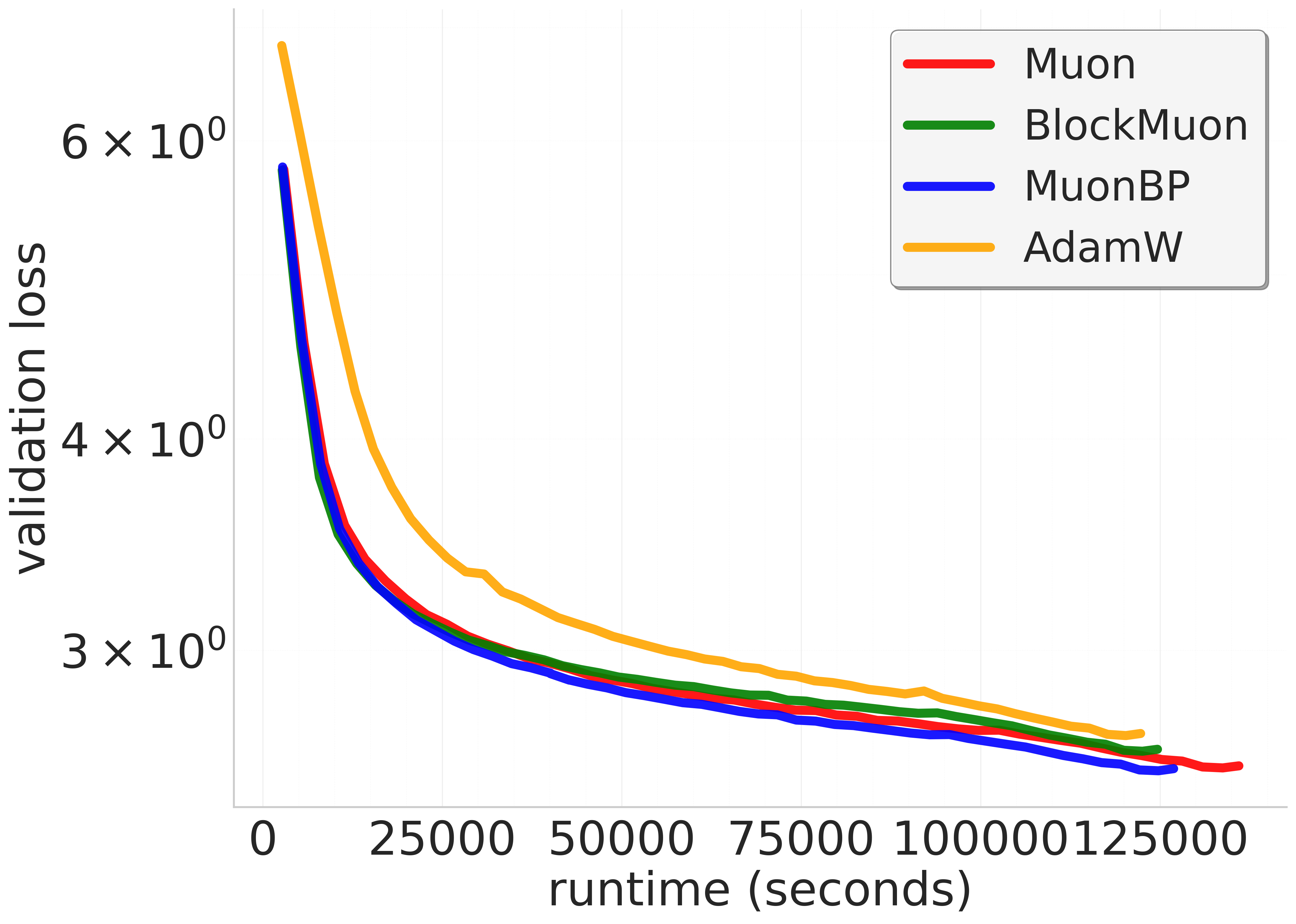}
        \label{fig:8bm_runtime_app}
    }
    \caption{8B model (smaller lr). Comparison of baseline, block, and periodic orthogonal block methods across training steps and wall-clock time.}
    \label{fig:8b_smaller}
\end{figure}

\begin{figure}
    \centering
    \subfigure[Training loss vs. steps]{
         \includegraphics[width=0.3\textwidth]{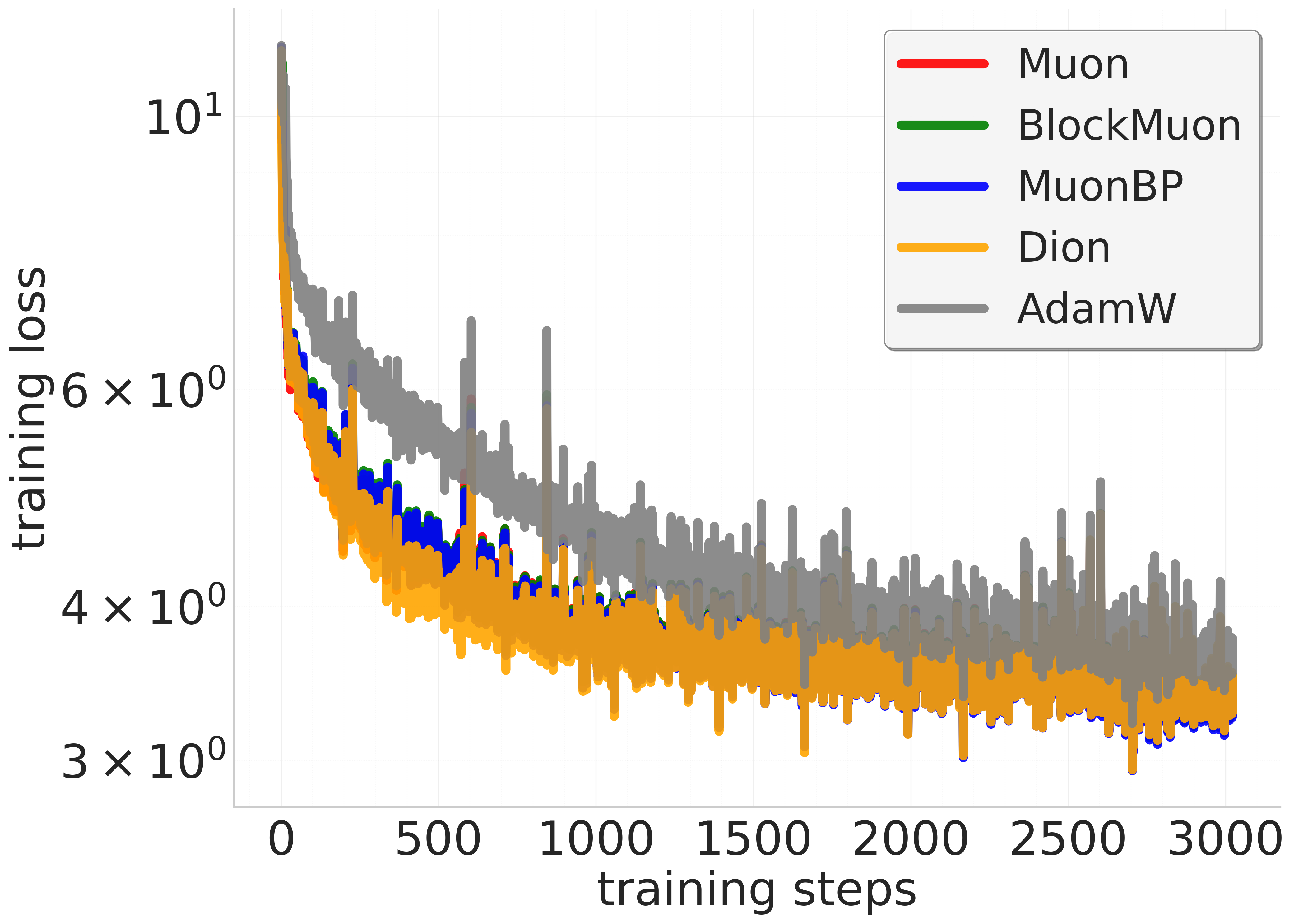}
         \label{fig:dion_train_step}
    }
    \subfigure[Validation loss vs. steps]{
        \includegraphics[width=0.3\textwidth]{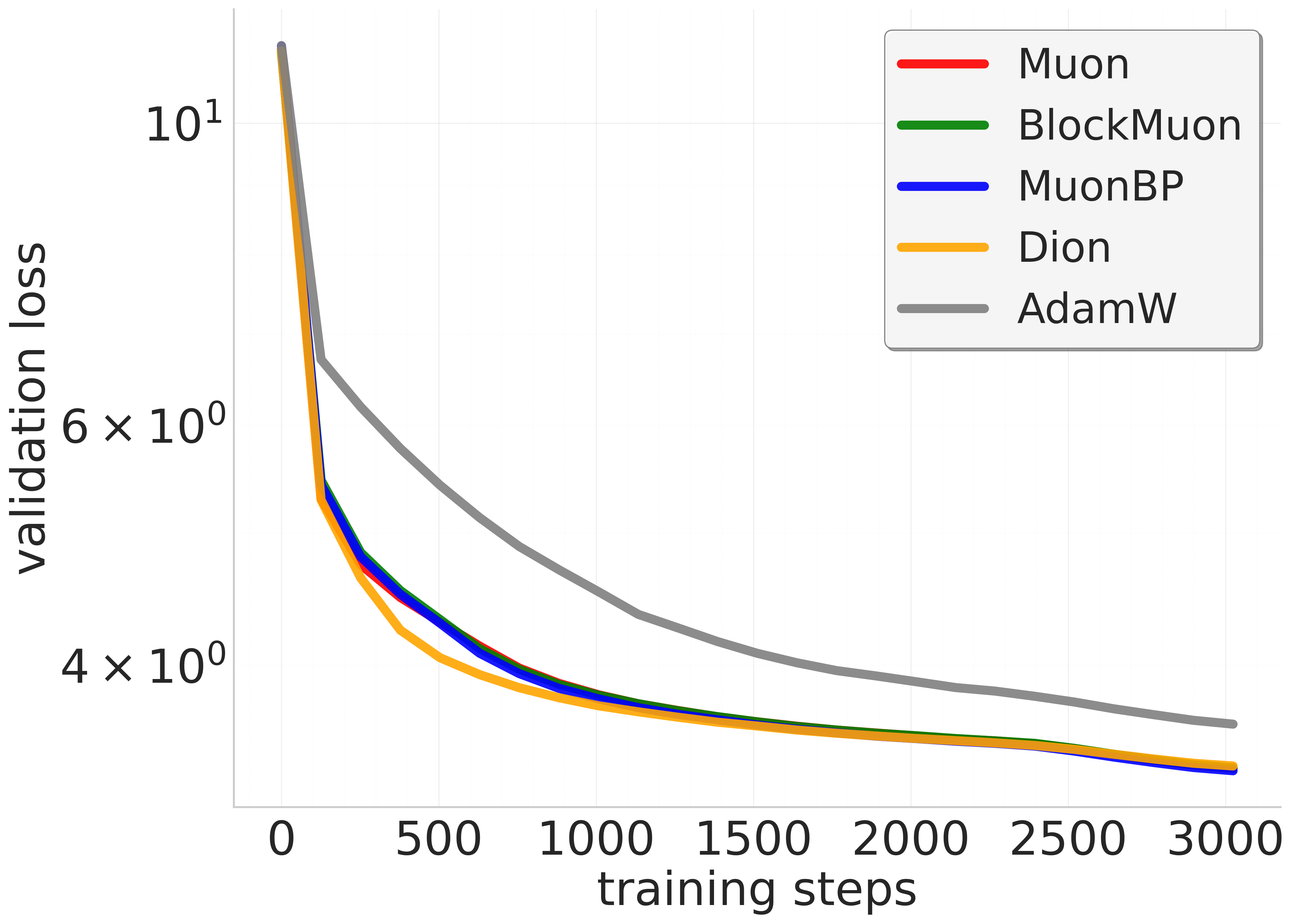}
        \label{fig:dion_val_step}
    }
    \subfigure[Validation loss vs. runtime]{
        \includegraphics[width=0.3\textwidth]{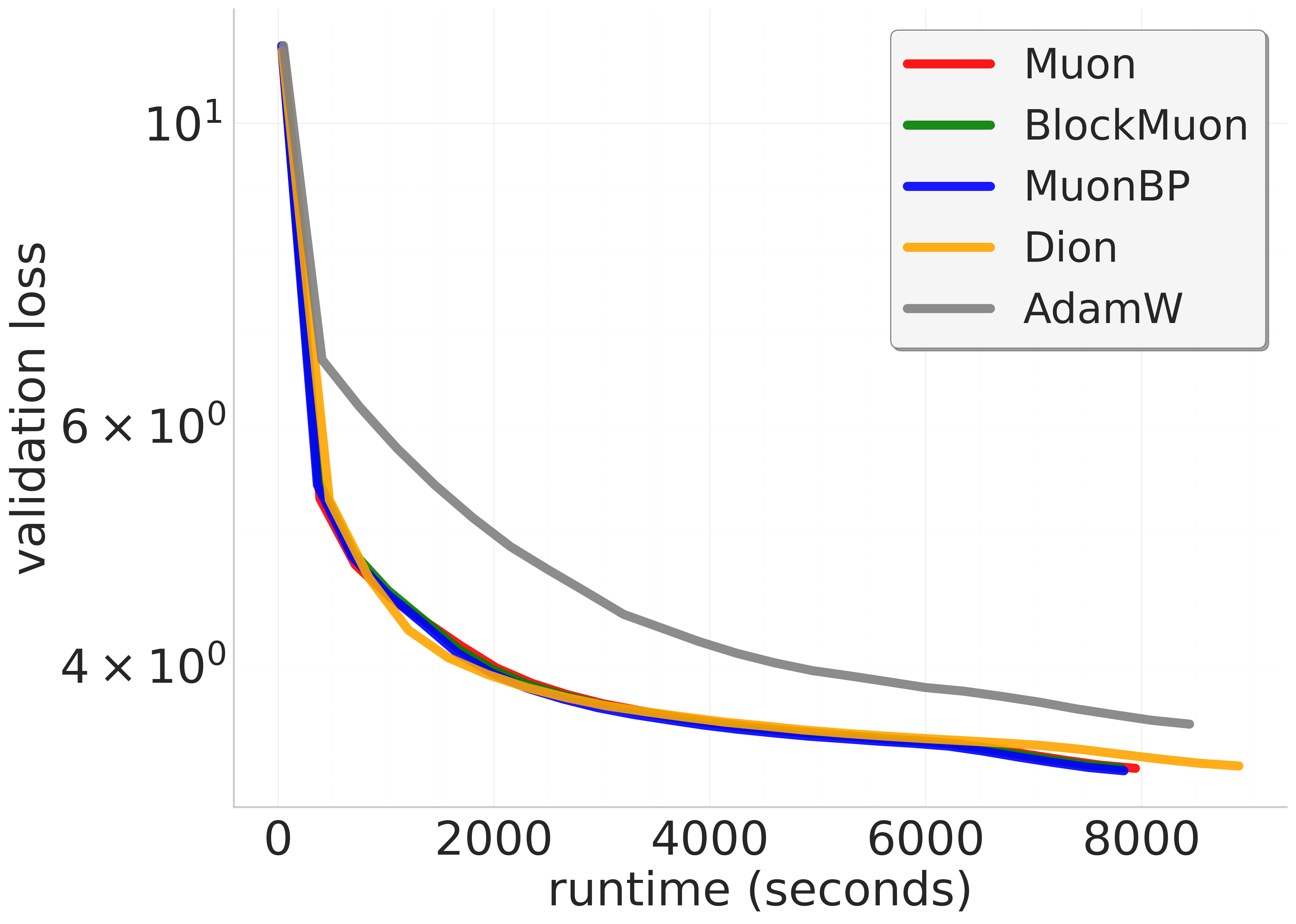}
        \label{fig:dion_runtime}
    }
    \caption{160M model training with 2-way FSDP2 and 4-way TP. Comparison of Baseline, BlockMuon, MuonBP, and Dion across training steps and wall-clock time.}
    \label{fig:160m_dion}
\end{figure}

\begin{table*}[t]
\centering
\caption{Validation/Training perplexity (\emph{lower is better}) and average parameter norm.
Best perplexities within each model size are in \textbf{bold}.}
\label{tab:resultsold}
\begin{tabular}{
    l l
    S[table-format=2.2]
    S[table-format=2.2]
    S[table-format=5.0]
}
\toprule
Model & Method & {Val PPL} & {Train PPL} & {Param Norm} \\
\midrule
\multirow{3}{*}{960M}
 & Muon & 15.34 & 13.43 & 2680 \\
 & BlockMuon & 20.28 & 18.09 & 5702 \\
 & MuonBP & \textbf{15.11} & \textbf{13.21} & 2648 \\
 & Adam & 22.51 & 20.16 & 1433 \\
\midrule
\midrule
\multirow{3}{*}{1.2B} & Muon & 14.12 & 12.83 & 4237 \\
 & BlockMuon & 16.29 & 14.86 & 7225 \\
 & MuonBP & 13.78 & 12.44 & 4195 \\
\midrule
\multirow{3}{*}{1.2B (3$\times$, large lr)} & Muon & 12.62 & 10.88 & 2681 \\
 & BlockMuon & 13.29 & 11.51 & 5521 \\
 & MuonBP & \textbf{12.45} & \textbf{10.71} & 2868 \\
\midrule
  \multirow{3}{*}{1.2B (3$\times$, small lr)}
      & Muon & 13.27 & 11.40 & 1602 \\
 & BlockMuon & 13.23 & 11.29 & 3242 \\
 & MuonBP & 13.30 & 11.39 & 1571 \\
 & Adam & 15.03 & 13.25 & 1448 \\
\midrule
\midrule
\multirow{3}{*}{8B}
 & Muon & 12.90 & 11.74 & 4369 \\
 & BlockMuon & 13.68 & 12.62 & 6680 \\
 & MuonBP & \textbf{12.77} & \textbf{11.59} & 4471 \\
 & Adam & 14.47 & 13.48 & 4104 \\
\midrule
\multirow{3}{*}{8B (large lr)}
 & Muon & 13.40 & 12.39 & 6841 \\
 & BlockMuon & 24.68 & 23.17 & 11496 \\
 & MuonBP & 12.97 & 11.93 & 7063 \\
\bottomrule
\end{tabular}
\end{table*}

\clearpage
\section{Comparison against Dion}
\label{sec:dion-comparison}

\paragraph{Memory usage.}
For a weight matrix $X\in\mathbb{R}^{m\times n}$, \textsc{Dion} maintains the usual momentum buffer $M\in\mathbb{R}^{m\times n}$ together with a right subspace basis $V\in\mathbb{R}^{n\times r}$ (of rank $r$), yielding persistent state memory usage $O(mn+nr)$. Some transient memory is allocated to ``skinny'' factor matrices (e.g., $U\in\mathbb{R}^{m\times r}$, $W\in\mathbb{R}^{n\times r}$) and small $r\times r$ solves, totaling $O(mr+nr+r^2)$ for transient memory usage. \textsc{MuonBP} keeps only the momentum state per shard (globally $O(mn)$), i.e., no persistent low-rank bases. However, on the periodic ``full'' step it temporarily materializes the full tensor to orthogonalize globally before scattering back, which introduces an $O(mn)$ transient buffer on that step; the intermediate ``block'' steps operate locally on shards and require only shard-sized temporaries.

\paragraph{Compute per iteration.}
Unsharded \textsc{Dion} performs an amortized low-rank power-iteration and orthonormalization whose leading cost scales as $\mathcal{O}(mnr + mr^2 + r^3 + mn)$ per update. In contrast, \textsc{MuonBP} uses Newton--Schulz (NS) orthogonalization (\Cref{alg:newton-schulz}). On blocking steps, MuonBP runs NS on a local $p\times q$ shard with cost proportional to that block (per NS iteration), while every $P$ steps it executes one global NS on the full matrix (cubic in the larger dimension). Averaged over a period, the per-iteration cost is $\frac{P-1}{P}$ times the block cost plus $\frac{1}{P}$ of a full NS. This comes out to 
\begin{align*}
    \mathcal{O}\!\left(\frac{P-1}{P}\big(p q^{2} + q^{3}\big) \;+\; \frac{1}{P}\big(m n^{2} + n^{3}\big)\right).
\end{align*}

\paragraph{Communication per iteration.}
With model parallelism, \textsc{Dion} communicates only low-rank objects each step, $O(mr)$ bits along the axis that forms or broadcasts $U$ and $O(nr)$ bits along the axis that forms $W$, plus $O(r^2)$ micro-collectives for orthonormalization; critically, nothing scales with $mn$. Under data parallel, Dion can synchronize its low-rank optimizer state and gradients, reducing DP I/O from $O(mn)$ to $O((m+n)r)$. For \textsc{MuonBP}, block steps incur no optimizer-specific model-parallel traffic beyond the baseline training collectives, while the periodic full step performs a gather/orthogonalize/scatter of the entire tensor (layout dependent), so the average optimizer communication volume $O(mn/P)$. We can see that $m/P$ or $n/P$ act as the counterpart of Dion's rank $r$ parameter in MuonBP. For both algorithms, we need to tune one parameter (the rank $r$ vs the communication period $P$).

\end{document}